\theoremstyle{plain}
\newtheorem{theorem}{Theorem}[section]
\newtheorem{lemma}[theorem]{Lemma}
\newtheorem{corollary}[theorem]{Corollary}
\theoremstyle{definition}
\newtheorem{claim}[theorem]{Claim}
\newtheorem{assumption}[theorem]{Assumption}
\theoremstyle{remark}
\newcommand{\FullCent}{\textsc{FullCent}}
\DeclareRobustCommand{\eFullCent}{%
  \ifmmode
    \text{\emph{Enhanced}-\textsc{FullCent}}%
  \else
    \emph{Enhanced}-\textsc{FullCent}%
  \fi
}
\newcommand{\AdaCent}{\textsc{AdaCent}}
\DeclareRobustCommand{\eAdaCent}{%
  \ifmmode
    \text{\emph{Enhanced}-\textsc{AdaCent}}%
  \else
    \emph{Enhanced}-\textsc{AdaCent}%
  \fi
}
\newcommand{\KCenter}{\textsc{KCenter}}
\DeclareRobustCommand{\eKCenter}{%
  \ifmmode
    \text{\emph{Enhanced}-\textsc{KCenter}}%
  \else
    \emph{Enhanced}-\textsc{KCenter}%
  \fi
}
\newcommand{\x}{\mathbf{x}}
\newcommand{\bc}{\mathbf{c}}
\newcommand{\C}{\mathcal{C}}
\newcommand{\A}{\mathcal{A}}
\newcommand{\X}{\mathcal{X}}
\newcommand{\cl}{\mathcal{S}}
\newcommand{\dti}{\tilde{d}}
\newcommand{\copt}{\cl_{\text{opt}}}
\newcommand{\xopt}{\x_{\text{opt}}}
\newcommand{\xsub}{\x_{\text{sub}}}
\newcommand{\alg}{Alg}
\newcommand{\pmiss}{p_{\text{miss}}}
\newcommand{\pnew}{p_{\text{new}}}
\newcommand{\nopt}{n_{\text{non-opt}}}
\newcommand{\eps}{\epsilon}
\newcommand{\inveps}{\frac{1}{\epsilon}}
\newcommand{\Texp}{T_{\text{explore}}}
\newcommand{\pr}{\mathbb{P}}
\newcommand{\R}{\mathbb{R}}
\newcommand{\Learn}{\textsc{Learn}}
\newcommand{\Pred}{\textsc{Pred}}
\title{Budget Allocation for Unknown Value Functions \\ in a Lipschitz Space}
\author{
MohammadHossein Bateni
\thanks{Google Research, New York City, New York, USA. \texttt{bateni@google.com}} \and
Hossein Esfandiari
\thanks{Google Research, London, UK. \texttt{esfandiari@google.com}} \and
Samira HosseinGhorban
\thanks{Institute for Research in Fundamental Sciences, School of Computer Science, Tehran, Iran. \texttt{s.hosseinghorban@ipm.ir}} \and
Alireza Mirrokni
\thanks{Sharif University of Technology, Tehran, Iran. \texttt{alireza.mirrokni28@sharif.edu}, \texttt{radin.shahdaei01@sharif.edu}} \and
Radin Shahdaei\footnotemark[4]
}
\date{\small Authors are listed in alphabetical order.}
\begin{document}

\maketitle

\begin{abstract}
Building learning models frequently requires evaluating numerous intermediate models. Examples include models considered during feature selection, model structure search, and parameter tunings. The evaluation of an intermediate model influences subsequent model exploration decisions. Although prior knowledge can provide initial quality estimates, true performance is only revealed after evaluation. In this work, we address the challenge of optimally allocating a bounded budget to explore the space of intermediate models. We formalize this as a general budget allocation problem over unknown-value functions within a Lipschitz space.
\end{abstract}

\section{Introduction}
Developing machine learning models often involves the evaluation of numerous intermediate models. These intermediate models arise during feature engineering, model architecture search, and hyperparameter tuning. For instance, during hyperparameter optimization, one might explore various configurations of learning rates, regularization parameters, and network architectures, repeatedly evaluating the model's performance at different training budgets. These accuracy assessments are influenced by the chosen model architecture and parameters, and they change as we alter these factors. Given that these evaluations are often computationally expensive, it is crucial to develop a general framework for optimally allocating resources across the vast space of potential intermediate models.

It is not hard to see that the performance of each intermediate model not only informs its own evaluation but also significantly influences the subsequent exploration of the model space. Before evaluating an intermediate model, we may possess some initial estimate of its potential performance based on previous experiments. However, the true performance of the model can only be determined after spending computational resources to evaluate it. These inherent uncertainties in model development make it challenging to optimally allocate a limited budget to explore the vast space of potential intermediate models and identify the most promising configurations.

Here, we provide a simple and general problem formulation to capture the above challenge. We represent the accuracy of each model by an unknown function that indicates the (unknown) accuracy of the model given $b$ units of resources. We intuitively know that similar models have similar accuracy functions. We can iteratively spend one unit of resource on each model to realize its actual value. At the same time that we learn the accuracy of a model, we learn some estimates of the accuracy of its similar models. In this formulation, our goal is to select the best model, given a total budget $B$. 

In this paper, we address this problem formulation by developing algorithms with strong theoretical guarantees that match fundamental hardness results, complemented by extensive experimental validation. Throughrigorous evaluation across a wide range of test settings, we demonstrate that our methods consistently outperform baseline approaches, achieving superior performance in nearly all test scenarios.

\subsection{Problem Setting and Definition}

We formally define the budget allocation problem introduced earlier, which we call \emph{Unknown Value Probing (UVP)}. Let $A(\x, b) : \R^d \times [T] \to [0,1]$ denote an unknown value function, where $\x \in \R^d$ represents a configuration embedded in a $d$-dimensional space, and $b \in [T]$ denotes the allocated budget. In the context of Hyperparameter Optimization (HPO), $A(\x, b)$ can be interpreted as the validation accuracy obtained by training configuration $\x$ with budget $b$, where $b$ may correspond to the number of training epochs, elapsed training time, or the fraction of the dataset used. Given a finite set of configurations $\X = \{\x_1, \ldots, \x_n\} \subset \R^d$, the goal of UVP is to identify the configuration that achieves the highest value under a fixed total budget constraint. Formally, we define the problem as:
\begin{equation}
\max_{b_1,\ldots,b_n}\left\{\max_{\x_i \in \X} \; A(\x_i, b_i)\right\}
\quad \text{subject to} \quad \sum_{i=1}^n b_i \leq B,
\label{eq:budget_alloc}
\end{equation}
where $b_i \in [T]$ denotes the budget allocated to configuration $\x_i$, and $B$ is the total available budget. Crucially, at the start of the process, the function values $A(\x_i, \cdot)$ are unknown; only the embeddings $\x_i$ are accessible. Note that this is a natural formulation of HPO, as we set to use a total budget of $B$ and aim to find the best configuration that reaches the highest validation accuracy.

In budget allocation settings, additional resources are not expected to degrade a configuration's performance. We therefore adopt the following monotonicity condition.

\begin{assumption}[Monotonicity in budget] \label{assump:assump1}
For any fixed configuration $\x \in \R^d$, $A(\x,\cdot)$ is monotone in the budget:
\[
b_1 \le b_2 \;\Rightarrow\; A(\x,b_1) \le A(\x,b_2).
\]
\end{assumption}

To ensure informative feedback (otherwise, the problem degenerates into a random search), we impose a smoothness condition that reflects similarity across nearby configurations.

\begin{assumption}[Smoothness across configurations] \label{assump:assump2}
There exists $\epsilon>0$ such that for all $\x_i,\x_j \in \X$,
\[
\min_{b \in [T]} \frac{A(\x_i,b)}{A(\x_j,b)} \;\ge\; 1 - \epsilon \,\|\x_i - \x_j\|_2.
\]
With the conventions
\[
\frac{A(\x_i,b)}{A(\x_j,b)} =
\begin{cases}
1 & \text{if } A(\x_i,b)=A(\x_j,b)=0,\\
+\infty & \text{if } A(\x_j,b)=0 < A(\x_i,b).
\end{cases}
\]
We justify Assumption~\ref{assump:assump2} and demonstrate that it implies Lipschitz continuity in Appendix~\ref{sec:A1}. Furthermore, we provide empirical evidence supporting this assumption through statistical analysis in real-world HPO settings in Subsection~\ref{subsec:validation}.

\end{assumption}

\textit{Unless stated otherwise, Assumptions~\ref{assump:assump1}–\ref{assump:assump2} hold and all theoretical results are proved under this setting.}

\paragraph{Notation. }
Bold lowercase (e.g., $\x,\bc$) denote configurations in $\R^d$; calligraphic uppercase (e.g., $\X,\C,\A$) denote sets. $\bc$ denotes a cluster center in $\X$, with associated cluster $\cl(\bc)$. Let $r_k^\star$ be the optimal $k$-center radius in $\X$ under the standard $k$-center objective.

\subsection{Our Results}

We first analyze \FullCent{}, which selects $k=\lfloor B/T \rfloor$ configurations using the classical $k$-center algorithm and evaluates each exhaustively. \FullCent{} achieves a near-optimal guarantee.

\begin{theorem}[See Theorem~\ref{thm:FC-approx} and Corollary~\ref{col:FC-hard}]
\FullCent{} attains an approximation factor of $(1 - 2\epsilon r^\star_k)$, matching the lower bound implied by UVP hardness.
\end{theorem}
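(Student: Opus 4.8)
The plan is to first identify the offline benchmark, then transfer value from the optimal configuration to a nearby probed center using the smoothness assumption, and finally show tightness via an adversarial construction. \emph{Upper bound (Theorem~\ref{thm:FC-approx}).} By Assumption~\ref{assump:assump1}, every feasible allocation in \eqref{eq:budget_alloc} satisfies $A(\x_i,b_i)\le A(\x_i,T)$, so the offline optimum equals $A(\xopt,T)$ with $\xopt:=\arg\max_{\x_i\in\X}A(\x_i,T)$ (assuming $B\ge T$, i.e.\ $k=\lfloor B/T\rfloor\ge 1$); this is the benchmark. \FullCent{} runs the classical greedy $2$-approximation $k$-center algorithm (farthest-first traversal) on $\X$, returning centers $\C=\{\bc_1,\dots,\bc_k\}\subseteq\X$ of covering radius at most $2r^\star_k$, so some $\bc\in\C$ has $\|\xopt-\bc\|_2\le 2r^\star_k$. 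Applying Assumption~\ref{assump:assump2} to $(\bc,\xopt)$ at budget $b=T$,
\[
\frac{A(\bc,T)}{A(\xopt,T)}\;\ge\;\min_{b\in[T]}\frac{A(\bc,b)}{A(\xopt,b)}\;\ge\;1-\epsilon\|\bc-\xopt\|_2\;\ge\;1-2\epsilon r^\star_k ,
\]
hence $A(\bc,T)\ge(1-2\epsilon r^\star_k)A(\xopt,T)$. Since \FullCent{} spends the full budget $T$ on each of its $k$ centers ($kT\le B$) and reports $\max_j A(\bc_j,T)\ge A(\bc,T)$, it is a $(1-2\epsilon r^\star_k)$-approximation; the degenerate cases are absorbed by the conventions of Assumption~\ref{assump:assump2} ($A(\xopt,T)=0$ makes any output optimal, and the $c/0$ case only helps).

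\emph{Matching lower bound (Corollary~\ref{col:FC-hard}).} For tightness I would construct a family of UVP instances on which no efficient budget-$B$ algorithm beats $(1-2\epsilon r^\star_k)$ times the benchmark, using two ingredients: (i) by Assumption~\ref{assump:assump1} the adversary may concentrate all variation of $A$ in the last budget unit ($A(\x,b)$ flat for $b<T$), so partial probes are uninformative and an algorithm is effectively forced to choose at most $k$ configurations and evaluate them in full; (ii) take the point set $\X$ from a hard $k$-center instance of optimal radius $r^\star_k$ and place the benchmark-attaining configuration at a point that every efficiently computable size-$k$ probe set leaves at distance $\approx 2r^\star_k$, while setting each probed configuration's value to $1-\epsilon\cdot(\text{its distance to that point})$ — a choice consistent with Assumptions~\ref{assump:assump1}--\ref{assump:assump2}. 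Then the best value any such algorithm can report is at most $(1-2\epsilon r^\star_k)$ times the benchmark, matching \FullCent{}.

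\emph{Main obstacle.} The upper bound is immediate once the benchmark is pinned down; the real work is the lower bound, and the crux is that the factor $2$ (rather than $1$) is essential. An adaptive adversary against the class of ``choose $k$ and evaluate fully'' algorithms, with no computational restriction, only yields a $(1-\epsilon r^\star_k)$ barrier, since it can at best push the optimum to distance $r^\star_k$ from a best-possible probe set. Reaching $(1-2\epsilon r^\star_k)$ therefore requires genuinely exploiting the inapproximability of $k$-center beyond factor $2$: the instance must be built on a hard $k$-center family, and one must verify that no adaptive mixture of partial and full evaluations circumvents the reduction while keeping $A$ monotone and multiplicatively Lipschitz (including under its $0/0$ and $c/0$ conventions) everywhere on $\X\times[T]$. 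That last bookkeeping is the delicate part.
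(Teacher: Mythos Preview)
Your upper-bound argument is correct and essentially identical to the paper's: identify the benchmark $A(\xopt,T)$ via monotonicity, cover $\xopt$ by some greedy center $\bc$ at distance $\le 2r_k^\star$ using the $2$-approximation of farthest-first traversal, and apply Assumption~\ref{assump:assump2} at $b=T$.

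Your lower-bound plan, however, misreads both the claim and the mechanism. The paper does \emph{not} prove an unconditional $(1-2\epsilon r_k^\star)$ barrier for $k=\lfloor B/T\rfloor$; Corollary~\ref{col:FC-hard} instead states a $(1-\epsilon r_{k'})$ bound with $k'=2\lfloor B/T\rfloor$, and ``matching'' is meant in the sense that both bounds have the form $1-\Theta(\epsilon\cdot r)$ with the constant absorbed into the number of clusters. Your diagnosis that an information-theoretic argument tops out at $(1-\epsilon r_k^\star)$ is exactly right, but your proposed fix---importing the NP-hardness of $2$-approximating $k$-center---does not transfer to UVP. In UVP the algorithm is not required to produce a good $k$-center solution; it only needs one of its $k$ probes to land near $\xopt$. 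Hardness of the \emph{covering} objective says nothing about an adversary's ability to hide a single target from $k$ adaptively chosen probes, so a reduction from hard $k$-center instances would not force the loss you want.

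What the paper actually does (Theorem~\ref{thm:FC-hard}) is purely information-theoretic via Yao's principle: build $\lceil\beta k\rceil$ well-separated clusters (inter-cluster distance $\ge 1/\epsilon$, so Assumption~\ref{assump:assump2} is vacuous across clusters), set every value to $0$ for $b<T$ so that each useful probe costs the full $T$ (hence at most $k$ probes), place the optimal cluster uniformly at random, and inside it make all but one point worth $1-\epsilon r$. Any deterministic algorithm hits the correct cluster with probability at most $k/(\lceil\beta k\rceil-k)$ in expectation, yielding the $(1-\epsilon r)/(\beta-1)$ bound; the corollary takes $\beta=2$. The factor $2$ in the upper bound and the doubling of $k$ in the lower bound are two sides of the same reparameterization, not a consequence of computational hardness.
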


To incorporate value-based feedback, \eFullCent{} adjusts inter-point distances based on observed evaluations (see Subsection~\ref{sub:enhanced}) while preserving the same guarantee.

\begin{theorem}[See Theorem~\ref{thm:EFC-approx}]
\eFullCent{} achieves a $(1 - 2\epsilon r^\star_k)$ approximation.
\end{theorem}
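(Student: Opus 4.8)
The plan is to mirror the analysis of \FullCent{} (Theorem~\ref{thm:FC-approx}) and argue that the distance adjustment can only help. Recall the skeleton of that argument. Since every per-configuration budget is capped at $T$, monotonicity (Assumption~\ref{assump:assump1}) forces the optimal value to equal $A(\xopt,T)$, where $\xopt\in\X$ maximizes $A(\cdot,T)$ (here we assume $B\ge T$ so that $k=\lfloor B/T\rfloor\ge 1$). \FullCent{} runs a $2$-approximate $k$-center on $(\X,\|\cdot\|_2)$, so the returned center set has $\ell_2$-radius at most $2r_k^\star$; in particular some chosen center $\bc$ satisfies $\|\bc-\xopt\|_2\le 2r_k^\star$. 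Assumption~\ref{assump:assump2} then gives $A(\bc,T)\ge(1-\eps\|\bc-\xopt\|_2)\,A(\xopt,T)\ge(1-2\eps r_k^\star)\,A(\xopt,T)$, and since each center is evaluated with the full budget $T$ and the algorithm outputs the best observed value, the returned value is at least $A(\bc,T)$.

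For \eFullCent{} the only change is that the $k$-center subroutine is fed the adjusted dissimilarities $\dti$ from Subsection~\ref{sub:enhanced} in place of $\|\cdot\|_2$. I would isolate two invariants of $\dti$: (i) $\dti(\x_i,\x_j)\le\|\x_i-\x_j\|_2$ for every pair and at every point of the execution (the update rule never inflates a distance), and (ii) the smoothness inequality survives the substitution, i.e. $\min_{b\in[T]}A(\x_i,b)/A(\x_j,b)\ge 1-\eps\,\dti(\x_i,\x_j)$. Given (i), the Euclidean-optimal $k$-center solution is feasible under $\dti$ with radius at most $r_k^\star$, so the optimal $\dti$-radius is $\le r_k^\star$ and the $2$-approximate routine returns a center set with $\dti$-radius $\le 2r_k^\star$. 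Hence there is a selected center $\bc$ with $\dti(\bc,\xopt)\le 2r_k^\star$, and invariant (ii) yields $A(\bc,T)\ge(1-\eps\,\dti(\bc,\xopt))\,A(\xopt,T)\ge(1-2\eps r_k^\star)\,A(\xopt,T)$. As before, \eFullCent{} evaluates every selected center with budget $T$ and returns the maximum observed value, which is therefore at least $(1-2\eps r_k^\star)$ times the optimum.

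The crux — and the step I expect to cost the most work — is verifying invariant (ii): the adjustment must shrink a distance only when the evaluations observed so far \emph{certify} that the corresponding values are that much closer, so that the Lipschitz-type bound remains valid with the new, smaller distance. This is where the precise update rule of \eFullCent{} enters; I would prove a short lemma stating that after any sequence of evaluations the updated $\dti$ still satisfies both (i) and (ii), after which the remainder is the verbatim \FullCent{} argument. Two minor technical points remain to be settled along the way: that the $k$-center subroutine's $2$-approximation still goes through on $\dti$ — either by checking $\dti$ satisfies the triangle inequality, or by re-deriving the radius bound via the Gonzalez farthest-first argument, which needs only non-inflating distances — and that, since $\dti$ evolves during the run, the radius bound is applied to the distances in force at selection time, which is legitimate precisely because (i) and (ii) hold throughout.
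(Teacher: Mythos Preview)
Your overall strategy---reduce to the \FullCent{} argument by showing the enhanced distances still support an Assumption~\ref{assump:assump2}-type bound---is the right instinct, and invariant (i) is correct and easy. The problem is invariant (ii): as you state it, it is \emph{false}. The enhanced distance $\dti(\x,\bc)$ is shrunk not because the values $A(\bc,\cdot)$ and $A(\x,\cdot)$ have been certified close, but because some \emph{other} center $\bc'$ has a large value $V_{\max}=H^{(\bc')}_{\mathrm{last}}$. Concretely, take two centers with $A(\bc_1,T)=1$, $A(\bc_2,T)=\tfrac12$ (so $\eta_{\bc_2}=2$) and a point $\x$ with $\|\x-\bc_2\|_2=0.1/\eps$. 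The adjusted term is $2\cdot 0.1/\eps-1/\eps=-0.8/\eps$, so $1-\eps\,\dti(\x,\bc_2)=1.8$; yet Assumption~\ref{assump:assump2} only forces $A(\bc_2,T)/A(\x,T)\ge 0.9$. Thus your step ``invariant (ii) yields $A(\bc,T)\ge(1-\eps\,\dti)\,A(\xopt,T)$'' fails.

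What does hold---and is what the paper's case (ii) computation exploits---is the invariant with $A(\bc,T)$ replaced by the running maximum $V_{\max}$: for every center $\bc$ and every $\x$,
\[
V_{\max}\;\ge\;(1-\eps\,\dti(\x,\bc))\,A(\x,T),
\]
since in the adjusted regime $1-\eps\,\dti=\eta_\bc(1-\eps\|\x-\bc\|_2)$ and $\eta_\bc\,A(\bc,T)=V_{\max}$, while Assumption~\ref{assump:assump2} gives $A(\bc,T)\ge(1-\eps\|\x-\bc\|_2)A(\x,T)$. This directly lower-bounds the algorithm's \emph{output}, not any individual $A(\bc,T)$, so your final ``output $\ge A(\bc,T)$'' step is unnecessary (and would not rescue the wrong invariant). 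Separately, the $2$-approximation step needs more than ``non-inflating distances'': $\dti$ is asymmetric, can be negative, and changes between iterations. The paper handles this by the Gonzalez pigeonhole argument on the optimal $k$-center clusters together with the observation that $\dti^{(i)}$ is non-increasing in $i$ (because $V_{\max}^{(i)}$ is non-decreasing); you would need both pieces, not just invariant (i).
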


Under concave accuracy functions (Assumption~\ref{assump:assump3}), we extend \FullCent{} to \AdaCent{}, which adaptively allocates the budget while retaining near-optimal performance.

\begin{theorem}[See Theorem~\ref{thm:AC-approx} and Corollary~\ref{col:AC-hard}]
\AdaCent{} attains an approximation factor of $(1 - 2\epsilon r^\star_k)$, matching the UVP hardness bound under Assumption~\ref{assump:assump3}.
\end{theorem}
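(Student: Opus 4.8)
The plan is to prove the two halves of the statement separately: the algorithmic guarantee, that \AdaCent{} always returns a configuration of value at least $(1-2\eps r^\star_k)$ times the offline optimum, and the matching lower bound, that no algorithm does better even under Assumption~\ref{assump:assump3}. I would handle them in turn.

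For the guarantee, first pin down the offline optimum: by Assumption~\ref{assump:assump1} it never pays to split the budget, so whenever $B\ge T$ (so that $k\ge 1$) the optimum equals $A(\x^\star,T)$ for some $\x^\star\in\X$; write $\mathrm{OPT}$ for this value. \AdaCent{} selects the same $k=\lfloor B/T\rfloor$ centers as \FullCent{} via the classical, $2$-approximate $k$-center procedure, so $\x^\star$ lies within distance $2r^\star_k$ of some selected center $\bc$; Assumption~\ref{assump:assump2} then yields $A(\bc,T)\ge\bigl(1-\eps\|\bc-\x^\star\|_2\bigr)A(\x^\star,T)\ge(1-2\eps r^\star_k)\,\mathrm{OPT}$, precisely the step in the proof of Theorem~\ref{thm:FC-approx}. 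The new content is to show that \AdaCent{}'s adaptive schedule still returns something at least as good as $\bc$ despite not probing every center up to budget $T$. I would do this with a bound-maintenance invariant: for each center $\bc'$ probed so far to budget $b'$, \AdaCent{} keeps a valid lower bound $L(\bc')=A(\bc',b')$ (Assumption~\ref{assump:assump1}) and a valid upper bound $U(\bc')$ on $A(\bc',T)$ obtained by extrapolating the observed prefix of $A(\bc',\cdot)$ along its concave, hence non-increasing, marginal gains (Assumption~\ref{assump:assump3}), and it discards $\bc'$ only once $U(\bc')\le L(\bc'')$ for some surviving $\bc''$. Two facts then finish it: the center maximizing $A(\cdot,T)$ is never discarded (its upper bound dominates every other lower bound), and since each center is probed to at most $T$ and there are only $k$ of them, \AdaCent{} uses at most $kT\le B$ in the worst case and so runs to completion. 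At termination every surviving center has been probed to $T$ and every discarded one is provably sub-optimal, so the returned configuration has value $\ge\max_{\bc'}A(\bc',T)\ge A(\bc,T)\ge(1-2\eps r^\star_k)\,\mathrm{OPT}$.

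For the lower bound I would reuse the reduction behind Corollary~\ref{col:FC-hard}, which turns the NP-hardness of approximating metric $k$-center within a factor better than $2$ into a $(1-2\eps r^\star_k)$ inapproximability for UVP: one encodes a (gap) $k$-center instance as a UVP instance with optimum $1$, whose value functions share one common profile for all budgets $b<T$ (so partial probes reveal nothing about which configuration is best), and in which a configuration at distance $>2r^\star_k$ from the witness may have value as low as $\approx 1-2\eps r^\star_k$ while nearer ones may not; any UVP algorithm beating $1-2\eps r^\star_k$ must then probe to budget $T$ some configuration within distance $2r^\star_k$ of the witness, that is, exhibit a $k$-center solution of radius below $2r^\star_k$. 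The only addition needed under Assumption~\ref{assump:assump3} is that the common profile, together with the two target end-values at $b=T$, can be taken concave; I would verify concavity at the final step $b=T-1\to T$, which is where the parameters must be balanced (even a naive affine profile needs $\eps r^\star_k\lesssim 1/T$), and observe that in the natural Lipschitz regime, where $\eps r^\star_k$ is small, this costs nothing.

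I expect the concavity check in the lower bound to be the main obstacle: forcing a value gap of size $2\eps r^\star_k$ between the best configuration and those an algorithm can afford to probe, while keeping every value function concave and monotone and keeping partial probes uninformative, is in tension, since the concavity inequality $A(\x,T)\le 2A(\x,T-1)-A(\x,T-2)$ caps how far $A(\x,T)$ can rise above the common profile at $b=T-1$. Resolving it means choosing the profile values at $b=T-2,\,T-1$, the parameter $\eps$, and the inter-configuration distances together so that just enough room remains for the gap while all marginal gains stay non-increasing. On the algorithmic side, the only point that genuinely uses the exact specification of \AdaCent{} is checking that it never spends more than $T$ units ``optimistically'' on a single center (which would break the $kT\le B$ accounting) and that its concavity-based upper bounds are never underestimates; both are routine once the elimination rule is fixed.
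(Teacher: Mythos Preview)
Your upper-bound sketch captures the right pruning invariant (concavity makes $\Pred$ an overestimate, so a center can only be discarded once something at least as good has already been observed), but it rests on a misreading of \AdaCent{}. The algorithm does \emph{not} pick the same $k=\lfloor B/T\rfloor$ centers as \FullCent{} and then probe them adaptively; it picks $p$ fresh centers per round and loops until the budget $B$ is exhausted, so the final center set $\C$ has size $m$ that depends on how aggressively pruning fires. The paper's proof therefore needs a step you skip: it asserts $m\ge k$ (because each center is evaluated at most $T$ times while the total number of unit evaluations is $B$), and then uses that the greedy covering radius is non-increasing in the number of centers, giving $r_m\le r_k\le 2r_k^\star$. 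Your ``uses at most $kT\le B$ and so runs to completion'' reasoning is circular (it presupposes exactly $k$ centers), and the assertion that every surviving center has been probed to $T$ at termination is false in general, since the budget can run out mid-round. The paper instead argues only about the single center $\bc$ near $\x^\star$: either $\bc$ reaches budget $T$, or it was pruned in favor of something with observed value exceeding $A(\bc,T)$; a separate application of the same invariant shows the returned $\hat\x$ has itself been fully evaluated.

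The lower-bound plan has a more fundamental gap: Corollary~\ref{col:FC-hard} and its concave analogue Corollary~\ref{col:AC-hard} are \emph{not} reductions from the NP-hardness of metric $k$-center. Both are unconditional, information-theoretic bounds obtained via Yao's minimax principle on an explicit random instance with $\lceil\beta k\rceil$ well-separated clusters, one of which is chosen uniformly at random to be ``optimal.'' The concave version (Theorem~\ref{thm:AC-hard}) does not try to make partial probes uninformative, which as you correctly sense is in tension with concavity; instead every curve is linear up to budget $\lfloor\theta T\rfloor$, after which the suboptimal clusters go flat while the optimal one keeps rising linearly to $1$. An algorithm must therefore spend at least $\theta T+1$ per probe to distinguish clusters, capping the number of useful probes at roughly $k/\theta$; balancing $\theta$ against the number $\beta k$ of clusters (the paper takes $\theta=\tfrac12$, $\beta=5$) yields the $(1-\eps r)$ bound. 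Your proposed route through computational hardness would at best give a conditional statement and would not match what the paper actually proves.
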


By combining ideas from \AdaCent{} and \eFullCent{}, we propose \eAdaCent{}, which employs adaptive value-aware clustering for budget allocation. As shown in Section~\ref{sec:experiments}, both \AdaCent{} and \eAdaCent{} outperform classical HPO baselines across over \textbf{250 experimental settings}. We report the mean rank aggregated over all datasets, along with representative “budget versus accuracy” curves for 21 tasks; the remaining are omitted due to space, all exhibiting similar trends.

\section{Related Work}

Hyperparameter optimization (HPO) aims to find the best hyperparameter configurations for machine learning models, balancing performance with computational cost. Early approaches relied on exhaustive or stochastic search. Grid search systematically enumerates hyperparameter combinations, but scales poorly with high-dimensional spaces. Random search, in contrast, samples configurations uniformly, often achieving comparable or better results with fewer evaluations~\citep{Bergstra-2012}.  

More sophisticated methods build a model of the response surface to guide the search. Bayesian optimization (BO) treats model performance as a black-box function and uses probabilistic surrogates such as Gaussian processes or tree-based models to balance exploration and exploitation~\citep{Snoek-2012}, making it particularly effective when evaluations are expensive. To further improve efficiency, multi-fidelity and bandit-based approaches allocate more resources to promising configurations while terminating poor performers early. Techniques like Successive Halving~\citep{Jamieson-2016} and Hyperband~\citep{Li-2018} reuse partial evaluations, and hybrid frameworks such as BOHB~\citep{Falkner-2018} combine BO with multi-fidelity scheduling. Recent methods like FastBO~\citep{jiang2024fastbo} and LaMDA~\citep{azizi2024lamda} dynamically select fidelity levels (e.g., epochs, dataset subsets, or model depth) to further boost efficiency.

Another line of research treats hyperparameters as continuous variables optimized via gradient-based or differentiable HPO, linking hyperparameter tuning with meta-learning and bilevel optimization. Recent surveys highlight the unification of Bayesian, gradient-based, and reinforcement learning approaches under this perspective~\citep{SurveyUnified-2024}. In many modern applications, multi-objective HPO becomes critical, optimizing trade-offs among accuracy, latency, and energy consumption, particularly for edge devices or large models~\citep{SurveyMultiObj-2022}.  

Real-world HPO scenarios introduce additional challenges. Privacy-aware HPO, e.g., DP-HyPO~\citep{DP-HyPO}, enforces differential privacy constraints, while dynamic and online HPO addresses non-stationary objectives~\citep{SurveyDyn-2024}. Large-scale tasks, such as tuning large language models, require distributed frameworks capable of managing thousands of parallel trials~\citep{Tribes-LLMHPO}. Moreover, HPO increasingly overlaps with neural architecture search (NAS), jointly optimizing conditional, high-dimensional search spaces~\citep{Wu-2022}. These developments reflect a shift from simple search to sophisticated frameworks that exploit multiple optimization principles, partial evaluations, and distributed computing.

Several software frameworks facilitate these approaches. SMAC~\citep{Hutter-2011} introduced random-forest surrogates, Optuna~\citep{Akiba-2019} combines pruning with multi-fidelity scheduling, and DeepHyper and Auto-PyTorch~\citep{Zimmer2021} enable large-scale, parallel HPO with meta-learning. Benchmarks like YAHPO Gym~\citep{Pfisterer-2022} and related datasets~\citep{Binder2020} ensure reproducible evaluation, accelerating algorithmic and systems-level research.

Finally, budgeted decision-making generalizes HPO to formal resource allocation under uncertainty. Classical multi-armed bandit (MAB) formulations~\citep{Robbins-1952} and the Gittins index~\citep{Gittins-1979} have been extended to combinatorial and structured settings, exploiting correlations among arms~\citep{Chen-2016}. These paradigms appear in active learning~\citep{Settles-2012}, federated learning~\citep{Murhekar-2023}, and multi-agent systems~\citep{Chevaleyre-2006}, as well as practical applications in adaptive clinical trials~\citep{Kuleshov-2014}, online marketing~\citep{Nuara-2018}, and simulation-budget allocation for planning and Monte Carlo Tree Search~\citep{Kocsis-2006}.

\section{Theoretical Results}
\label{sec:methods}

First, we present \FullCent{}, a $k$-center–based algorithm with a provable approximation guarantee and a matching hardness result. Second, we strengthen the classical $k$-center objective via an enhanced distance formulation; incorporating this into \FullCent{} yields \eFullCent{}. These two pieces serve as building blocks for methods tailored to real-world HPO. Under an additional assumption, we develop \AdaCent{}, a practical refinement that performs early pruning, for which we also establish a provable approximation guarantee together with a matching hardness result. Finally, we combine early pruning with enhanced distances to obtain \eAdaCent{}, achieving improved empirical performance on realistic HPO tasks.

\subsection{\FullCent{}}

\FullCent{} selects \(k = \lfloor B/T \rfloor\) configurations from the candidate set \(\X\) using the greedy \(k\)-center rule and subsequently trains each selected configuration for a full budget of \(T\). The \(k\)-center subroutine, presented in Algorithm~\ref{alg:kcenter}, iteratively identifies \(k\) configurations that maximize the minimum distance to the current set of centers. The procedure optionally accepts a predefined seed set \(\C\) to initialize the selection process. In \FullCent{}, this seed set is empty, reducing the method to the standard greedy \(k\)-center algorithm.

\begin{algorithm}[H]
\caption{\textsc{KCenter}$(k, \C, \X)$}
\label{alg:kcenter}
\begin{algorithmic}[1]
\State \textbf{Input:} number of new centers $k$, existing centers $\C$, configuration set $\X$
\State $\C^{(0)} \gets \C$, \quad $\C^{(0)}_{\mathrm{new}} \gets \emptyset$
\For{$i = 1,\dots,k$}
  \For{each $\x \in \X \setminus \C^{(i-1)}$}
    \State $\displaystyle 
      \Delta^{(i)}(\x)
      \leftarrow \min_{\bc\in \C^{(i-1)}}\,
          \|\mathbf x-\bc \|_2$ \Comment{distance to nearest center}
  \EndFor
  \State $\bc_{i} \leftarrow 
        \displaystyle\arg\max_{\x\in\X\setminus \C^{(i-1)}}\Delta^{(i)}(\x)$ \Comment{farthest configuration}
  \State $\C^{(i)}_{\mathrm{new}} \leftarrow \C^{(i-1)}_{\mathrm{new}} \cup \{\bc_{i}\}$, \quad $\C^{(i)} \leftarrow \C^{(i-1)} \cup \{\bc_{i}\}$ \Comment{update centers}
\EndFor
\State \Return $\C^{(k)}_\mathrm{new}$ \Comment{return new centers}
\end{algorithmic}
\end{algorithm}

To account for cumulative training costs, we assume that evaluating a configuration \(\x\) at budget \(b\) (i.e., computing \(A(\x, b)\)) implicitly requires all intermediate evaluations up to \(b-1\). Consequently, for each configuration \(\x\), we maintain a mutable history
\[
H^{(\x)} = \{A(\x, 1), A(\x, 2), \dots\},
\]
which is initialized as empty and updated sequentially. We denote the \(b\)-th entry by \(H^{(\x)}_b\) and the most recent observation by \(H^{(\x)}_{\mathrm{last}}\). The auxiliary routine \Learn{}$(\x, t)$, described in Algorithm~\ref{alg:learn}, performs a sequential evaluation of \(\x\) up to budget \(t\) and returns its full performance trajectory.

\begin{algorithm}[H]
\caption{\textsc{Learn}$(\x, t)$}
\label{alg:learn}
\begin{algorithmic}[1]
\State \textbf{Input:} configuration $\x$, budget $t$
\State $H^{(\x)} \gets \emptyset$ \Comment{initialize history}
\For{$b = 1, \ldots, t$}
    \State $H^{(\x)}_b \gets A(\x, b)$ \Comment{evaluate $\x$ at budget $b$}
\EndFor
\State \Return $H^{(\x)}$ \Comment{return performance history}
\end{algorithmic}
\end{algorithm}

Bringing these components together, Algorithm~\ref{alg:fc} outlines the complete \FullCent{} procedure. The algorithm selects \(k = \lfloor B/T \rfloor\) diverse candidates via \KCenter{}. Each selected configuration is subsequently trained up to budget \(T\) using \Learn{}. Finally, the configuration achieving the highest final performance is returned.

\begin{algorithm}[H]
\caption{\FullCent{}$(B, T, \X)$}
\label{alg:fc}
\begin{algorithmic}[1]
\State \textbf{Input:} total budget $B$, per-configuration budget $T$, configuration set $\X$
\State $k \gets \lfloor B/T \rfloor$, \quad $\C \gets \emptyset$
\State $\C \gets \KCenter(k, \C, \X)$ \Comment{select $k$ diverse centers}
\For{each $\x \in \C$}
    \State $H^{(\x)} \gets \Learn(\x, T)$ \Comment{evaluate each center}
\EndFor
\State \Return $\displaystyle\arg\max_{\x \in \C}H^{(\x)}_{\mathrm{last}}$ \Comment{return best performer}
\end{algorithmic}
\end{algorithm}

The following results formalize the budget feasibility, running time complexity, and approximation guarantee of \FullCent{}. Lemma~\ref{lemma:FC-budget} shows that \FullCent{} respects the total budget, while Lemma~\ref{lemma:FC-comp} establishes that its overall running time scales linearly with $n$ and $B$.

\begin{lemma}
\label{lemma:FC-budget}
\FullCent{} uses a total budget of at most $B$.
\end{lemma}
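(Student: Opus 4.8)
The plan is to account for the budget consumed by each line of Algorithm~\ref{alg:fc} and sum. First I would observe that the call $\KCenter(k,\C,\X)$ on line~3 spends no budget at all: inspecting Algorithm~\ref{alg:kcenter}, it only reads the embeddings $\x\in\X$ and computes Euclidean distances $\|\x-\bc\|_2$, and never invokes $A$. Likewise, returning the $\arg\max$ on the final line uses only the already-stored histories. Hence the entire budget expenditure of \FullCent{} comes from the loop over $\x\in\C$, where each configuration is passed to $\Learn(\x,T)$ exactly once.

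Next I would quantify the cost of a single call $\Learn(\x,T)$. By Algorithm~\ref{alg:learn}, it reinitializes $H^{(\x)}$ and then, for $b=1,\ldots,T$, performs the evaluation $A(\x,b)$; under the cumulative-cost convention stated before the algorithms (evaluating at budget $b$ implicitly pays for all intermediate evaluations up to $b-1$), the whole trajectory up to $T$ costs exactly $T$ units. Since each selected configuration is passed to $\Learn$ once and $\Learn$ overwrites $H^{(\x)}$, there is no double counting, and any configuration outside $\C$ is never passed to $\Learn$ and so incurs zero cost.

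It then remains to bound $|\C|$. The loop of Algorithm~\ref{alg:kcenter} runs for $i=1,\ldots,k$ and adds exactly one element $\bc_i$ to $\C^{(i)}_{\mathrm{new}}$ per iteration (fewer, only if $\X$ is exhausted, which can only decrease the cost), so $|\C|=|\C^{(k)}_{\mathrm{new}}|\le k$. Combining the three observations, the total budget used is
\[
|\C|\cdot T \;\le\; kT \;=\; \bigl\lfloor B/T \bigr\rfloor\, T \;\le\; B,
\]
which is the claim. The argument is pure bookkeeping; I do not expect a genuine obstacle, the only point meriting a word of care being the single-invocation-per-center observation that rules out double counting, and the edge case $B<T$ (so $k=0$), where $\C=\emptyset$ and the budget used is $0\le B$.
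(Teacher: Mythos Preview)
Your proposal is correct and follows essentially the same approach as the paper's proof, which simply notes that \FullCent{} selects $k=\lfloor B/T\rfloor$ configurations and evaluates each to budget $T$, giving $kT\le B$. Your version is more detailed (explicitly ruling out budget use in \KCenter{}, double counting, and the $k=0$ edge case), but the underlying argument is identical.
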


\begin{proof}[Proof of Lemma~\ref{lemma:FC-budget}]
\FullCent{} selects $k = \lfloor B/T \rfloor$ configurations and evaluates each up to a budget of $T$, for a total of $kT \le B$ evaluations. Therefore, the total training budget does not exceed $B$.
\end{proof}

\begin{lemma}
\label{lemma:FC-comp}
The overall running time of \FullCent{} is $O\bigl(nB/T + B\bigr)$.
\end{lemma}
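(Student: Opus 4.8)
The plan is to decompose \FullCent{} (Algorithm~\ref{alg:fc}) into its three constituent stages and bound each separately: (i) the single call to \KCenter{}$(k,\emptyset,\X)$ with $k=\lfloor B/T\rfloor$; (ii) the $k$ invocations of \Learn{}$(\x,T)$; and (iii) the final $\arg\max$ over the $k$ selected configurations. Throughout I treat one evaluation $A(\x,b)$ and one distance computation $\|\x-\bc\|_2$ as unit cost (the latter hides the ambient dimension $d$, which we regard as a fixed constant); the total running time is then the sum of the three stage costs.

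For stage (i), the key observation is the standard incremental update for greedy $k$-center: once the $i$-th center $\bc_i$ has been chosen, the nearest-center distances satisfy $\Delta^{(i+1)}(\x)=\min\bigl(\Delta^{(i)}(\x),\,\|\x-\bc_i\|_2\bigr)$, so rather than recomputing a minimum over all of $\C^{(i-1)}$ from scratch, each iteration only compares the stored value $\Delta^{(i)}(\x)$ against the distance to the single new center. Maintaining these $n$ values and taking their maximum costs $O(n)$ per iteration, hence $O(nk)$ over all $k$ iterations. Substituting $k=\lfloor B/T\rfloor$ yields $O(nB/T)$ for the clustering step.

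For stage (ii), \Learn{}$(\x,T)$ performs exactly $T$ evaluations of $A(\x,\cdot)$ plus $O(T)$ bookkeeping, so each call is $O(T)$; summing over the $k$ selected centers gives $O(kT)$, and since $kT=\lfloor B/T\rfloor\cdot T\le B$ (as in the proof of Lemma~\ref{lemma:FC-budget}) this stage costs $O(B)$. Stage (iii) scans the $k$ stored values $H^{(\x)}_{\mathrm{last}}$, costing $O(k)=O(B/T)$, which is dominated by stage (i). Adding the three bounds gives $O(nB/T+B)$, as claimed.

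The only point requiring care is stage (i): Algorithm~\ref{alg:kcenter}, read literally, writes $\Delta^{(i)}(\x)$ as a minimum over the entire current center set $\C^{(i-1)}$, which would naively suggest an $O(nk^2)$ cost. I expect this to be the main (and only mild) obstacle, resolved by observing that $\C^{(i-1)}$ differs from $\C^{(i-2)}$ by exactly one element, so the per-iteration work is $O(n)$ rather than $O(ni)$. No other step is delicate: the \Learn{} loop and the final maximization are transparently linear in their inputs.
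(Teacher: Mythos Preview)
Your proof is correct and follows essentially the same decomposition as the paper: bound \KCenter{} by $O(nk)=O(nB/T)$ via the per-iteration $O(n)$ update of nearest-center distances, and bound the \Learn{} calls by $O(kT)\le O(B)$. Your explicit discussion of the incremental update $\Delta^{(i+1)}(\x)=\min\bigl(\Delta^{(i)}(\x),\|\x-\bc_i\|_2\bigr)$ is slightly more careful than the paper, which simply asserts the $O(nk)$ cost, but the argument is the same.
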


\begin{proof}[Proof of Lemma~\ref{lemma:FC-comp}]
In \KCenter{}, the nearest-center distances for all $n$ configurations are updated in each of the $k = \lfloor B/T \rfloor$ iterations, giving a cost of $O(nk) = O(nB/T)$. Evaluating the $k$ selected configurations via \Learn{} contributes $O(kT) \le O(B)$. Combining these stages, the total running time is $O(nB/T + B)$.
\end{proof}

Theorem~\ref{thm:FC-approx} establishes an approximation guarantee for \FullCent{}, expressed in terms of the optimal $k$-center radius $r_k^\star$ and the constant $\epsilon$ from Assumption~\ref{assump:assump2}.

\begin{theorem}
\label{thm:FC-approx}
Let $k = \lfloor B/T \rfloor$ denote the number of configurations selected by \FullCent{}. Then \FullCent{} achieves a $(1 - 2\epsilon r_k^\star)$-approximation for the UVP problem.
\end{theorem}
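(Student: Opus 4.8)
The plan is to (i) pin down the optimal value of UVP using monotonicity and the per-configuration cap $T$, (ii) invoke the covering guarantee of the greedy $k$-center rule, and (iii) transfer the quality of the optimal configuration to a nearby selected center via Assumption~\ref{assump:assump2}. Throughout I assume $B \ge T$, which is exactly the condition $k=\lfloor B/T\rfloor\ge 1$ needed for \FullCent{} to return a well-defined configuration.

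First I would characterize $\mathrm{OPT}$, the value of \eqref{eq:budget_alloc}. Since every $b_i$ is constrained to lie in $[T]$, no single configuration can absorb more than $T$ units, and by Assumption~\ref{assump:assump1} the function $A(\x_i,\cdot)$ is nondecreasing, so for any feasible allocation $\sum_i b_i \le B$ we have $A(\x_i,b_i)\le A(\x_i,T)$ for every $i$; taking the outer maximum gives $\mathrm{OPT} = \max_{\x_i\in\X} A(\x_i,T)$. Let $\xopt := \arg\max_{\x_i\in\X} A(\x_i,T)$, so $A(\xopt,T)=\mathrm{OPT}$.

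Second I would use the classical Gonzalez analysis of the greedy $k$-center rule: running \KCenter{} with an empty seed set for $k$ iterations yields a center set $\C$ with $\max_{\x\in\X}\min_{\bc\in\C}\|\x-\bc\|_2 \le 2 r_k^\star$. In particular there is a center $\bc\in\C$ with $\|\xopt-\bc\|_2 \le 2 r_k^\star$. Applying Assumption~\ref{assump:assump2} to the pair $(\bc,\xopt)$ at budget $b=T$ gives
\[
A(\bc,T) \;\ge\; \bigl(1-\epsilon\,\|\bc-\xopt\|_2\bigr)\,A(\xopt,T) \;\ge\; \bigl(1-2\epsilon r_k^\star\bigr)\,\mathrm{OPT}.
\]
Since \FullCent{} evaluates every center in $\C$ up to budget $T$ via \Learn{} and returns $\arg\max_{\x\in\C}H^{(\x)}_{\mathrm{last}}=\arg\max_{\x\in\C}A(\x,T)$, its output value is at least $A(\bc,T)\ge (1-2\epsilon r_k^\star)\,\mathrm{OPT}$, which is the claimed approximation factor.

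I expect the main obstacle to be the first step — making the identification $\mathrm{OPT}=\max_i A(\x_i,T)$ fully rigorous, including the boundary bookkeeping: that $k\ge 1$ (hence $B\ge T$) is what makes the returned $\arg\max$ meaningful, that it is precisely the cap $b_i\in[T]$ (not monotonicity alone) that prevents one configuration from swallowing the whole budget $B>T$, and that the degenerate cases of the ratio convention in Assumption~\ref{assump:assump2} (the $\tfrac00=1$ and $+\infty$ cases, relevant if $A(\xopt,T)=0$) are harmless since then $\mathrm{OPT}=0$ and the bound holds trivially. Once $\mathrm{OPT}$ is fixed, the remaining two steps are the standard two-line $k$-center covering argument combined with a single invocation of the smoothness assumption.
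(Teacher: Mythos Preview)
Your proposal is correct and follows essentially the same route as the paper's proof: identify $\mathrm{OPT}=\max_{\x\in\X}A(\x,T)$ via monotonicity and the per-configuration cap, invoke the $2$-approximation of greedy $k$-center to bound the distance from $\xopt$ to its nearest selected center, and apply Assumption~\ref{assump:assump2} once. If anything, your handling of the edge cases ($k\ge 1$, the $0/0$ convention) is more careful than the paper's own write-up.
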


\begin{proof}[Proof of Theorem~\ref{thm:FC-approx}]
    Let $\x^\star$ be the center corresponding to the optimal solution to the UVP problem, without any budget constraints. From Assumption~\ref{assump:assump2}, we know that $A(\x, )$ is an increasing function, meaning the maximum value of $A(\x^\star, b)$ is attained when $b = T$. Thus, we have:
    \[
    A(\x^\star, T) = \max_{b_1, \ldots, b_n} \left\{\max_{\x_i \in \X}  A(\x_i, b_i)\right\}.
    \]

    Next, consider the output of \FullCent{}, denoted as $\hat\x$, which selects a set of centers $\C$ using the \KCenter{} procedure. Assume that $\x^\star$ lies within the cluster of some center $\bc \in \C$. Let $r_k$ represent the clustering radius obtained from \KCenter{}.

    By Assumption~\ref{assump:assump1}, we know that:
    \[
    A(\bc, T) \geq \left( 1 - \eps \|\x^\star - \bc\|_2 \right) A(\x^\star, T).
    \]
    Using the fact that the clustering radius $r_k$ provides an upper bound on the distance between $\bc$ and $\x^\star$, i.e., $r_k \geq \|\x^\star - \bc\|_2$, we obtain:
    \[
    A(\bc, T) \geq \left( 1 - \eps r_k\right) A(\x^\star, T).
    \]

    Now, since \FullCent{} selects $\hat\x$ as the configuration with the highest last history value from the centers in $\C$, and all configurations in $\C$ receive the full budget $T$, we know that:
    \[
    A(\hat\x, T) = \max_{\bc \in \C} H^{(\bc)}_{\mathrm{last}}.
    \]
    Since $\bc$ is one of the centers in $\C$, it follows that:
    \[
    A(\hat\x, T) \geq A(\bc, T).
    \]

    Finally, by the approximation guarantee of the \KCenter{} algorithm, which in the case of \FullCent{} is the standard greedy $k$-center algorithm, we have $r_k \leq 2r_k^\star$, where $r_k^\star$ is the optimal clustering radius. Therefore, combining the above inequalities, we get:
    \[
    A(\hat\x, T) \geq \left( 1 - 2\eps r_k^\star \right) A(\x^\star, T).
    \]
    This completes the proof.
\end{proof}

We now discuss the inherent limitations of the UVP problem. Theorem~\ref{thm:FC-hard} establishes that, in the worst case, no algorithm can exceed a certain accuracy bound.

\begin{theorem}
\label{thm:FC-hard}
Let $\epsilon > 0$, $\beta > 1$, $T \in \mathbb{N}$, $B \ge T$, and set $k = \lfloor B/T \rfloor$. Let $r > 0$ denote the optimal clustering radius for $\lceil \beta k \rceil$ clusters. Then there exists an instance of the UVP problem such that no algorithm can achieve expected accuracy exceeding
\[
\frac{1 - \epsilon r}{\beta - 1} \cdot  A(\x^\star, T),
\]
where $\x^\star$ denotes the optimal configuration.
\end{theorem}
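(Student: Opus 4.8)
The plan is a lower-bound construction combined with Yao's minimax principle. I will fix a configuration set $\X$ with $r^\star_{\lceil\beta k\rceil}(\X)=r$ together with a distribution over value functions $A$ consistent with Assumptions~\ref{assump:assump1}--\ref{assump:assump2}, and show that every deterministic algorithm has expected accuracy (over the draw of $A$) at most $\tfrac{1-\epsilon r}{\beta-1}A(\x^\star,T)$; since a randomized algorithm is a mixture of deterministic ones, the bound persists in expectation, so some instance in the support witnesses the theorem. It is convenient to normalize $A(\x^\star,T)=1$ throughout, and to note that when $\epsilon r\ge 1$ the claimed bound is non-positive, hence vacuous, so we may assume $\epsilon r<1$.

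\emph{Construction.} Let $m:=\lceil\beta k\rceil$ and place $m$ congruent ``gadgets'' $\mathcal G_1,\dots,\mathcal G_m$ in a high-dimensional $\R^d$, pairwise separated by Euclidean distance far larger than $1/\epsilon$. Each gadget is a finite point set whose optimal $1$-center radius is exactly $r$, engineered so that (i) one designated ``special slot'' is metrically indistinguishable from its siblings --- e.g.\ a uniformly random point of a densely sampled sphere of radius $r$ whose centre is \emph{not} placed in $\X$, or a random vertex of a regular simplex of circumradius $r$ --- and (ii) every non-special slot lies at distance at least $-\tfrac1\epsilon\ln(1-\epsilon r)$ from the special slot. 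Since the gadgets are far apart, $r^\star_m(\X)=r$. Now draw a uniformly random gadget $\mathcal G^\star$ and a uniformly random special slot $\x^\star\in\mathcal G^\star$; set $A(\cdot,b)=0$ for \emph{every} configuration and every $b<T$; set $A(\x^\star,T)=1$; on the remaining configurations of $\mathcal G^\star$ let $A(\cdot,T)=e^{-\epsilon\|\cdot-\x^\star\|_2}$; and set $A\equiv 0$ on all other gadgets. Assumption~\ref{assump:assump1} is immediate; Assumption~\ref{assump:assump2} holds because all cross-gadget pairs are at distance $>1/\epsilon$ (making the ratio bound trivial) while inside $\mathcal G^\star$ the reverse triangle inequality gives $\tfrac{A(\x_i,T)}{A(\x_j,T)}=e^{-\epsilon(\|\x_i-\x^\star\|-\|\x_j-\x^\star\|)}\ge e^{-\epsilon\|\x_i-\x_j\|}\ge 1-\epsilon\|\x_i-\x_j\|$. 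By (ii), every non-special slot of $\mathcal G^\star$ carries value $e^{-\epsilon\|\cdot-\x^\star\|}\le 1-\epsilon r$ at budget $T$.

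\emph{Information-theoretic core.} Two facts drive the bound. First, feasibility: bringing any configuration to budget $T$ costs $T$ (evaluations are cumulative), so a budget-$B$ algorithm brings at most $k=\lfloor B/T\rfloor$ configurations to budget $T$. Second, partial probes are useless: since $A(\cdot,b)=0$ for all configurations and all $b<T$, any evaluation below budget $T$ returns $0$ and leaks nothing, so without loss of generality the algorithm performs only full-budget evaluations and therefore touches configurations in at most $k$ of the $m$ gadgets. All instances in the support share the same $\X$ and return identical (all-zero) feedback until $\mathcal G^\star$ is first touched, so the at-most-$k$ gadgets touched are chosen independently of $\mathcal G^\star$, whence $\pr[\mathcal G^\star\text{ is ever touched}]\le k/m\le 1/\beta$. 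Conditioned on never touching $\mathcal G^\star$ the algorithm collects only zeros; conditioned on touching it, the accuracy is at most the largest $A(\cdot,T)$ over the (at most $k$, blindly chosen) slots of $\mathcal G^\star$ it inspects, which is at most $1-\epsilon r$ except on the event of landing exactly on $\x^\star$, of probability $O(k/N)$ where $N$ is the number of slots per gadget. Hence $\mathbb{E}[\text{accuracy}]\le \tfrac1\beta\bigl((1-\epsilon r)+O(k/N)\bigr)$, and choosing $N$ large enough that the $O(k/N)$ term is absorbed --- possible because $\tfrac1\beta<\tfrac1{\beta-1}$ leaves genuine slack --- yields $\mathbb{E}[\text{accuracy}]\le\tfrac{1-\epsilon r}{\beta-1}$.

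\emph{Main obstacle.} The crux is the gadget design behind (i)--(ii): one must simultaneously (a) pin the gadget's optimal $1$-center radius to exactly $r$, (b) make $\x^\star$ metrically indistinguishable so that an algorithm reaching $\mathcal G^\star$ cannot localize $\x^\star$ from the distances its $\le k$ probes reveal --- and in particular cannot ``finish'' by probing $\x^\star$ itself --- and (c) keep every non-special slot far enough from $\x^\star$ that the multiplicative smoothness of Assumption~\ref{assump:assump2} still allows its budget-$T$ value to be as low as $1-\epsilon r$. Reconciling (a)--(c) in $\R^d$ (the sphere/simplex candidates are the natural choices, and the high dimension is precisely what blocks cheap localization in (b)), together with the accompanying $\lceil\cdot\rceil/\lfloor\cdot\rfloor$ bookkeeping and the choice of $N$, is where the real work lies; the Yao reduction, the feasibility bound, and the uselessness of partial probes are routine.
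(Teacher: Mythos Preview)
Your overall architecture matches the paper's: Yao's minimax over $m=\lceil\beta k\rceil$ far-apart clusters with one uniformly random ``good'' cluster, all-zero values for $b<T$ so that partial probes are uninformative and at most $k$ full-budget evaluations are possible, hence probability $\le k/m$ of ever reaching the good cluster, plus a vanishing $O(1/N)$ chance of pinpointing $\x^\star$ once inside. The probabilistic skeleton is sound and essentially identical to the paper's.

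The gap is in the gadget: your conditions (a) and (c) are actually incompatible. If the gadget's $1$-center radius (with centers drawn from $\X$, as in the paper's $k$-center setup) equals $r$, then some $\bc\in\X$ has every gadget point within distance $r$; in particular $\|\x^\star-\bc\|\le r$. If $\bc\ne\x^\star$, then $\bc$ itself is a non-special slot at distance $\le r$ from $\x^\star$; if $\bc=\x^\star$, then every non-special slot lies within $r$ of $\x^\star$. Either way some non-special slot sits within $r$ of $\x^\star$, whereas your condition (ii) demands distance at least $-\tfrac1\epsilon\ln(1-\epsilon r)$, which strictly exceeds $r$ for every $\epsilon r\in(0,1)$ since $-\ln(1-x)>x$. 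So the ``main obstacle'' you flag is not merely hard but impossible for the exponential profile $A(\x,T)=e^{-\epsilon\|\x-\x^\star\|}$: no gadget in $\R^d$ can satisfy (a) and (c) simultaneously.

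The paper sidesteps this by assigning values directly rather than through a distance-based formula: take the gadget to have all intra-cluster distances equal to $r$ (a regular simplex on $N$ vertices, so the discrete $1$-center radius is exactly $r$), set $A(\x^\star,T)=1$ and $A(\xsub,T)=1-\epsilon r$ for every other vertex, with $A(\cdot,b)=0$ for $b<T$. Assumption~\ref{assump:assump2} is then tight by construction ($\tfrac{1-\epsilon r}{1}=1-\epsilon\cdot r$), all non-special slots return the identical value $1-\epsilon r$ so there is nothing to triangulate, and your probability bound goes through unchanged with the same $O(k/N)$ slack. The exponential detour is precisely what created the obstacle; dropping it removes it.
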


\begin{proof}[Proof of Theorem~\ref{thm:FC-hard}] Construct an instance of the \emph{Unknown Value Probing} problem with $\lceil \beta k \rceil$ symmetric clusters, each containing $n$ configurations. Inter-cluster distances are at least $\inveps$, and intra-cluster distances are $r$. A single cluster $\copt$ is chosen uniformly at random to contain one optimal configuration $\xopt$ and $n{-}1$ suboptimal ones $\xsub$ satisfying:
\[
A(\xopt,t) = A(\xsub,t) = 0 \quad \forall t < T,
\]
\[
A(\xopt,T) = 1,\quad A(\xsub,T) = 1 - \eps r.
\]
All configurations in other clusters yield zero accuracy at all budgets:
\[
\forall \cl \neq \copt,\ \forall \x \in \cl,\ \forall t \in [T]:\quad A(\x,t) = 0.
\]

By Yao's minimax principle, we analyze the performance of any deterministic algorithm $\alg$ against a randomly chosen input instance. Without loss of generality, we assume:
(i) each evaluation costs $T$ (since all functions are $0$ for budgets $< T$, if $\alg$ does not spend $T$ it cannot distinguish the optimal cluster from an all-zero function);
(ii) upon finding a non-zero configuration, $\alg$ continues probing its cluster; and
(iii) $\alg$  uses the entire budget $B$ (which here is  equivalent to using all its \(k\)  evaluations).

Let $\x^{\alg}$ denote the output of $\alg$, and define:
\begin{itemize}
    \item[--] $\pmiss^{(i)}$: probability that $\alg$ has not probed any point from $\copt$ before step $i$;
    \item[--] $\pnew^{(i)}$: probability that the $i$-th probe is from a previously unseen cluster;
    \item[--] $\nopt^{(i)}$: number of non-optimal clusters probed before step $i$.
\end{itemize}
The expected accuracy is bounded by the probability of discovering $\xopt$ or some $\xsub$:
\begin{align*}
    &\mathbb{E}[A(\x^{\alg}, T)] \\
    &= \pr\left(\bigcup_{i=1}^{k} \alg 
    \text{ learns } \xopt \text{ at step } i \text{ for the first time} \right) \\
    &\quad + \left(1 - \eps r\right) \pr\left(\bigcup_{i=1}^{k} \alg 
    \text{ learns } \xsub \text{ at step } i \text{ but not } \xopt \right) \\
    &\leq \sum_{i=1}^{k} \pr\left( \alg \text{ learns } \xopt \text{ at step } i \text{ for the first time} \right) \\
    &\quad + \left(1 - \eps r\right) \sum_{i=1}^{k} \pr\left( \alg \text{ learns } \xsub \text{ at step } i \text{ for the first time} \right) \\
    &\leq \sum_{i=1}^{k} \frac{1}{n}  \frac{\pmiss^{(i)}  \pnew^{(i)}}{\lceil \beta k \rceil - \nopt^{(i)}} 
    + \left(1 - \eps r\right) \sum_{i=1}^{k} \frac{n - 1}{n}  \frac{\pmiss^{(i)}  \pnew^{(i)}}{\lceil \beta k \rceil - \nopt^{(i)}}.
\end{align*}

Since $\nopt^{(i)} \le k$, we have $\lceil \beta k \rceil - \nopt^{(i)} \ge (\beta - 1)k$, yielding:
\[
\mathbb{E}[A(\x^{\alg}, T)] \le \left[\frac{1}{n(\beta - 1)} + \frac{1 - \eps r}{\beta - 1}\right]  \cdot A(\x^\star, T).
\]

Finally, for any $\delta > 0$, choosing \(n > \left\lceil \frac{1}{(1 - \eps r) \delta} \right\rceil\) yields
\[
\mathbb{E}[A(\x^{\alg}, T)] < (1 + \delta)   \frac{1 - \eps r}{\beta - 1}  \cdot A(\x^\star, T),
\]
contradicting any claimed $(1+\delta)$-approximation. This concludes the proof.
\end{proof}

Corollary~\ref{col:FC-hard} shows that this worst-case hardness translates directly into a corresponding approximation barrier for \FullCent{} (Theorem~\ref{thm:FC-approx}).

\begin{corollary}
\label{col:FC-hard}
Let $\epsilon > 0$, $T \in \mathbb{N}$, and $B \ge T$, and define $k = 2 \lfloor B/T \rfloor$. Then there exists an instance of the UVP problem with $k$ clusters, each of radius $r_k > 0$, such that no algorithm can achieve an approximation factor exceeding $(1 - \epsilon r_k)$, where $r_k$ is the optimal clustering radius for $k$ clusters.
\end{corollary}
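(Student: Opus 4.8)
The plan is to derive this corollary as the $\beta=2$ specialization of Theorem~\ref{thm:FC-hard}; no new construction is needed. First I would set $k_0 := \lfloor B/T\rfloor$ and check that the hypotheses of Theorem~\ref{thm:FC-hard} are satisfied with $\beta = 2$: we are given $\epsilon>0$, $T\in\mathbb N$, and $B\ge T$, and clearly $\beta = 2 > 1$. Invoking the theorem with this $\beta$ yields a UVP instance whose number of clusters is $\lceil \beta k_0\rceil = \lceil 2\lfloor B/T\rfloor\rceil = 2\lfloor B/T\rfloor = k$, where the ceiling is vacuous since $2\lfloor B/T\rfloor$ is already an integer — so the instance has exactly the $k$ clusters demanded by the corollary. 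The parameter called $r$ in Theorem~\ref{thm:FC-hard} is, by its statement, the optimal clustering radius for these $\lceil\beta k_0\rceil = k$ clusters; I would simply rename it $r_k$, and note $r_k>0$ is inherited from the construction, whose intra-cluster distances are set to $r>0$.

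Next I would read off the accuracy bound. Theorem~\ref{thm:FC-hard} guarantees that on this instance no algorithm attains expected accuracy exceeding $\frac{1-\epsilon r}{\beta-1}\,A(\x^\star,T)$; substituting $\beta=2$ collapses the denominator to $1$, leaving the bound $(1-\epsilon r_k)\,A(\x^\star,T)$. Translating into the language of approximation factors, ``expected accuracy at most $(1-\epsilon r_k)\,A(\x^\star,T)$'' is exactly the assertion that no algorithm can guarantee an approximation factor exceeding $(1-\epsilon r_k)$ on this instance. If one wants the strict form, I would reuse the $\delta$-slack already built into the proof of Theorem~\ref{thm:FC-hard}: for every $\delta>0$, taking $n$ large enough makes any claimed $(1+\delta)(1-\epsilon r_k)$-approximation impossible, which is the desired conclusion.

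There is essentially no technical obstacle here, since all the combinatorial and probabilistic work lives in Theorem~\ref{thm:FC-hard}. The only care required is bookkeeping: keeping straight that the theorem's ``$k$'' plays the role of $\lfloor B/T\rfloor$ here while the corollary's ``$k$'' is the theorem's cluster count $\lceil\beta k\rceil$, and that the theorem's ``$r$'' is the corollary's $r_k$. Verifying $\lceil 2\lfloor B/T\rfloor\rceil = 2\lfloor B/T\rfloor$ and that $r_k$ so obtained is genuinely the optimal $k$-center radius of the produced instance (both already implicit in the statement of Theorem~\ref{thm:FC-hard}) is the whole of the remaining routine check.
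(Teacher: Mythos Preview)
Your proposal is correct and matches the paper's intended derivation: the corollary is stated immediately after Theorem~\ref{thm:FC-hard} without a separate proof, and is meant to be read as the direct $\beta=2$ specialization you describe. Your bookkeeping on the two different uses of ``$k$'' and the identification $r=r_k$ is exactly what is needed.
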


\subsection{\emph{Enhanced} Clustering}
\label{sub:enhanced}

Although \FullCent{} allocates a fixed budget across the configuration space, it weights all chosen centers equally, including poor performers. After evaluating centers, we refocus exploration on promising regions by introducing \emph{enhanced distances}: a distance transform that makes the accuracy upper bound from any center at its true distance equal to the bound from the current best center at an adjusted distance. We scale distances from weaker centers by their performance gap, enlarging their neighborhoods while downweighting them, thereby steering exploration toward the most promising areas.

To motivate the enhanced distance formula, we begin by recalling the upper bound on the accuracy at a point $\x \in \X$ based on any center $\bc \in \C$, which can be written as
\[
A(\x, T) \leq \frac{A(\bc, T)}{1 - \epsilon \|\x - \bc\|_2},
\]
and holds when $\epsilon \|\x - \bc\|_2 < 1$. Now, suppose we want to define an adjusted distance $\tilde{d}(\x, \bc)$ such that the upper bound derived from the center $\bc$ at its actual distance equals the upper bound derived from the best center at its adjusted distance to $\x$. In other words, we seek a $\tilde{d}$ satisfying
\[
\frac{A(\bc, T)}{1 - \epsilon \|\x - \bc\|_2} = \frac{\max_{\bc' \in \C} A(\bc', T)}{1 - \epsilon \tilde{d}},
\]
Solving for $\tilde{d}$, we get $\tilde{d} = \eta_{\bc} \|\x - \bc\|_2 - \tfrac{1}{\epsilon} (\eta_{\bc} - 1)
$, where
\(
\eta_{\bc} = \max_{\bc' \in \C} \tfrac{A(\bc', T)}{A(\bc, T)}.
\)
Finally, we define the enhanced distance as the minimum of the actual and adjusted distances to ensure conservativeness near promising centers (even when the condition \(\epsilon \|\x - \bc\|_2 \le 1\) does not hold). This leads to the following definition of the enhanced distance
\[
\tilde{d}(\x, \bc) = \min \left\{ \|\x - \bc\|_2, \; \eta_{\bc} \|\x - \bc\|_2 - \frac{1}{\epsilon} (\eta_{\bc} - 1) \right\}.
\]
To illustrate the benefit of using enhanced distances for clustering, we present a setting where they outperform standard distances in identifying the optimal configuration.

\subsubsection{Illustrative Example}
Consider the setting in Figure~\ref{fig:example}, where $T = 1$ and each configuration lies in 2D. Three regions are centered along the $x$-axis, equally spaced by $d$. Two of them (red and green) are surrounded by rings of radius $r$ with constant function values $v_1 = 1 - \epsilon d$ and $v_2 = (1 - \epsilon d)^2$. A single blue point at $x = 0$ attains the optimal value $v_0 = 1$.

\begin{figure}[t]
\centering
\includegraphics[width=0.8\linewidth]{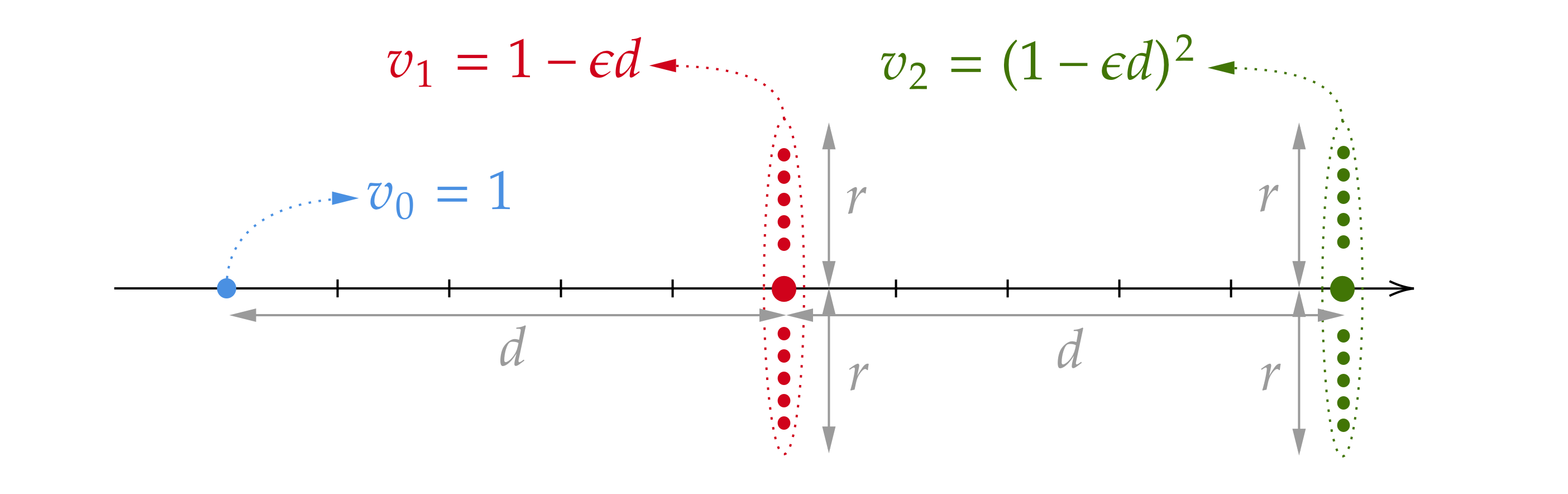}
\caption{Illustrative example where adjusted distances yield better clustering, resulting in probing the highest value point.}
\label{fig:example}
\end{figure}

\begin{claim}
\label{clm:example}
For $k = 2$, there exist values of $\epsilon$, $r$, and $d$ such that clustering with enhanced distances selects the optimal (blue) configuration as a center, while standard $k$-center clustering does not.
\end{claim}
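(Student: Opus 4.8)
The plan is to exhibit a concrete choice of parameters and verify by direct computation that the greedy $k$-center rule (with $k=2$) picks the two ring regions (red and green) as centers, leaving the blue optimum unclustered as a non-center, whereas running the same greedy rule on the \emph{enhanced} distances forces the blue point to be selected. The heavy lifting is bookkeeping, so I would first fix which point the greedy procedure starts from. Since \KCenter{} with an empty seed set takes $\arg\max$ over $\Delta^{(1)}(\x) = \min_{\bc \in \emptyset}\|\x-\bc\|_2$, the first center is determined by the convention for minimizing over the empty set (or, equivalently, we may assume the standard convention that the first center is an arbitrary point — I would pick the red center, or a point on the red ring, whichever makes the arithmetic cleanest). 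Call the three region centers $p_0$ (blue, at $x=0$, value $v_0=1$), $p_1$ (red, at $x=d$, value $v_1 = 1-\epsilon d$ on its ring of radius $r$) and $p_2$ (green, at $x=2d$, value $v_2 = (1-\epsilon d)^2$ on its ring of radius $r$), with the pairwise region-center distances being $d$ and $2d$.

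For the \textbf{standard $k$-center} run: after the first center is placed in the red region, the farthest point is somewhere in the green region (at distance roughly $2d - r$ or $d+r$ depending on where exactly the first center sits), provided $d$ is large relative to $r$; so the second center lands in the green region. The blue point is then at distance roughly $d$ from the red center, which is strictly less than the radius $r_2$ already realized, so blue is never chosen. This requires $d \gg r$ and a verification that $d$ (blue-to-red) is dominated by the green-region diameter contribution — a one-line inequality like $d + r < 2d - r$, i.e. $r < d/2$, suffices, together with checking the first-center tie-breaking does not change the outcome.

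For the \textbf{enhanced-distance} run: the key point is that the $\eta$-rescaling inflates distances measured from the low-value green (and red) centers but leaves distances from the top center at their true value. Concretely, once the best observed center has value $v_1$ (if red is picked first) or $v_0=1$, the factor $\eta_{\bc}$ for the green center is $\max_{\bc'} A(\bc',T)/v_2 \ge 1/v_2 = (1-\epsilon d)^{-2} > 1$, so the enhanced distance from the green center to the blue point gets multiplied by roughly $(1-\epsilon d)^{-2}$ and shifted by $-\tfrac{1}{\epsilon}(\eta-1)$; for small $\epsilon d$ this is a net increase of the blue point's $\tilde d$ to the green center. The plan is to choose $\epsilon, r, d$ so that, after the first center is the red region and the second greedy pick is made with enhanced distances, $\tilde d(p_0, \cdot)$ to the current center set strictly exceeds $\tilde d(\text{any green point}, \cdot)$ — which flips the second pick from green to blue. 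The cleanest route: take the first center to be in the red region, compute $\eta_{\text{red region}} = 1/v_1 = (1-\epsilon d)^{-1}$ (since blue with value $1$ is the best, but it's not yet a center, so $\eta$ is computed over current centers only — here $\eta_{\text{red}} = 1$ as red is the sole center, which is a subtlety I must handle).

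The main obstacle I anticipate is exactly this $\eta$-timing issue: $\eta_{\bc} = \max_{\bc' \in \C} A(\bc',T)/A(\bc,T)$ ranges over the \emph{current} center set $\C$, not over all of $\X$, so before any high-value center is selected the enhancement is inactive. To make the example work I must either (a) let the greedy procedure pick \emph{green} first (the globally farthest-apart pair in one ordering), so that on the second step red is a candidate whose enhanced distance from green is inflated — but that still doesn't bring blue in over red; or, more robustly, (b) observe that enhanced distances are recomputed after the first center is evaluated, and engineer the geometry (e.g. make the green ring slightly farther so green is picked first, then red's value $v_1$ relative to green's $v_2$ gives $\eta_{\text{green}} = v_1/v_2 = (1-\epsilon d)^{-1}$) and then show blue's inflated distance from green beats red's. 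I would therefore structure the proof as: (1) fix the start point and the parameter inequalities ($r < d/2$, $\epsilon d$ small, plus one more quantitative inequality ensuring the flip); (2) trace the two greedy runs step by step; (3) conclude. I expect step (1) — pinning down a parameter window where the $\eta$ inflation is large enough to overcome the geometric ordering while $\epsilon r, \epsilon d < 1$ keeps all the convention cases benign — to be where essentially all the real work lies, and it is plausible the intended example actually uses a point \emph{on} a ring (not the ring center) so that the seeded/greedy first pick and the subsequent $\eta$-update interact as needed.
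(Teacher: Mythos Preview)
Your proposal traces the greedy \eKCenter{} procedure step by step, and you correctly flag the ``$\eta$-timing'' issue. What you do not notice is that for $k=2$ this issue is not merely an obstacle but is fatal to the entire approach: after the first center $\bc_1$ is chosen and evaluated, the current center set is $\C^{(1)}=\{\bc_1\}$, so $\eta^{(2)}_{\bc_1}=H^{(\bc_1)}_{\mathrm{last}}/H^{(\bc_1)}_{\mathrm{last}}=1$ and every enhanced distance collapses to the Euclidean one. Hence greedy \eKCenter{} with an empty seed and $k=2$ selects exactly the same two centers as greedy \KCenter{}, and no amount of parameter tuning can make the two runs diverge. Your suggested workarounds (picking green first, placing the first center on a ring, etc.) do not escape this, because the enhancement is computed only over already-selected centers and there is only ever one of those when the second pick is made.

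The paper's proof avoids this entirely by interpreting the claim as a statement about the \emph{optimal} $2$-clustering objective rather than the greedy trajectory. For each candidate pair of centers $\C$, one computes the clustering radius $\max_{\x}\min_{\bc\in\C}\tilde d_{\C}(\x,\bc)$, where the enhanced distance $\tilde d_{\C}$ is defined self-referentially using the values of the centers in $\C$ (so $v_{\max}=\max_{\bc\in\C}A(\bc,T)$). The paper then checks three natural candidate pairs --- \{red center, green center\}, \{blue, red center\}, \{blue, green center\} --- computes each enhanced radius explicitly, and shows that for $r=\alpha d$, $\epsilon=\beta/d$ with $\alpha,\beta<3/4$ the pairs containing blue achieve strictly smaller radius than the red/green pair. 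The standard-distance half of the claim is the easy observation that $\{$red center, green center$\}$ has Euclidean radius $d$, whereas any pair containing blue has Euclidean radius at least $\sqrt{d^2+r^2}>d$. You should restructure your argument along these lines: enumerate the candidate center sets, compute each radius under the appropriate distance, and compare.
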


\begin{proof}[Proof of Claim~\ref{clm:example}]
    When clustering is done via standard distances, the optimal blue point is never picked. More specifically, using the two center of the red and green rings as the clustering centers would result in the clustering radius being $d$, however, picking the blue point would result in the clustering radius being at least $\sqrt{d^2+r^2}$. We now analyze the optimal clustering using \emph{enhanced} distances.
    
    Case 1: Suppose the two center of the rings are picked as cluster centers. Then $v_\mathrm{max}$ would be $1 - \eps d$ and the furthest point from its center would be the blue point. The \emph{enhanced} distance of the blue point ($\x$) from the center of the red ring ($\bc$) could be calculated as follows:
\begin{align*}
\tilde{d} (\x, \bc) &= \frac{v_\mathrm{max}}{v_\bc}  \| \x - \bc \|_2 -\frac{1}{\eps}\left(\frac{v_\mathrm{max}}{v_\bc} - 1\right) = \| \x - \bc \|_2 = d
\end{align*}
In this case, the clustering radius is $d$.

    Case 2: Suppose the blue point and the center of the red ring are picked as cluster centers. The $v_\mathrm{max}$ would be $1$ and the furthest point form its center would be the outer green points from the green ring. The \emph{enhanced} distance of an outer green point ($\x$) from the center of the red ring ($\bc$) could be calculated as follows:
    \begin{align*}
        \tilde{d} (\x, \bc) &= \frac{v_\mathrm{max}}{v_\bc}  \| \x - \bc \|_2 -\frac{1}{\eps}\left(\frac{v_\mathrm{max}}{v_\bc} - 1\right) =\frac{\sqrt{d^2 + r^2}}{1-\eps d}  -\frac{1}{\eps}\left(\frac{1}{1-\eps d} - 1\right) = \frac{\sqrt{d^2+r^2} - d}{1-\eps d}
    \end{align*}
    In this case, the clustering radius is $\frac{\sqrt{d^2+r^2} - d}{1-\eps d}$.

    Case 3: Suppose the blue point and the center of the green ring are picked as cluster centers. The $v_\mathrm{max}$ would be $1$ and the furthest point form its center would be the outer red points from the red ring. It isn't trivial which center do these points pick, so we check both cases. The \emph{enhanced} distance of an outer red point ($\x$) from the center of the green ring ($\bc$) could be calculated as follows:
    \begin{align*}
        \tilde{d} (\x, \bc) &= \frac{v_\mathrm{max}}{v_\bc}  \| \x - \bc \|_2 -\frac{1}{\eps}\left(\frac{v_\mathrm{max}}{v_\bc} - 1\right) =\frac{\sqrt{d^2 + r^2}}{(1-\eps d)^2}  -\frac{1}{\eps}\left(\frac{1}{(1-\eps d)^2} - 1\right) = \frac{\sqrt{d^2+r^2} - d(2-\eps d)}{(1-\eps d)^2}
    \end{align*}
     The \emph{enhanced} distance of an outer red point ($\x$) from the blue point ($\bc'$) could be calculated as follows:
     \begin{align*}
        \tilde{d} (\x, \bc') &= \frac{v_\mathrm{max}}{v_{\bc'}}  \| \x - \bc' \|_2 -\frac{1}{\eps}\left(\frac{v_\mathrm{max}}{v_\bc} - 1\right) =\| \x - \bc' \|_2 = \sqrt{d^2 + r^2}
    \end{align*}
    Now its obvious that the outer red points pick the center of the green ring as their center. So in this case, the clustering radius is $\frac{\sqrt{d^2+r^2} - d(2-\eps d)}{(1-\eps d)^2}$.

    Note that in order for case 1 not to happen, the clustering radius of cases 2 and 3 should be less than the clustering radius of case 1. It is easy to see that when $1 - \eps d > 0$ (which is the case considering the values should be positive), the clustering radius of case 3 is less than case 2. So it suffices that:
    \begin{align*}
        d > \frac{\sqrt{d^2+r^2} - d}{1-\eps d} \iff 2 > \sqrt{1+\left(\frac{r}{d}\right)^2} + \eps d
    \end{align*}
    Picking $r = \alpha d$ and $\eps = \beta /d$ would result in the inequality being met for $\alpha, \beta< 3/4$.
\end{proof}

\subsubsection{\eKCenter{}}

\eKCenter{} enhances the standard $k$-center selection by adjusting distances based on observed performance. After each probe, it rescales distances from existing centers using the latest evaluations, giving weaker centers greater influence relative to the current best. The next center is then chosen according to the max–min rule using these enhanced distances at iteration $i$:
\[
\bc_i = \arg\max_{\x \in \X \setminus \C^{(i-1)}} \min_{\bc \in \C^{(i-1)}} \tilde{d}^{(i)}(\x, \bc).
\]

At each iteration, the algorithm uses the histories of previously selected centers and a probing budget $t$ (which determines the evaluation depth and guides the distance adjustment) to select the next center $\bc_i$ via $\Learn(\bc_i, t)$. The pseudo-code for \eKCenter{} is given in Algorithm~\ref{alg:ekcenter}.

\begin{algorithm}[H]
\caption{\eKCenter$(k, \C, H, \X, t, \epsilon)$}
\label{alg:ekcenter}
\begin{algorithmic}[1]
\State \textbf{Input:} number of new centers $k$, existing centers $\C$, existing histories $H$, configuration set $\X$, budget $t$, parameter $\epsilon$
\State \textbf{Initialize} $\C^{(0)} \gets \C$
\For{$i = 1,\dots,k$}
  \For{each $\x \in \X \setminus \C^{(i-1)}$}
    \State compute for all $\bc\in \C^{(i-1)}$: 
    \begin{align*}
        &\eta^{(i)}_{\bc} = {\max_{\bc' \in \C^{(i-1)}}H^{(\bc')}_\mathrm{last}} / {H^{(\bc)}_\mathrm{last}} \\
        &\dti^{(i)}(\x, \bc)
    = \min \left\{ \|\x - \bc\|_2, \, 
    \eta^{(i)}_\bc\|\x-\bc\|_2
    -\inveps\Bigl(\eta^{(i)}_\bc-1\Bigr) \right\}
    \end{align*}
    \State $\Delta^{(i)}(\x) \leftarrow \min_{\bc\in \C^{(i-1)}}\tilde d^{(i)}(\x,\bc) $ \Comment{enhanced distance to nearest center}
  \EndFor
  \State $\bc_{i} \leftarrow \displaystyle\arg\max_{\x\in \X\setminus \C^{(i-1)}}\Delta^{(i)}(\x)$ \Comment{farthest configuration}
  \State $H^{\bc_i} = \Learn(\bc_i, t)$  \Comment{evaluate selected configuration}
  \State $\C^{(i)} \leftarrow \C^{(i-1)} \cup \{\bc_{i}\}$
\EndFor
\State \Return $\C^{(k)}, H$ \Comment{return centers and histories}
\end{algorithmic}
\end{algorithm}

\subsubsection{\eFullCent{}}
\eFullCent{} is an enhanced version of \FullCent{} that, instead of using \KCenter{} to select configurations, employs \eKCenter{}. During center selection, each configuration is fully evaluated at the budget \(t = T\). Pseudo-code, analogous to \KCenter{}, is given in Algorithm~\ref{alg:efc}.

\begin{algorithm}[H]
\caption{\eFullCent{}$(B, T, \X, \epsilon)$}
\label{alg:efc}
\begin{algorithmic}[1]
\State \textbf{Input:} total budget $B$, max budget $T$, configuration set $\X$, parameter $\epsilon$
\State $k \gets \lfloor B/T \rfloor$, $\C' \gets \emptyset$, $H' \gets \emptyset$
\State $\C, H \gets \eKCenter(k, \C', H', \X, T, \epsilon)$ \Comment{select and evaluate centers}
\State \Return $\displaystyle\arg\max_{\x \in \C}H^{(\x)}_{\mathrm{last}}$ \Comment{return best performer}
\end{algorithmic}
\end{algorithm}

We analyze \eFullCent{} next. The following lemmas show that it respects the total budget and that its running time is quadratic in \(B\) and linear in \(n\).

\begin{lemma}
    \label{lemma:EFC-budget}
    \eFullCent{} uses budget at most $B$.
\end{lemma}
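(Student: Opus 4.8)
The plan is to mirror the accounting used in the proof of Lemma~\ref{lemma:FC-budget}, but tracking every place where \eFullCent{} (Algorithm~\ref{alg:efc}) could spend budget. First I would note that \eFullCent{} makes exactly one call to \eKCenter{}, with parameters $k = \lfloor B/T \rfloor$, empty seed set $\C' = \emptyset$, empty history $H' = \emptyset$, and probing budget $t = T$; apart from that call it only computes an $\arg\max$ over already-stored histories, which incurs no evaluation of $A$.

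Next I would unfold \eKCenter{} (Algorithm~\ref{alg:ekcenter}) and check that the sole budget-consuming line inside its main loop is $H^{\bc_i} \gets \Learn(\bc_i, t)$. The distance-update steps compute $\eta^{(i)}_{\bc}$ and $\dti^{(i)}(\x,\bc)$ purely from already-recorded entries $H^{(\bc')}_{\mathrm{last}}$ of previously selected centers, so they trigger no new calls to $A$. Since the chosen $\bc_i$ lies in $\X \setminus \C^{(i-1)}$, it was not selected in any earlier iteration, so each invocation $\Learn(\bc_i, T)$ starts from an empty history and evaluates $A(\bc_i, 1), \dots, A(\bc_i, T)$, i.e., exactly $T$ fresh units. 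The loop runs for $i = 1, \dots, k$, giving a total expenditure of exactly $kT$.

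Finally I would conclude $kT = \lfloor B/T \rfloor \cdot T \le B$, so \eFullCent{} uses budget at most $B$. I do not expect a genuine obstacle here; the only point deserving a moment's care is confirming that the enhanced-distance bookkeeping inside \eKCenter{} references only stored history values and therefore does not secretly probe $A$ — once that is verified, the count is identical to the \FullCent{} case.
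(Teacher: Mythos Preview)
Your proposal is correct and follows essentially the same approach as the paper's own proof: counting that $k=\lfloor B/T\rfloor$ configurations are each evaluated at budget $T$, for a total of $kT\le B$. Your version is more explicit in verifying that the enhanced-distance computations inside \eKCenter{} consume no probes, but the underlying accounting is identical.
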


\begin{proof}[Proof of Lemma~\ref{lemma:EFC-budget}]
\eFullCent{} selects $k = \lfloor B/T \rfloor$ configurations and evaluates each up to the full budget $T$, for a total of $kT \le B$. Therefore, the total training budget does not exceed $B$.
\end{proof}

\begin{lemma}
    \label{lemma:EFC-comp}
The overall running time of \eFullCent{} is $O(nB^2/T^2 + B)$.
\end{lemma}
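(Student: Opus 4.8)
The plan is to bound the cost of the single call to \eKCenter{} that \eFullCent{} makes, since the only other work — the final $\arg\max$ over the $k=\lfloor B/T\rfloor$ selected centers — costs $O(k)=O(B/T)=O(B)$ and is absorbed into the claimed bound. So it suffices to show that \eKCenter{}$(k,\emptyset,\emptyset,\X,T,\epsilon)$ runs in time $O(nB^2/T^2 + B)$.

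First I would fix an iteration $i\in\{1,\dots,k\}$ of the outer loop and use $|\C^{(i-1)}|\le k$. The key observation is that $\max_{\bc'\in\C^{(i-1)}}H^{(\bc')}_{\mathrm{last}}$ does not depend on $\x$, so it can be computed once in $O(k)$ time; after that each $\eta^{(i)}_{\bc}$ is obtained in $O(1)$, so all the $\eta^{(i)}_{\bc}$ for the iteration cost $O(k)$ total. Then, for each of the at most $n$ configurations $\x\in\X\setminus\C^{(i-1)}$, computing $\dti^{(i)}(\x,\bc)$ for every $\bc\in\C^{(i-1)}$ and taking the minimum to form $\Delta^{(i)}(\x)$ costs $O(k)$, for a total of $O(nk)$; the subsequent $\arg\max$ over $\x$ costs $O(n)$. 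Finally the call $\Learn(\bc_i,T)$ performs $T$ evaluations, costing $O(T)$. Hence iteration $i$ costs $O(nk+T)$.

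Summing over the $k$ iterations gives $O(nk^2 + kT)$. Substituting $k=\lfloor B/T\rfloor$ yields $nk^2=O(nB^2/T^2)$ and $kT\le B$, so \eKCenter{} runs in $O(nB^2/T^2 + B)$ time; adding the $O(B)$ for the final selection step in \eFullCent{} leaves the bound unchanged, which completes the proof.

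There is no deep difficulty here; the one place that requires care — and the main obstacle in the sense of getting the right constant-depth accounting — is the enhanced-distance step. One must notice that $\eta^{(i)}_{\bc}$ should be computed once per iteration (via a single $O(k)$ pass over $\C^{(i-1)}$) rather than recomputed for each $\x$; doing it naively would cost $O(nk^2)$ per iteration and inflate the bound to $O(nB^3/T^3 + B)$. The rest is routine summation.
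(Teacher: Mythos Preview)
Your proof is correct and follows essentially the same approach as the paper: both argue that each of the $k=\lfloor B/T\rfloor$ iterations of \eKCenter{} costs $O(nk)$ for the enhanced-distance updates plus $O(T)$ for \Learn{}, giving $O(nk^2+kT)=O(nB^2/T^2+B)$ overall. Your write-up is simply more granular, and your remark about hoisting the computation of $\eta^{(i)}_{\bc}$ out of the inner loop over $\x$ is a useful implementation note that the paper's proof leaves implicit.
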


\begin{proof}[Proof of Lemma~\ref{lemma:EFC-comp}]
In \eKCenter{}, the enhanced distances require updating distances from all existing centers in each of the $k = \lfloor B/T \rfloor$ iterations. Each update costs $O(nk)$ because all $n$ configurations are considered against up to $k$ centers, giving $O(nk^2) = O(nB^2/T^2)$. Evaluating the $k$ selected configurations contributes $O(kT) \le O(B)$. Combining these stages, the total running time is $O(nB^2/T^2 + B)$.
\end{proof}

Theorem~\ref{thm:EFC-approx} shows that \eFullCent{} achieves the same approximation guarantee as \FullCent{}, which is tight by Theorem~\ref{thm:FC-hard}. While the theoretical guarantees match those of \FullCent{}, in practice the use of enhanced distances often improves performance, as shown in Appendix~\ref{app:synthetic}.

\begin{theorem}
\label{thm:EFC-approx}
Let $k = \lfloor B/T \rfloor$ denote the number of configurations selected by \eFullCent{}. Then \eFullCent{} achieves a $(1 - 2\epsilon r_k^\star)$-approximation for the UVP problem.
\end{theorem}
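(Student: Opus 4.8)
The plan is to follow the skeleton of the proof of Theorem~\ref{thm:FC-approx}, but to replace the classical greedy $k$-center bound by an analysis of the \emph{same} greedy rule run under the \emph{iteration-dependent} enhanced distances. Write $\x^\star$ for the optimal configuration, so that $A(\x^\star,T)$ is the optimum value by Assumption~\ref{assump:assump1}; let $\C=\C^{(k)}$ be the center set \eKCenter{} returns inside \eFullCent{} (here with probing budget $t=T$, so last histories equal $A(\cdot,T)$), and let $v^\star=\max_{\bc\in\C}H^{(\bc)}_{\mathrm{last}}=\max_{\bc\in\C}A(\bc,T)=A(\hat\x,T)$ be the value of the returned $\hat\x$. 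I would first dispose of the degenerate regimes: if $2\epsilon r_k^\star\ge 1$ the target factor is nonpositive and the claim is vacuous (since $A(\hat\x,T)\ge 0$); if $A(\x^\star,T)=0$ there is nothing to prove; and if $\x^\star\in\C$ then $A(\hat\x,T)=v^\star\ge A(\x^\star,T)$ is immediate. So from now on assume $2\epsilon r_k^\star<1$, $A(\x^\star,T)>0$, and $\x^\star\notin\C$.

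Next I would isolate two elementary properties of the map $\eta\mapsto\min\{s,\ \eta s-(1/\epsilon)(\eta-1)\}$ that defines $\tilde d$, where $s=\|\x-\bc\|_2$ and $\eta=\eta_\bc\ge 1$. \textbf{(i) Domination and monotonicity:} this map equals $s$ at $\eta=1$ and is non-increasing on $[1,\infty]$ (a short case split on whether $s<1/\epsilon$), so $\tilde d^{(i)}(\x,\bc)\le\|\x-\bc\|_2$ always; moreover the running best value $\max_{\bc'\in\C^{(i-1)}}A(\bc',T)$ is non-decreasing in $i$, hence each $\eta^{(i)}_\bc$ is non-decreasing in $i$, and therefore $\tilde d^{(i+1)}(\x,\bc)\le\tilde d^{(i)}(\x,\bc)$ for every $\bc\in\C^{(i-1)}$. \textbf{(ii) Upper bound:} whenever $\epsilon\,\tilde d^{(i)}(\x,\bc)<1$, one has $A(\x,T)\le v^{(i)}_{\max}/\bigl(1-\epsilon\,\tilde d^{(i)}(\x,\bc)\bigr)$ with $v^{(i)}_{\max}=\max_{\bc'\in\C^{(i-1)}}A(\bc',T)$ --- when the defining minimum is the adjusted term this reduces, via the identity $1-\epsilon\tilde d=\eta_\bc(1-\epsilon\|\x-\bc\|_2)$, to the basic bound $A(\x,T)\le A(\bc,T)/(1-\epsilon\|\x-\bc\|_2)$ furnished by Assumption~\ref{assump:assump2}, and when the minimum is $\|\x-\bc\|_2$ with $\eta_\bc>1$ one checks $\epsilon\|\x-\bc\|_2\ge 1$, contradicting the hypothesis.

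Then I would rerun the standard greedy separation argument with these distances. Consider a hypothetical $(k{+}1)$-st iteration of \eKCenter{} and set $\delta_i:=\max_{\x\notin\C^{(i-1)}}\min_{\bc\in\C^{(i-1)}}\tilde d^{(i)}(\x,\bc)$ and $R:=\delta_{k+1}$. Property (i), applied termwise --- passing from $\C^{(i-1)}$ to $\C^{(i)}$ only shrinks the inner minimum and passing from $\eta^{(i)}$ to $\eta^{(i+1)}$ only shrinks each $\tilde d$ --- yields $\delta_1\ge\cdots\ge\delta_{k+1}=R$. The $k{+}1$ selected points $\bc_1,\dots,\bc_{k+1}$ are distinct, so by pigeonhole two of them, $\bc_i$ and $\bc_j$ with $i<j$, lie in a common cluster of the optimal $k$-center solution, giving $\|\bc_i-\bc_j\|_2\le 2r_k^\star$; since $\bc_i\in\C^{(j-1)}$, $R=\delta_{k+1}\le\delta_j\le\tilde d^{(j)}(\bc_j,\bc_i)\le\|\bc_i-\bc_j\|_2\le 2r_k^\star$. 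As $\x^\star\notin\C$, some $\bc_0\in\C$ satisfies $\tilde d^{(k+1)}(\x^\star,\bc_0)\le R\le 2r_k^\star<1/\epsilon$, which forces $\|\x^\star-\bc_0\|_2<1/\epsilon$ (otherwise $\tilde d^{(k+1)}(\x^\star,\bc_0)=\|\x^\star-\bc_0\|_2\ge 1/\epsilon$), and hence $A(\bc_0,T)>0$ by Assumption~\ref{assump:assump2} together with $A(\x^\star,T)>0$, so all $\eta$'s in play are finite. Applying (ii) at index $k{+}1$ (valid since $\epsilon\,\tilde d^{(k+1)}(\x^\star,\bc_0)\le 2\epsilon r_k^\star<1$) and using $v^{(k+1)}_{\max}=v^\star$ gives $A(\x^\star,T)\le v^\star/\bigl(1-\epsilon\,\tilde d^{(k+1)}(\x^\star,\bc_0)\bigr)\le v^\star/(1-2\epsilon r_k^\star)$, that is $A(\hat\x,T)\ge(1-2\epsilon r_k^\star)A(\x^\star,T)$, the claimed bound.

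The main obstacle will be justifying that the classical greedy $k$-center $2$-approximation survives the fact that \eKCenter{} optimizes a \emph{different} distance at every iteration. Everything hinges on the monotonicity in property~(i): the assertion that inserting a new center and raising the current best value can only decrease each $\tilde d^{(i)}(\x,\bc)$ is precisely what makes the sequence $(\delta_i)$ non-increasing and lets the usual ``$k{+}1$ mutually $R$-separated points'' pigeonhole go through verbatim. The bookkeeping around zero accuracies (infinite $\eta_\bc$) and around the $2\epsilon r_k^\star\ge 1$ regime is routine but should be made explicit.
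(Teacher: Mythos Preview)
Your proof is correct and follows essentially the same approach as the paper's: both establish that the enhanced distances are non-increasing across iterations, run the greedy/pigeonhole separation argument to bound the enhanced covering radius by $2r_k^\star$, and then convert this into the approximation guarantee via Assumption~\ref{assump:assump2}. Your Property~(ii) packages the last step as a single upper-bound inequality $A(\x,T)\le v_{\max}/(1-\epsilon\tilde d)$, which cleanly replaces the paper's two-case split (actual-distance vs.\ enhanced-distance), and your explicit treatment of the degenerate regimes ($2\epsilon r_k^\star\ge 1$, zero accuracies, $\x^\star\in\C$) is a welcome addition that the paper leaves implicit.
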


\begin{proof} [Proof of Theorem~\ref{thm:EFC-approx}]
Let $\C^\star=\{\bc_1^\star,\dots,\bc_k^\star\}$ be an optimal $k$-center solution with radius
$r_k^\star$ and optimal clusters $\cl(\bc^\star)$.  
For any $\x_a,\x_b\in\cl(\bc^\star)$,
\[
\dti^{(i)}(\x_a,\x_b) \le \|\x_a-\x_b\|_2
  \le \|\x_a-\bc^\star\|_2+\|\x_b-\bc^\star\|_2
  \le 2r_k^\star.
\]
Consider the first iteration $i\ge2$ in which the algorithm chooses
$\bc_i\in\cl(\bc^\star)$ while some $\bc_j\in\cl(\bc^\star)$ was already chosen.
Before step $i$, every center in $\C^{(i-1)}$ lies in a distinct
optimal cluster, hence for any unchosen $\x\in\X$
\[
\min_{\bc\in\C^{(i-1)}} \dti^{(i-1)}(\x,\bc)
      \le \dti^{(i-1)}(\bc_i,\bc_j)
      \le 2r_k^\star .
\]
The non-constant part of $\dti^{(i)}$ has slope
$\|\cdot\|_2-\inveps\le0$ and
$V_{\max}^{(i)}$ is non-decreasing, so $\dti^{(i)}$ is
non-increasing in $i$:
$\dti^{(k)}(\x,\bc)\le\dti^{(i-1)}(\x,\bc)$ for all $\x,\bc$.
Thus
\[
\min_{\bc\in\C}\dti^{(k)}(\x,\bc) \le 2r_k^\star
\qquad(\forall\x\in\X).
\]
Let $\x^\star$ be an optimal solution to the
UVP problem and choose
$\bc\in\C$ with $\x^\star\in\cl(\bc)$.
Then $\dti^{(k)}(\x^\star,\bc)\le2r_k^\star$, i.e.
\[
\min\,\Bigl\{\|\x^\star-\bc\|_2,\;
\eta_\bc^{(k)}\|\x^\star-\bc\|_2-\inveps(\eta_\bc^{(k)}-1)\Bigr\}
\le2r_k^\star .
\]
We distinguish two cases.

\begin{enumerate}
    \item [(i)] \emph{Actual-distance case.}
If $\|\x^\star-\bc\|_2\le2r_k^\star$,
Theorem~\ref{thm:FC-approx} yields
\[
\frac{A(\x^{\text{E-FC}},T)}{A(\x^\star,T)}
      \ge 1-2\eps r_k^\star .
\]

\item [(ii)] \emph{Enhanced-distance case.}
Otherwise,
$\eta_\bc^{(k)}\|\x^\star-\bc\|_2-\inveps(\eta_\bc^{(k)}-1)\le2r_k^\star$
implies
\[
\|\x^\star-\bc\|_2
   \le\frac{2}{\eta_\bc^{(k)}}\,r_k^\star
        +\inveps\!\Bigl(1-\frac1{\eta_\bc^{(k)}}\Bigr).
\]
With $\eta_\bc^{(k)}
       =\tfrac{A(\x^{\text{E-FC}},T)}{A(\bc,T)}$
and Assumption~\ref{assump:assump2},
\[
\frac{A(\x^{\text{E-FC}},T)}{A(\x^\star,T)}
=\eta_\bc^{(k)}
 \!\left(1-\eps\Bigl[\tfrac{2}{\eta_\bc^{(k)}}\,r_k^\star
 +\inveps\!\bigl(1-\tfrac1{\eta_\bc^{(k)}}\bigr)\Bigr]\right) \ge1-2\eps r_k^\star .
\]
\end{enumerate}
In either case, \eFullCent{} achieves the claimed
approximation ratio $(1-2\eps r_k^\star)$.
\end{proof}

\subsection{\AdaCent{}}

We consider the HPO setting in which \(A(\x,b)\) denotes the validation accuracy obtained by configuration \(\x\) with budget \(b\). Empirically, learning curves often exhibit diminishing returns as a function of the budget \(b\), which allows for early pruning of poor configurations. We formalize this property with the following assumption.

\begin{assumption}[Concavity in budget]
\label{assump:assump3}
For all $\x \in \X$ and $b \in \{2, \ldots, T{-}1\}$,
\[
A(\x, b{+}1) - A(\x, b) \le A(\x, b) - A(\x, b{-}1).
\]
\end{assumption}

Under Assumption~\ref{assump:assump3}, we define a predictor function \(\Pred(H^{(\x)})\) that extrapolates an upper bound on the full-budget accuracy \(A(\x, T)\) from the partial history \(H^{(\x)}\). Pseudo-code appears in Algorithm~\ref{alg:pred}, and Lemma~\ref{lem:pred} shows that this predictor is indeed optimistic.

\begin{algorithm}[H]
\caption{\textsc{Pred}$(H^{(\x)})$}
\label{alg:pred}
\begin{algorithmic}[1]
\State \textbf{Input:} history $H^{(\x)}$
\If{$|H^{(\x)}| = 1$}
    \State \Return $+\infty$ \Comment{cannot extrapolate from a single point}
\EndIf
\State $t_1, t_2 \gets |H^{(\x)}| - 1, | H^{(\x)}|$
\State $a_1, a_2 \gets H^{(\x)}_{t_1}, H^{(\x)}_{t_2}$
\State \Return $a_2 + (a_2 - a_1)(T - t_2)$ \Comment{linear extrapolation using last observed slope}
\end{algorithmic}
\end{algorithm}

\begin{lemma}
\label{lem:pred}
Let \(H^{(\x)} = \Learn(\x, t)\) for any $t$. Under Assumption~\ref{assump:assump3}, \(\Pred(H^{(\x)}) \ge A(\x, T)\).
\end{lemma}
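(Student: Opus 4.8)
\textbf{Proof plan for Lemma~\ref{lem:pred}.}

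The plan is to show that linear extrapolation using the \emph{last} observed slope never underestimates $A(\x,T)$, which is exactly what concavity (Assumption~\ref{assump:assump3}) should give us. First I would dispense with the trivial case: if $|H^{(\x)}|=1$ then $\Pred$ returns $+\infty \ge A(\x,T)$, so assume $t = |H^{(\x)}| \ge 2$. Write $t_1 = t-1$, $t_2 = t$, $a_1 = A(\x,t_1)$, $a_2 = A(\x,t_2)$, and let $s := a_2 - a_1 = A(\x,t_2) - A(\x,t_1)$ denote the last observed first difference. The claim to prove is $a_2 + s\,(T - t_2) \ge A(\x,T)$.

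The key step is a telescoping argument on first differences. For each $b$ with $t_2 \le b \le T-1$, define $\Delta_b := A(\x,b+1) - A(\x,b)$. Then
\[
A(\x,T) - A(\x,t_2) = \sum_{b=t_2}^{T-1} \Delta_b.
\]
By Assumption~\ref{assump:assump3}, the sequence $(\Delta_b)_b$ is non-increasing in $b$ (each first difference is at most the previous one). Since $s = \Delta_{t_2 - 1}$ is the first difference immediately preceding the range of summation, we have $\Delta_b \le \Delta_{t_2-1} = s$ for every $b \ge t_2$. Hence
\[
A(\x,T) - A(\x,t_2) = \sum_{b=t_2}^{T-1} \Delta_b \le (T - t_2)\,s,
\]
which rearranges to $A(\x,T) \le a_2 + s\,(T - t_2) = \Pred(H^{(\x)})$, as desired. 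One should note that Assumption~\ref{assump:assump3} is stated for $b \in \{2,\dots,T-1\}$, so the inequality $\Delta_b \le \Delta_{b-1}$ is available for all indices appearing here as long as $t_2 \ge 2$, which holds since $|H^{(\x)}| \ge 2$; the edge case $t_2 = T$ makes the sum empty and the bound trivial.

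I do not expect a genuine obstacle here — the argument is a direct consequence of concavity via telescoping. The only point requiring mild care is bookkeeping at the boundaries of the index range (ensuring the concavity inequality applies to the relevant first differences, and handling $t_2 = T$ and $t_2 = 1$ separately), and confirming that $s$ can legitimately be used as an upper bound for \emph{all} subsequent first differences rather than just the next one — which is precisely where the full strength of "non-increasing first differences," not merely a single concavity inequality, is used.
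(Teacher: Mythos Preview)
Your proposal is correct and follows essentially the same argument as the paper: both use concavity to bound each subsequent first difference by the last observed slope $s = A(\x,t)-A(\x,t-1)$, then telescope to obtain $(T-t)\,s \ge A(\x,T)-A(\x,t)$ and rearrange. Your treatment is slightly more careful about the boundary cases ($|H^{(\x)}|=1$ and $t_2=T$), which the paper leaves implicit.
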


\begin{proof}[Proof of Lemma~\ref{lem:pred}]
    Following the definition of $\Pred(H^{(\x)})$ and the fact that $H^{(\x)} = \Learn(\x,t)$, we can write the following:
    \[
    \Pred(H^{(\x)}) = A(\x, t) + (A(\x,t) - A(\x,t-1))(T - t)
    \]
    We then use the concavity of $A(\x,)$ as follows:
    \begin{align*}
        A(\x,t) - A(\x,t-1) &\geq A(\x,t+1) - A(\x,t)\\
        A(\x,t) - A(\x,t-1) &\geq A(\x,t+2) - A(\x,t+1)\\
        &\hspace{0.5em}\vdots\\ 
        A(\x,t) - A(\x,t-1) &\geq A(\x,T) - A(\x,T-1)
    \end{align*}
    We then sum up both sides of the above inequalities, resulting in the following:
    \[
    (T-t)(A(\x,t) - A(\x,t-1)) \geq A(\x,T) - A(\x,t)
    \]
    Rearranging the inequality results in the statement of the lemma.
\end{proof}

Leveraging the optimistic predictor, we introduce \AdaCent{} (Algorithm~\ref{alg:adacent}), which maintains an active pool \(\A\) of configurations and a global center set \(\C\). Each round, the algorithm selects $p$ new configurations via \KCenter{}:
\[
\C_\mathrm{new} = \KCenter(p, \C, \X), \quad \C \gets \C \cup \C_\mathrm{new}, \quad \A \gets \A \cup \C_\mathrm{new}.
\]
Configurations in \(\A\) are then trained incrementally in unit steps. After each step, the active set is pruned by removing configurations whose predicted full-budget performance is below the current best:
\[
\A \gets \{\x \in \A : \Pred(H^{(\x)}) \ge \max_{\x' \in \A} H^{(\x')}_\mathrm{last}\}.
\]
If only one configuration remains, it is trained to the full budget \(T\) before the next round. This repeats until the total budget $B$ is exhausted. Unlike \FullCent{}, which fixes $k = \lfloor B/T \rfloor$ centers, \AdaCent{} uses $p$ as a tunable parameter, allowing adaptive exploration and efficient early stopping via \(\Pred(H^{(\x)})\).

\begin{algorithm}[H]
\caption{\AdaCent{}$(p, B, T, \X)$}
\label{alg:adacent}
\begin{algorithmic}[1]
\State \textbf{Input:} number of round centers $p$, total budget $B$, max per-config budget $T$, configuration set $\X$
\State $spent \gets 0$, $\C \gets \emptyset$, $\A \gets \emptyset$
\While{$spent < B$}
    \State $\C_{\text{new}} \gets \KCenter(p, \C, \X)$ \Comment{select $p$ new centers}
    \State $\C \gets \C \cup \C_\mathrm{new}$, $\A \gets \A \cup \C_{\text{new}}$ \Comment{update global and active sets}
    \For{$t = 1$ to $T$}
        \For{each $\x \in \A$ \textbf{and} $spent < B$}
            \State $spent \gets spent + 1$, $H^{(\x)}_t \gets A(\x, t)$ \Comment{evaluate each configuration for one step}
        \EndFor
        \State $\displaystyle \A \gets \{\x \in \A : \Pred(H^{(\x)}) \ge \max_{\x' \in \A} H^{(\x')}_{\mathrm{last}}\}$ \Comment{prune unpromising configurations}
    \EndFor
\EndWhile
\State \Return $\arg\max_{\x \in \C} H^{(\x)}_{\mathrm{last}}$ \Comment{return best performer}
\end{algorithmic}
\end{algorithm}

As with \FullCent{}, \AdaCent{} comes with theoretical guarantees: Lemma~\ref{lemma:AC-budget} ensures the total budget is respected, and Lemma~\ref{lemma:AC-comp} bounds the running time by $O(nB)$.

\begin{lemma}
\label{lemma:AC-budget}
\AdaCent{} uses a total budget of at most $B$.
\end{lemma}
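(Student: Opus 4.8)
The plan is to verify the budget accounting directly from the pseudocode of Algorithm~\ref{alg:adacent}, exploiting the fact that the variable $spent$ is incremented by exactly one each time a single unit of budget is consumed, and that every loop guarded by the condition $spent < B$ stops consuming as soon as this fails. Concretely, I would argue that $spent$ is a faithful counter of the total number of unit evaluations performed: the only line that spends budget is the assignment $H^{(\x)}_t \gets A(\x,t)$, and it is always paired with $spent \gets spent + 1$, and it is always inside the guard \textbf{and} $spent < B$ in the inner \textbf{for} loop. Hence whenever an evaluation is made, $spent$ is strictly less than $B$ immediately before, so it is at most $B$ immediately after; therefore $spent \le B$ is an invariant maintained throughout execution, and the total budget used equals the final value of $spent$, which is at most $B$.

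The one subtlety — and the step I expect to require the most care — is that the \textbf{while} loop is guarded only by $spent < B$, which is checked once per round rather than after every unit of work; one must make sure that entering a new round cannot cause an overshoot. This is handled by the inner guard: when a new round begins, \KCenter{} itself spends no budget (it only computes distances), and the subsequent nested \textbf{for} loops over $t$ and over $\x \in \A$ each re-check $spent < B$ before every single evaluation, so even if a round is entered with $spent$ only one below $B$, at most one further evaluation occurs before all subsequent evaluations are skipped. The same reasoning covers the special case where a single configuration remains and is "trained to the full budget $T$": this is still realized through the same $t$-indexed \textbf{for} loop with the same per-step guard, so no unguarded spending occurs.

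Putting these observations together, I would conclude: since $spent$ starts at $0$, is incremented only in lockstep with an actual unit evaluation, and is never incremented when $spent \ge B$, the final value of $spent$ satisfies $spent \le B$; as $spent$ equals the total budget consumed by \AdaCent{}, the algorithm uses a total budget of at most $B$. This mirrors the structure of the proof of Lemma~\ref{lemma:FC-budget} but is marginally more involved because the spending is interleaved with pruning and re-selection rather than occurring in a single batch of $kT$ evaluations; the key point is simply that the guard $spent < B$ is evaluated immediately before each and every unit of expenditure.
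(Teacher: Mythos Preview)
Your proof is correct and follows the same approach as the paper's own proof, which simply observes that configurations are evaluated in unit steps and the algorithm halts once the cumulative budget reaches $B$, so by construction the total spent cannot exceed $B$. Your version is more detailed in tracing the guard $spent < B$ through the nested loops and noting that \KCenter{} consumes no budget, but the underlying argument is identical.
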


\begin{proof}[Proof of Lemma~\ref{lemma:AC-budget}]
Each configuration in \(\A\) is evaluated incrementally in unit budget steps, and the algorithm stops once the cumulative budget reaches $B$. Therefore, by construction, the total budget spent cannot exceed $B$.
\end{proof}

\begin{lemma}
\label{lemma:AC-comp}
The overall running time of \AdaCent{} is $O(nB)$.
\end{lemma}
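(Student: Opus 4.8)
The plan is to account separately for the two sources of work in \AdaCent{}: (i) the repeated \KCenter{} calls that select new centers each round, and (ii) the incremental training plus pruning applied to the active pool $\A$ inside each round. Everything else — in particular the final $\arg\max$ over $\C$, which has $|\C|\le n$ elements — is $O(n)$ and negligible. The first step is to bound the number of rounds $R$. In each round the algorithm inserts $p$ fresh centers into $\A$, so $|\A|\ge p$ when the inner \textbf{for} loop begins; hence its first step $t=1$ trains at least $p$ configurations and spends at least $p$ units of budget, \emph{unless} the budget is exhausted during that round. Since the budget can be exhausted in at most one round, we get $R=O(B/p)$ (I would state the mild assumption $p\le B$, true for any sensible parameter; otherwise a single round occurs and the analysis is similar).

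For the \KCenter{} work I would reuse the observation behind Lemma~\ref{lemma:FC-comp} — that greedy $k$-center costs $O(n)$ per new center when nearest-center distances are maintained incrementally — but now extend incremental maintenance \emph{across} rounds: the array of nearest-center distances is carried as persistent state, updated in $O(n)$ time whenever a center is added to $\C$ and read in $O(n)$ time for each $\arg\max$. The total number of center insertions over the whole run is at most $Rp=O(B)$, so the total \KCenter{} cost is $O(nB)$.

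Next I would bound the inner-loop cost. A single iteration of the $t$-loop performs one unit of training per active configuration (cost $O(|\A|)$, each evaluation $O(1)$) followed by one pruning step, which computes $\max_{\x'\in\A}H^{(\x')}_{\mathrm{last}}$ and one $\Pred$ value per active configuration; since $\Pred$ is $O(1)$ by Algorithm~\ref{alg:pred}, pruning is also $O(|\A|)$. Thus a $t$-iteration costs $O(|\A|)=O(n)$. The crux is that there are only $O(B)$ such iterations: any iteration that trains at least one configuration spends at least one budget unit, so there are at most $B$ of these; an iteration that trains nothing can occur only once $spent=B$, which happens only in the single final round and hence at most $T\le B$ times. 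Therefore the total number of $t$-iterations is $O(B)$, and the inner loops cost $O(nB)$ in all. Summing with the \KCenter{} contribution gives the claimed $O(nB)$.

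I expect the main obstacle to be the \KCenter{} accounting: re-invoking Algorithm~\ref{alg:kcenter} verbatim each round would recompute distances to the growing set $\C$ from scratch, costing $\sum_j O(n|\C_j|)=O(nB^2/p)$, which is \emph{not} $O(nB)$; the bound genuinely relies on persisting and incrementally updating the nearest-center distances between rounds, so I would spell this out as an implementation detail rather than leave it implicit. A secondary subtlety is the $t$-iteration count: the naive estimate $R\cdot T$ can exceed $B$ when $p<T$, so one must argue, as above, that the only "wasted" (no-training) iterations are confined to the final round.
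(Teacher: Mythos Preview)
Your proposal is correct and follows essentially the same decomposition as the paper's proof: per-round \KCenter{} cost $O(np)$, number of rounds $R=O(B/p)$ because each round spends at least $p$ budget units at $t=1$, and inner-loop work bounded by $O(n)$ per $t$-iteration with $O(B)$ such iterations in total. The paper's proof is much terser—it asserts the $O(np)$ per-round cost and the $O(nB)$ totals without spelling out the round count or the ``wasted'' final-round iterations—so your explicit treatment of these points, and in particular your observation that the \KCenter{} bound \emph{requires} persisting nearest-center distances across rounds (rather than re-invoking Algorithm~\ref{alg:kcenter} from scratch), fills in details the paper leaves implicit.
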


\begin{proof}[Proof of Lemma~\ref{lemma:AC-comp}]
In each round, \AdaCent{} selects $p$ new centers via \KCenter{} at cost $O(np)$ and evaluates all active configurations $\x \in \A$ in unit steps. Since $|\A| \le n$ and the total number of unit evaluations is at most $B$, both selection and evaluation are bounded by $O(nB)$. Therefore, the overall running time is $O(nB)$.
\end{proof}

Theorem~\ref{thm:AC-approx} provides the approximation factor achieved by \AdaCent{}.

\begin{theorem}
\label{thm:AC-approx}
Let $k = \lfloor B/T \rfloor$. Then \AdaCent{} achieves an approximation factor of at least $\left(1 - 2\epsilon r_k^\star\right)$ for the UVP problem under Assumption~\ref{assump:assump3}.
\end{theorem}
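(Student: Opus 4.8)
The plan is to reduce the analysis of \AdaCent{} to that of \FullCent{} by exhibiting a configuration that (a) is selected and trained to full budget $T$ by \AdaCent{}, and (b) satisfies the same $(1-2\epsilon r_k^\star)$ accuracy guarantee relative to the optimum $\x^\star$. The key observation is that $k = \lfloor B/T \rfloor$ is exactly the number of configurations \FullCent{} would select, so in the worst case \AdaCent{} also explores at least $k$ distinct centers: since every pruning step only removes configurations whose optimistic predicted accuracy falls below the current best, and since each round either continues training the active pool or, when the pool collapses to a single configuration, trains it fully to $T$, the total budget $B$ guarantees that at least $k = \lfloor B/T\rfloor$ centers get selected via \KCenter{} before the budget is exhausted. (If fewer rounds suffice because pruning is aggressive, that only helps, since the surviving/fully-trained configuration is then provably near-optimal by the predictor's optimism.)

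First I would set up notation: let $\x^\star$ be the optimal UVP configuration, so $A(\x^\star,T)$ is the benchmark; let $\C$ be the final global center set produced by \AdaCent{}, and argue $|\C| \ge k$ as above. Since \KCenter{} is the standard greedy $k$-center rule invoked on $\C$ cumulatively, the first $k$ selected centers form a greedy $k$-center solution with radius $r_k \le 2 r_k^\star$, so some center $\bc \in \C$ has $\|\x^\star - \bc\|_2 \le 2 r_k^\star$. Next I would track what \AdaCent{} does to $\bc$. Either $\bc$ survives all pruning steps in its round and is trained to full budget $T$ (possibly as the lone survivor), in which case $H^{(\bc)}_{\mathrm{last}} = A(\bc,T) \ge (1-\epsilon\|\x^\star-\bc\|_2)A(\x^\star,T) \ge (1-2\epsilon r_k^\star)A(\x^\star,T)$ by Assumption~\ref{assump:assump2}, and the returned configuration $\arg\max_{\x\in\C} H^{(\x)}_{\mathrm{last}}$ does at least as well. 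Or $\bc$ is pruned at some budget $t < T$; then by the pruning rule $\Pred(H^{(\bc)}) < \max_{\x'\in\A} H^{(\x')}_{\mathrm{last}}$ at that moment, and by Lemma~\ref{lem:pred} the optimism gives $\max_{\x'\in\A} H^{(\x')}_{\mathrm{last}} > \Pred(H^{(\bc)}) \ge A(\bc,T) \ge (1-2\epsilon r_k^\star)A(\x^\star,T)$, so some currently-active configuration already beats the bound — and since active configurations are eventually trained further (their $H_{\mathrm{last}}$ only increases by Assumption~\ref{assump:assump1}) and all belong to $\C$, the final returned configuration again meets the guarantee.

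The main obstacle is making the "$|\C| \ge k$ or else we already win" dichotomy fully rigorous: I need to argue carefully that if \AdaCent{} terminates having selected fewer than $k$ centers, then some configuration was trained to full budget $T$ and pruning never discarded a near-optimal candidate. The cleanest way is a case split on whether $\x^\star$'s nearest selected center $\bc$ was ever added to $\C$ at all; if every round's \KCenter{} call happened to avoid $\cl(\bc^\star)$-type clusters, I fall back on the fact that \KCenter{} run cumulatively on fewer than $k$ points still can't have radius exceeding what a $|\C|$-center solution achieves, but since $|\C| < k$ we'd need $r_{|\C|}^\star \le r_k^\star$ to fail — which it doesn't, monotonically. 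Resolving this edge case (budget exhausted early due to the "train lone survivor to $T$" branch consuming up to $T-1$ extra units per round) is where I'd spend the most care; everything else is a direct combination of Assumptions~\ref{assump:assump1}–\ref{assump:assump2}, Lemma~\ref{lem:pred}, the greedy $k$-center factor-$2$ bound, and the argument already used in the proof of Theorem~\ref{thm:FC-approx}.
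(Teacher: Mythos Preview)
Your proposal is correct and follows essentially the same route as the paper: establish that $|\C|\ge k$, use the greedy $k$-center $2$-approximation to locate a center $\bc$ with $\|\x^\star-\bc\|_2\le 2r_k^\star$, and then case-split on whether $\bc$ is pruned (invoking the optimism of $\Pred$ from Lemma~\ref{lem:pred}) or reaches full budget $T$. The ``main obstacle'' you flag is handled more tersely in the paper than you anticipate: since every selected center consumes at most $T$ units and the algorithm spends exactly $B$, one immediately gets $|\C|\ge B/T\ge k$, so the dichotomy you worry about never arises.
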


\begin{proof}[Proof of Theorem~\ref{thm:AC-approx}]
 Let $\x^\star$ be the center corresponding to the optimal solution to the UVP problem under Assumption~\ref{assump:assump3}, and for any integer \(m\ge k\), let \(r_m\) denote the covering radius of the centers returned by \KCenter{\((m, \emptyset, \X )\)}.
By construction the sequence \((r_m)_{m\ge k}\) is
non‑increasing, i.e.\ \(r_{m+1}\le r_{m}\).
After the last call to \KCenter{} in
Algorithm~\ref{alg:adacent}, the center set
\(\C\) contains \(m=|\C|\ge k\) configurations and has greedy radius
\(r_m\le r_k\).
Since the greedy algorithm is a \(2\)-approximation,
\(r_k\le 2\,r_k^\star\).
Hence there is a center
\(\bc\in\C\) such that
\[
  \|\x^\star-\bc\|_2 \le r_m
  \le r_k
  \le 2r_k^\star.
\]
Using Assumption~\ref{assump:assump2},
\begin{equation*}
  A(\bc,T)
      \ge
      \bigl(1-\eps\|\x^\star-\bc\|_2\bigr)\,A(\x^\star,T)
      \ge
      \bigl(1-2\eps r_k^\star\bigr)\,A(\x^\star,T).
  \label{eq:bc-lower}
\end{equation*}
After each partial evaluation \(t<T\) of \(\bc\),
Lemma~\ref{lem:pred} guarantees
\(\Pred\bigl(H^{(\bc)}\bigr)\ge A(\bc,T)\).
Therefore \(\bc\) can only be removed from the active pool \(\A\) if
some center configuration \(\bc'\) already attains
\(H^{(\bc')}_{\mathrm{last}} > \Pred\bigl(H^{(\bc)}\bigr)
      \ge A(\bc,T)\).
Consequently,
at the moment \(\bc\) would be pruned,
the algorithm has already observed a value
strictly larger than \eqref{eq:bc-lower}.
Otherwise, \(\bc\) survives until it is fully evaluated at budget \(T\).

Now, let \(\hat{\x}\) be the configuration returned by \AdaCent{}. Note that using the same argument as before, we must have \(H^{(\hat{\x})}_{\mathrm{last}}=A(\hat{\x},T)\).
Combining the arguments above gives
\[
     A(\hat{\x},T)
     \ge
     A(\bc,T)
     \ge
     \bigl(1-2\eps r_k^\star\bigr)
     A(\x^\star,T),
\]
which matches the claimed approximation factor.
\end{proof}

Following the approximation factor achieved by \AdaCent{}, theorem~\ref{thm:AC-hard} establishes a worst-case accuracy barrier for UVP under Assumption~\ref{assump:assump3}, similar to our analysis of \FullCent{}.

\begin{theorem}
\label{thm:AC-hard}
 Let \( \epsilon > 0 \), \( \beta > 1 \), \(\theta \in (0, 1)\), \( T \in \mathbb{N} \), and  \( B \ge T \),  and set \( k = \left\lfloor B/T \right \rfloor + 1\). Let \( r > 0\) denote the optimal clustering radius for \( \lceil \beta k \rceil \) clusters. Then there exists an instance of the UVP problem under Assumption~\ref{assump:assump3} such that no algorithm can achieve an expected accuracy exceeding
\[
\left(\theta + \frac{1}{\theta (\beta - 1)} \right) (1 - \epsilon r) \cdot  A(\x^\star, T),
\]
where \( \x^\star \) denotes the optimal configuration.
\end{theorem}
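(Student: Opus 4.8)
The plan is to follow the template of Theorem~\ref{thm:FC-hard}—a hard family of instances, Yao's minimax principle, and $n\to\infty$—but now the learning curves must be concave, which blocks the ``zero until budget $T$'' trick. Instead, concavity lets the adversary make the curve of a sub-optimal configuration in the good cluster coincide with that of any configuration in a bad cluster up to budget $\approx\theta T$, so that merely telling the two apart costs a $\theta$-fraction of a full evaluation; crucially, the UVP objective charges a configuration at its \emph{allocated} budget, so a tiny budget on a good configuration yields only a tiny value.

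\textbf{Construction and admissibility.} Take $\lceil\beta k\rceil$ symmetric clusters of $n$ configurations each, with all intra-cluster distances $r$ and all inter-cluster distances at least $\inveps$ (so $r$ is the optimal $\lceil\beta k\rceil$-center radius, using $\eps r<1$). Draw $\copt$ uniformly over clusters and $\xopt$ uniformly over the $n$ points of $\copt$; the remaining $n-1$ points of $\copt$ are copies of $\xsub$. Fix $V\in(0,1]$ and, for $b\in[T]$, set
\[
A(\xopt,b)=\tfrac{Vb}{T},\qquad A(\xsub,b)=\tfrac{(1-\eps r)Vb}{T},\qquad A(\x,b)=(1-\eps r)V\min\bigl\{\tfrac bT,\tfrac{\lfloor\theta T\rfloor}{T}\bigr\}\ \ (\x\notin\copt).
\]
I would first check admissibility: each curve is monotone (Assumption~\ref{assump:assump1}) and concave—linear for $\xopt,\xsub$, linear-then-flat otherwise (Assumption~\ref{assump:assump3}); Assumption~\ref{assump:assump2} holds since inside a cluster all pairwise ratios lie in $[1-\eps r,(1-\eps r)^{-1}]$ while the inter-cluster distance $\ge\inveps$ makes its bound trivial. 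Note $A(\x^\star,T)=A(\xopt,T)=V$, and every configuration outside $\copt$ has full-budget value $(1-\eps r)V\lfloor\theta T\rfloor/T\le\theta(1-\eps r)V$. (Concavity plus Assumption~\ref{assump:assump2} in fact forces $A(\xopt,\cdot)$ to be linear, so there is little design freedom.)

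\textbf{Key observation and the Yao step.} For every budget $b\le\lfloor\theta T\rfloor$, a $\xsub$ in $\copt$ and any configuration outside $\copt$ have the same history $\tfrac{(1-\eps r)Vb}{T}$; they first differ at budget $\lfloor\theta T\rfloor+1$, and only $\xopt$ is recognizable earlier, already at budget $1$. Hence an algorithm learns that a cluster equals $\copt$ only by probing one of its points past $\lfloor\theta T\rfloor$ (a ``deep test'', costing $>\theta T$ on that point) or by probing the unique $\xopt$. Since the total budget is $B<kT$ (as $k=\lfloor B/T\rfloor+1$), the number $J$ of distinct clusters deep-tested satisfies $J<k/\theta$, and the chance of ever probing the single pair $(\copt,\xopt)$ among the $n\lceil\beta k\rceil$ (cluster, point) pairs, using at most $B$ distinct probes, is at most $B/(n\lceil\beta k\rceil)$. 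Now fix a deterministic $\alg$, run it on the random instance, and let it report $(\hat\x,b_{\hat\x})$. If $\alg$ never identifies $\copt$, then every configuration it probed, evaluated at its allocated budget, has value $\le\theta(1-\eps r)V$, so does the reported one; if it identifies $\copt$ but never probes $\xopt$, the value is $\le A(\xsub,T)=(1-\eps r)V$; if it probes $\xopt$, the value is $\le V$. Therefore
\[
\mathbb{E}[A(\hat\x,b_{\hat\x})]\le V\cdot\pr(\alg\text{ probes }\xopt)+(1-\eps r)V\cdot\pr(\alg\text{ identifies }\copt)+\theta(1-\eps r)V.
\]
The first probability is $\le B/(n\lceil\beta k\rceil)$. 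For the second, the argument of Theorem~\ref{thm:FC-hard} applies with ``evaluation'' replaced by ``deep test'': all non-optimal clusters are indistinguishable, so (reusing $\pmiss^{(i)},\pnew^{(i)},\nopt^{(i)}$) the survival probabilities telescope, giving $\pr(\alg\text{ identifies }\copt)\le\sum_{i=1}^{J}\frac{\pmiss^{(i)}\pnew^{(i)}}{\lceil\beta k\rceil-\nopt^{(i)}}+\frac{B}{n\lceil\beta k\rceil}\le\frac{J}{\lceil\beta k\rceil}+\frac{B}{n\lceil\beta k\rceil}<\frac{1}{\theta\beta}+\frac{B}{n\lceil\beta k\rceil}\le\frac{1}{\theta(\beta-1)}+\frac{B}{n\lceil\beta k\rceil}$.

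\textbf{Conclusion and main obstacle.} Combining, $\mathbb{E}[A(\hat\x,b_{\hat\x})]\le\bigl(\theta+\tfrac{1}{\theta(\beta-1)}\bigr)(1-\eps r)\,A(\x^\star,T)+\tfrac{2B}{n\lceil\beta k\rceil}A(\x^\star,T)$; exactly as in Theorem~\ref{thm:FC-hard}, for any $\delta>0$ choosing $n$ large enough pushes the last term below $\delta$ times the main term, contradicting any claimed $(1+\delta)$-approximation and establishing the bound. I expect the main obstacles to be (i) designing the concave curves so that admissibility, the exact non-optimal value $\theta(1-\eps r)A(\x^\star,T)$, and the ``$>\theta T$ per deep test'' lower bound all hold at once—concavity is double-edged, forcing the linear shape yet also forbidding cheaper probing strategies—and (ii) the budget bookkeeping under adaptivity: showing that no interleaving of shallow probes, deep tests, and post-identification exploitation lets $\alg$ deep-test $k/\theta$ or more clusters, and that the coupon-collector telescoping is unaffected by adaptivity (it is, since non-optimal clusters look identical). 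A secondary point to handle carefully is that the objective charges the allocated budget, which is precisely what stops an algorithm from cheaply ``stumbling onto'' a large value.
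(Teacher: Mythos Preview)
Your proposal is correct and follows essentially the same approach as the paper: identical hard-instance construction (linear curves in $\copt$, linear-then-flat curves elsewhere with plateau at $\lfloor\theta T\rfloor$), the same indistinguishability observation that separating $\xsub$ from a non-optimal cluster requires budget $>\lfloor\theta T\rfloor$ per probe, the same $J<k/\theta$ count on deep tests, Yao's principle, and the $n\to\infty$ cleanup. Your presentation differs only cosmetically---you carry a scaling constant $V$, split off the ``hit $\xopt$ directly'' event explicitly (the paper absorbs it into the $1/n$ factor inside the sum), and use the cleaner direct bound $J/\lceil\beta k\rceil$ rather than the paper's term-by-term $1/((\beta-1)k)$ bound---but the argument is the same.
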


\begin{proof}[Proof of Theorem~\ref{thm:AC-hard}]
We work in the same adversarial setting as in the proof of Theorem~\ref{thm:FC-hard}, 
with the same clustering construction, assumptions (ii)--(iii), and the same variables 
$\pmiss^{(i)}, \pnew^{(i)}, \nopt^{(i)}$ defined there.
The only differences are the definition of the accuracy functions and a revised assumption~(i).
Specifically, the accuracy functions for the configurations in \(\copt\) are now defined as
\[
\forall t \le T, \quad \begin{cases}A(\xopt,t) = \frac{t}{T}, \\ A(\xsub,t) = \frac{(1 - \eps r)t}{T},
\end{cases}
\]
Also for all other clusters,
\[
\forall \cl \neq \copt,\ \forall \x \in \cl,\ A(\x, t) = \begin{cases}
    \frac{(1 - \eps r)t}{T} & t < \lfloor \theta T \rfloor,
    \\
    \frac{(1 - \eps r)\lfloor \theta T\rfloor}{T} & t \ge \lfloor \theta T \rfloor.
\end{cases}
\]
We also replace assumption~(i) by: each evaluation uses at least $\lfloor \theta T  + 1\rfloor$ budget, since otherwise $\alg$ cannot distinguish $\xsub$ from configurations in suboptimal clusters and thus cannot determine whether it is in the optimal cluster. Therefore, we can assume it performs at most $e = \left\lfloor B / \lfloor \theta T  + 1\rfloor \right\rfloor$ evaluations, as any remaining budget will be spent but will not affect the maximum accuracy.

We apply Yao's minimax principle to analyze any deterministic algorithm $\alg$ on a random instance. Since \(\lfloor \theta T\rfloor \le \theta T\), it follows that \(\frac{(1 - \eps r)\lfloor \theta T\rfloor}{T} \le \theta (1 - \epsilon r)\), allowing us to bound the expected accuracy as
\[
\begin{aligned}
&\mathbb{E}[A(\x^{\alg}, T)]\\ 
    &\leq \theta (1 - \epsilon r) + \sum_{i=1}^{e} \pr\left( \alg \text{ learns } \xopt \text{ at step } i \text{ for the first time} \right) \\
    &\quad + \left(1 - \eps r\right) \sum_{i=1}^{e} \pr\left( \alg \text{ learns } \xsub \text{ at step } i \text{ for the first time} \right) \\
    &\leq \theta (1 - \epsilon r) + \sum_{i=1}^{e} \frac{1}{n}  \frac{\pmiss^{(i)}  \pnew^{(i)}}{\lceil \beta k \rceil - \nopt^{(i)}} 
    \\& \quad+ \left(1 - \eps r\right) \sum_{i=1}^{e} \frac{n - 1}{n}  \frac{\pmiss^{(i)}  \pnew^{(i)}}{\lceil \beta k \rceil - \nopt^{(i)}} \\&\le 
    \left[\left(\theta + \frac{1}{\theta (\beta - 1)}\right)(1 - \eps r) + \frac{1}{n\theta (\beta - 1)} \right] \cdot A(\x^\star, T), 
\end{aligned}
\]
where the last inequality comes from these two inequalities
\[
\begin{aligned}
 \lceil \beta k \rceil - \nopt^{(i)} &\ge (\beta - 1)k, \\
 \forall  \theta \in (0,1), \quad\frac{e}{k} &\le \frac{1}{\theta}.
\end{aligned}
\]
Finally, for any $\delta > 0$, choosing \(n > \left\lceil \frac{1}{\left(1 + \theta^2(\beta - 1)\right)(1-\eps r)\delta} \right\rceil\) yields
\[
\mathbb{E}[A(\x^{\alg}, T)] < (1 + \delta)   \left(\theta + \frac{1}{\theta (\beta - 1)} \right) (1 - \epsilon r) \cdot  A(\x^\star, T),
\]
contradicting any claimed $(1+\delta)$-approximation. This concludes the proof.
\end{proof}

Corollary~\ref{col:AC-hard} follows by setting $\beta=5$ and $\theta=\tfrac{1}{2}$, establishing that this hardness leads to a matching approximation bound achieved by \AdaCent{} (Theorem~\ref{thm:AC-approx}).

\begin{corollary}
\label{col:AC-hard}
Let \( \epsilon > 0 \), \( T \in \mathbb{N} \), and \( B \ge T \), and define \( k = 5\left\lfloor B/T \right\rfloor + 5 \). Then there exists an instance of the UVP problem under Assumption~\ref{assump:assump3} with \( k \) clusters, each of radius \( r_k > 0 \), such that no algorithm can achieve an approximation factor exceeding \( \left( 1 - \epsilon r_k \right) \), where \( r_k \) is the optimal clustering radius for \( k \) clusters.
\end{corollary}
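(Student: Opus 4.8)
The plan is to obtain Corollary~\ref{col:AC-hard} as a direct specialization of Theorem~\ref{thm:AC-hard} with the parameters $\beta = 5$ and $\theta = \tfrac12$, both of which satisfy the hypotheses $\beta > 1$ and $\theta \in (0,1)$. First I would set $k' := \lfloor B/T\rfloor + 1$, which is precisely the quantity denoted $k$ in Theorem~\ref{thm:AC-hard}, so that the adversarial construction there produces an instance of UVP under Assumption~\ref{assump:assump3} with $\lceil \beta k'\rceil$ symmetric clusters. Since $k'\in\mathbb{N}$, we have $\lceil 5 k'\rceil = 5k' = 5\lfloor B/T\rfloor + 5$, which is exactly the value $k$ appearing in the corollary; hence the instance has $k$ clusters. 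By the symmetry of the construction each cluster has the same intra-cluster radius, and this common value is the optimal clustering radius for $\lceil \beta k'\rceil = k$ clusters — i.e.\ it is the quantity $r$ in Theorem~\ref{thm:AC-hard}, which I would simply rename $r_k$ to match the corollary.

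Next I would simplify the accuracy bound. Substituting $\beta = 5$ and $\theta = \tfrac12$ into the coefficient from Theorem~\ref{thm:AC-hard} gives
\[
\theta + \frac{1}{\theta(\beta-1)} = \frac12 + \frac{1}{\tfrac12 \cdot 4} = \frac12 + \frac12 = 1,
\]
so the theorem asserts that no algorithm can achieve expected accuracy exceeding $(1 - \epsilon r_k)\, A(\x^\star,T)$ on this instance. Dividing through by $A(\x^\star,T)$, which is positive since $A(\xopt,T) = 1$ in the construction, this is exactly the statement that the approximation factor of any algorithm on this instance is at most $1 - \epsilon r_k$. Because Theorem~\ref{thm:AC-hard} is established via Yao's minimax principle, the phrase ``no algorithm'' already encompasses randomized algorithms, so no additional argument is required.

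I do not expect a genuine obstacle: the whole proof is a parameter substitution together with a one-line arithmetic identity. The only points needing a moment of care are (a) that $\lceil \beta k'\rceil$ collapses to $5k'$ because $k'$ is an integer, so the cluster count claimed in the corollary coincides exactly with the one produced by the construction, and (b) that the ``optimal clustering radius for $\lceil \beta k\rceil$ clusters'' in Theorem~\ref{thm:AC-hard} is the same object as the ``$r_k$'' referenced in the corollary — both describing the symmetric configuration with pairwise inter-cluster distance at least $1/\epsilon$ and intra-cluster radius $r_k$. Once these identifications are made, the bound from Theorem~\ref{thm:AC-hard} literally reads as the statement of the corollary.
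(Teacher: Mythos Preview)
Your proposal is correct and matches the paper's approach exactly: the paper states that Corollary~\ref{col:AC-hard} follows by setting $\beta=5$ and $\theta=\tfrac12$ in Theorem~\ref{thm:AC-hard}, and your write-up simply spells out the arithmetic $\theta+\tfrac{1}{\theta(\beta-1)}=1$ and the cluster-count identification $\lceil 5k'\rceil=5k'=5\lfloor B/T\rfloor+5$ that make this work.
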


\subsection{\eAdaCent{}}

The \eFullCent{} algorithm enhances \FullCent{} by selecting centers based on observed performance, concentrating the budget on promising regions. Similarly, \AdaCent{} saves resources by early pruning of underperforming configurations. Combining these ideas, \eAdaCent{} integrates \eKCenter{}’s performance-aware center selection with \AdaCent{}’s pruning strategy.  

At each iteration, \eAdaCent{} selects $p$ centers using \eKCenter{} with an exploration budget 
\(\Texp = \lfloor \delta T \rfloor\) per configuration, emphasizing regions that show strong early performance. The remaining budget evaluates these centers from \(\Texp+1\) to $T$, while \Pred{} prunes configurations predicted to underperform. The parameter \(\delta\) balances exploration depth and pruning intensity. The pseudo-code is shown in Algorithm~\ref{alg:eadacent}.

\begin{algorithm}[H]
\caption{\eAdaCent{}$(p, B, T, \X, \epsilon, \delta)$}
\label{alg:eadacent}
\begin{algorithmic}[1]
\State \textbf{Input:} number of round centers $p$, total budget $B$, max budget $T$, configuration set $\X$, parameters $\epsilon, \delta$
\State $spent \gets 0$, $\C' \gets \emptyset$, $H' \gets \emptyset$, $\Texp \gets \lfloor \delta T \rfloor$
\While{$spent < B$}
    \State $\C_\mathrm{new} \gets \eKCenter(p, \C', H', \X, \Texp, \epsilon)$ \Comment{select $p$ new centers}
    \State $spent \gets spent + p \Texp$
    \State $\C \gets \C \cup \C_\mathrm{new}$, $\A \gets \A \cup \C_\mathrm{new}$ \Comment{update global and active sets}
    \For{$t = \Texp + 1$ to $T$}
        \For{each $\x \in \A$ \textbf{and} $spent < B$}
            \State $spent \gets spent + 1$, $H^{(\x)}_t \gets A(\x, t)$ \Comment{evaluate each configuration for one step}
        \EndFor
        \State $\A \gets \{\x \in \A : \Pred(H^{(\x)}) \ge \max_{\x' \in \A} H^{(\x')}_\mathrm{last}\}$ \Comment{prune unpromising configurations}
    \EndFor
\EndWhile
\State \Return $\arg\max_{\x \in \C} H^{(\x)}_\mathrm{last}$ \Comment{return best performer}
\end{algorithmic}
\end{algorithm}

Although \eAdaCent{} is designed for practical HPO, it retains theoretical guarantees concerning budget usage and computational complexity.

\begin{lemma}
\label{lemma:EAC-budget}
\eAdaCent{} consumes at most the total budget \(B\).
\end{lemma}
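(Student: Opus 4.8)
The plan is to treat the integer variable $spent$ in Algorithm~\ref{alg:eadacent} as an exact accountant of the budget consumed so far, and then to bound its value directly from the loop guards. First I would verify that the counter is faithful. Each call $\eKCenter(p,\C',H',\X,\Texp,\epsilon)$ runs $\Learn(\bc_i,\Texp)$ on exactly the $p$ newly chosen centers (see Algorithm~\ref{alg:ekcenter}), hence performs precisely $p\Texp$ unit evaluations of $A$; this is matched by the update $spent \gets spent + p\Texp$. Likewise, each unit-step evaluation $H^{(\x)}_t \gets A(\x,t)$ in the refinement loop is matched by $spent \gets spent + 1$. Consequently, at every moment of the execution the budget actually consumed equals the current value of $spent$, so it suffices to bound $spent$.

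Second, I would exploit the loop structure. The outer \textbf{while} body is entered only when $spent < B$, and inside the refinement double loop every evaluation is additionally guarded by the conjunct $spent < B$; hence no refinement step ever raises $spent$ from a value $<B$ to a value $>B$, and the refinement phase on its own keeps $spent \le B$. The only block not individually re-checked against $B$ is $spent \gets spent + p\Texp$. Since the \textbf{while} guard is re-evaluated at the top of each round, at most one such block runs starting from the largest admissible pre-round value of $spent$ (which is $\le B-1$); combining this with $\Texp = \lfloor \delta T \rfloor \le T$ and the standing (implicit) assumption that the algorithm is invoked with enough budget for at least one exploration phase, i.e.\ $p\Texp \le B$, the terminal value of $spent$ does not exceed $B$.

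The step I expect to be the real obstacle is exactly this last point: unlike the refinement evaluations, the exploration phase spends its $p\Texp$ units in a single indivisible chunk that is not re-checked against $B$ mid-round, so a strictly literal reading of the pseudo-code permits an overshoot of up to $p\Texp-1$ in the final round. I would resolve this either by (a) the mild parameter assumption $p\Texp \le B$ noted above — natural since a run that cannot afford even one exploration phase is degenerate — or, if a side-condition-free statement is preferred, by (b) proving the slightly weaker bound $spent \le B + p\Texp$ and noting that the surplus is spent only on (possibly incomplete) evaluations of already-selected centers, so the configuration returned on the last line is unaffected. The remaining pieces, faithfulness of $spent$ and the refinement cap, are routine bookkeeping over the two nested loops and require no new ideas.
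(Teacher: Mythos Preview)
Your approach is essentially the paper's: track the counter $spent$, note that it faithfully accounts for all evaluations, and appeal to the loop guards. The paper's own proof is a two-sentence version of this, asserting that ``every increment is checked against the budget limit, [so] the total expenditure never exceeds $B$.''

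You have actually been more careful than the paper. The overshoot issue you flag --- that the exploration block $spent \gets spent + p\Texp$ is executed after a single $spent < B$ check and is not itself subdivided --- is real on a literal reading of Algorithm~\ref{alg:eadacent}, and the paper's proof simply glosses over it. Your resolution (a), assuming $p\Texp \le B$ so that at least one full exploration phase fits, is the natural reading and is implicit in how the algorithm is meant to be used; resolution (b) is a sound fallback. Either way, there is no gap in your argument relative to the paper's; if anything, you have identified a minor imprecision in the pseudo-code/proof that the authors did not address.
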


\begin{proof}
Each round allocates \(p \Texp = p \lfloor \delta T \rfloor\) units for exploration and incrementally spends the remaining budget on configurations in \(\A\) until \(B\) is reached. Since every increment is checked against the budget limit, the total expenditure never exceeds \(B\).
\end{proof}

\begin{lemma}
\label{clm:EAC-comp}
The overall running time of \eAdaCent{} is \(O(npB)\).
\end{lemma}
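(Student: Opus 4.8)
The plan is to decompose a run of \eAdaCent{} into rounds (iterations of the \textbf{while} loop), bound the work performed in one round, bound the number of rounds, and combine. Each round does exactly two things: one call to \eKCenter{} that produces $p$ new centers, and then an incremental sweep over the active pool $\A$ from budget $\Texp+1$ up to $T$ with a pruning step after every unit increment. I would therefore bound separately (i) the cost of a single \eKCenter{} invocation, (ii) the number of rounds, and (iii) the cumulative cost of the incremental evaluations and the prunings over the whole run.

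For (ii): line~5 of Algorithm~\ref{alg:eadacent} charges $p\Texp$ units to each round, and by Lemma~\ref{lemma:EAC-budget} the algorithm spends (essentially) at most $B$ units in total, so the number of rounds is $O(B/(p\Texp))$; since $\Texp=\lfloor\delta T\rfloor\ge 1$ in the intended regime — and otherwise each round still spends at least $p$ units on the first pass of the incremental loop — this is $O(B/p)$. For (i): the key observation is that \eAdaCent{} re-invokes \eKCenter{} with the seed arguments $\C'$ and $H'$, which are initialized empty and never updated inside the \textbf{while} loop, so a single \eKCenter{} call never holds more than $p$ centers. Each of its $p$ iterations recomputes, for every $\x\in\X$ and every current center $\bc$, the ratio $\eta_\bc$ and the enhanced distance $\dti(\x,\bc)$ and then takes a minimum, costing $O(np)$ per iteration, hence $O(np^2)$ per call, plus $O(p\Texp)$ for the $p$ nested \Learn{} calls. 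Multiplying (i) by (ii) gives $O(np^2\cdot B/(p\Texp)+p\Texp\cdot B/(p\Texp))=O(npB/\Texp+B)=O(npB)$ for all \eKCenter{}-related work.

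For (iii) I would amortize rather than bound per round: the total number of unit evaluations over the entire run is at most $B$ by Lemma~\ref{lemma:EAC-budget}, and each pruning step costs $O(|\A|)=O(n)$ and can be charged to the $|\A|$ evaluations that precede it in the same step, except in the single round where the budget is exhausted, where I add $O(nT)$ of slack, for a total of $O(B+nT)=O(nB)$ (using $T\le B$). Adding this to the $O(npB)$ above and using $p\ge 1$ yields the claimed $O(npB)$ — a factor $p$ above the $O(nB)$ bound for \AdaCent{} (Lemma~\ref{lemma:AC-comp}), which is exactly the price of not being able to cache enhanced distances. That non-cacheability is the main obstacle: because $\eta_\bc$ depends on the running maximum of the observed values, which increases as centers are added, the enhanced distances change every iteration and must be recomputed from scratch, unlike the plain distances in \KCenter{}; the bound stays linear in $p$ only because each \eKCenter{} call is seeded afresh and so compares against at most $p$ centers — if instead the accumulated center set were carried into \eKCenter{}, this step would blow up to roughly $O(nB^2/\Texp^2)$ and the lemma as stated would fail. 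The only remaining care points are the $\Texp=0$ edge case noted above and checking that the at-most-one-round overshoot does not affect the asymptotics.
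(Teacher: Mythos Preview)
Your proposal is correct and follows the same round-by-round decomposition the paper uses; in fact your accounting is more careful than the paper's two-line argument, which simply asserts $O(np)$ per \eKCenter{} call and $O(n)$ per unit evaluation and multiplies by $O(B)$. Your observation that $\C'$ and $H'$ stay empty (so each \eKCenter{} call compares against at most $p$ centers, costing $O(np^2)$) together with your tighter $O(B/p)$ round count is the right way to make the $O(npB)$ bound rigorous, and your amortization of the pruning cost matches the paper's treatment of the evaluation phase.
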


\begin{proof}
In each iteration, the call to \(\eKCenter\) requires \(O(np)\) time for distance computations and selection. During evaluation, each of the \(O(B)\) unit updates incurs at most \(O(n)\) work over the active set. Summing over rounds yields a total complexity of \(O(npB)\).
\end{proof}

\begin{figure*}[t]
  \centering
  \begin{subfigure}[t]{0.32\textwidth}
    \includegraphics[width=\textwidth]{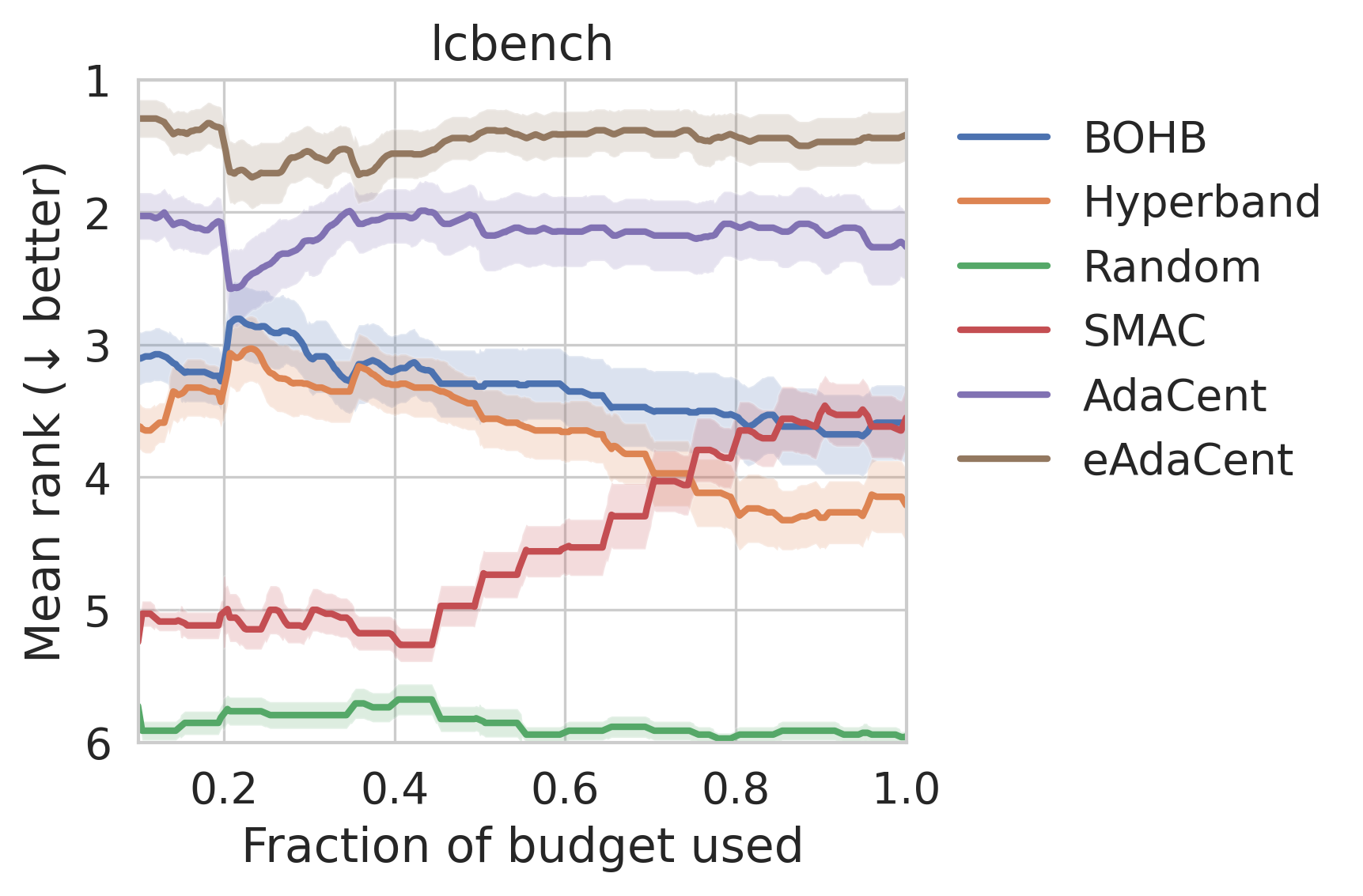}
    \label{fig:meanrank:lcbench}
  \end{subfigure}
  \hfill
  \begin{subfigure}[t]{0.32\textwidth}
    \includegraphics[width=\textwidth]{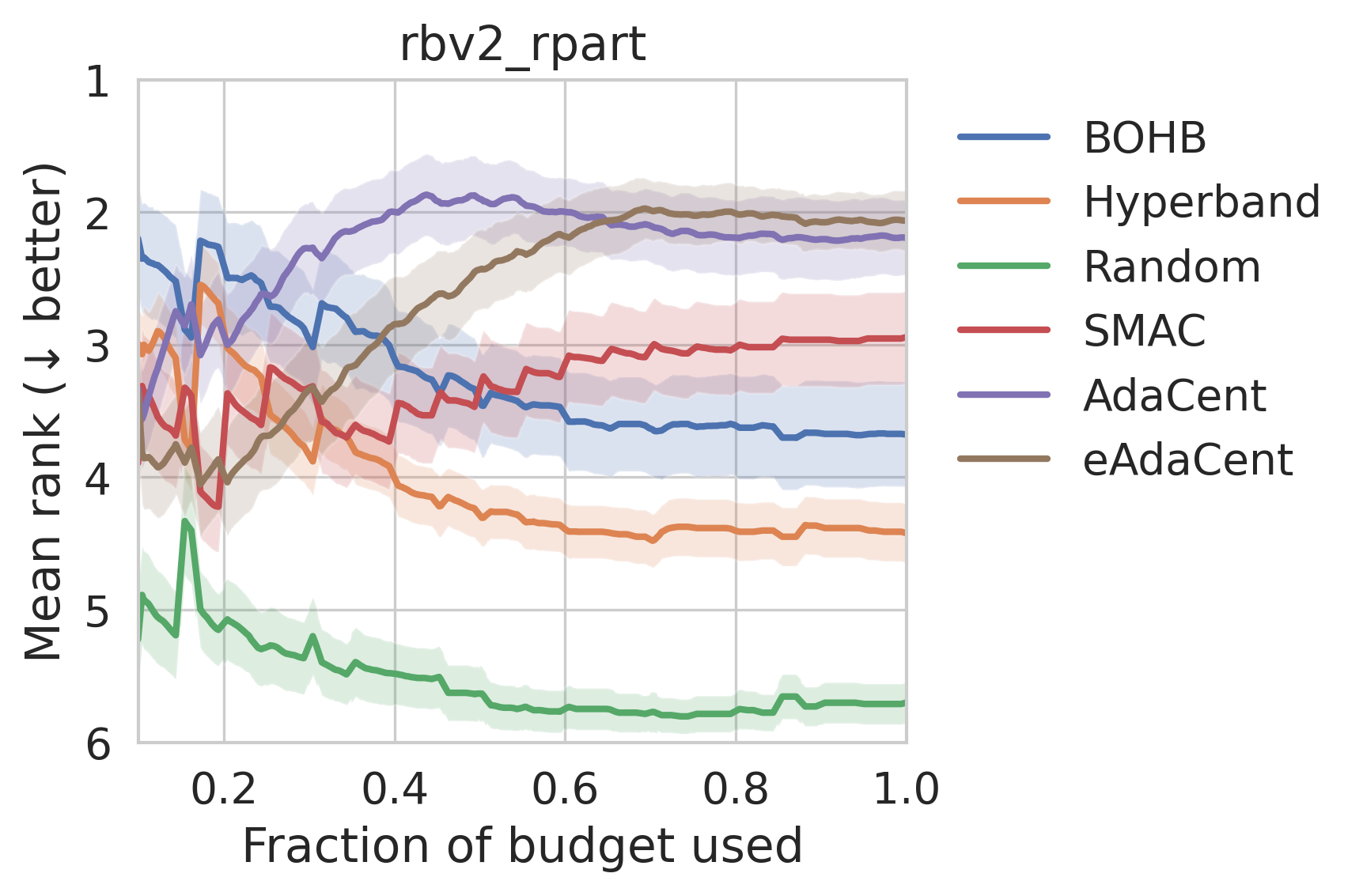}
    \label{fig:meanrank:rpart}
  \end{subfigure}
  \hfill
  \begin{subfigure}[t]{0.32\textwidth}
    \includegraphics[width=\textwidth]{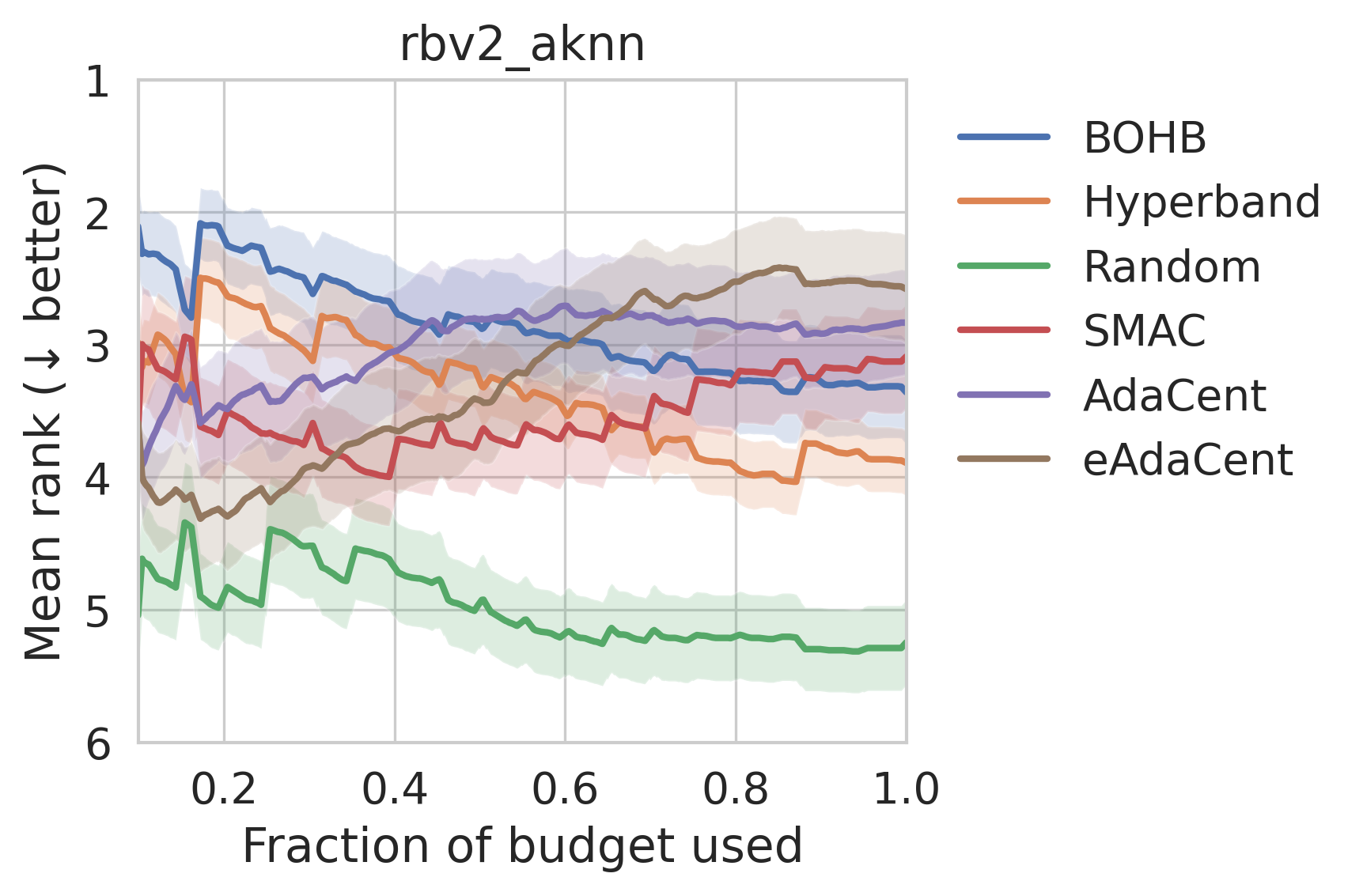}
    \label{fig:meanrank:aknn}
  \end{subfigure}
  \caption{Mean rank (averaged over all datasets) of each algorithm over 30 runs on the three YAHPO scenarios. Lower rank for an algorithm indicates a higher validation accuracy for the same budget. The $x$-axis denotes fraction of total budget (epochs for \texttt{lcbench}, data fraction for \texttt{rbv2\_rpart} and \texttt{rbv2\_aknn}), starting at 0.1 to emphasize early-phase differences.}
  \label{fig:meanrank:all}
\end{figure*}

\section{Experiments}
\label{sec:experiments}

In this section, we conduct a comprehensive empirical study of our proposed methods. 
We begin by verifying Assumption~\ref{assump:assump2} across practical hyperparameter search spaces, providing strong statistical evidence for its validity. We then benchmark \AdaCent{} and \eAdaCent{} on three representative scenarios from the YAHPO Gym suite~\citep{Pfisterer-2022}, \texttt{lcbench}~\citep{lcbench}, \texttt{rbv2\_rpart}~\citep{bischl2020}, and \texttt{rbv2\_aknn}~\citep{bischl2020}, covering over \textbf{250 experimental settings}. 
Both algorithms are compared against four established HPO baselines: Hyperband~\citep{Li-2018}, BOHB~\citep{Falkner-2018}, SMAC~\citep{Hutter-2011}, and uniform random search. All methods receive the same total budget and are evaluated over 30 independent runs. We report the mean $\pm$ standard deviation of the best validation accuracy and summarize robustness using mean ranks across all datasets. A detailed overview of the hyperparameter domains and dataset coverage is given in Table~\ref{tab:scenario_detailed}. Finally, we present two synthetic experiments: one illustrating the behavior of \FullCent{} and \eFullCent{} on smooth analytic landscapes, and another benchmarking \eFullCent{} against standard baselines on complex nonconvex surfaces.

\paragraph{Practical Refinements.}
To make the UVP framework practical, we incorporate two key refinements inspired by common behaviors in hyperparameter optimization. 
First, we discretize the continuous hyperparameter space into a finite set $\X = \{\x_1, \dots, \x_n\}$ using a fine-grained mesh. 
By Assumption~\ref{assump:assump2}, this discretization incurs minimal loss while effectively preserving the structure of the original space, allowing our analysis and optimization to proceed on a tractable set of candidate configurations. 
Second, to account for deviations from the idealized concave accuracy curves assumed in Assumption~\ref{assump:assump3}, we adopt a "tail-fit" approach: a linear regression is applied to the final $\theta = 30\%$ of each configuration’s history, capturing the asymptotic trend while ignoring early-stage noise and transient fluctuations. 
Finally, we enforce monotonicity via $A(\x,b) \coloneqq \max_{t \le b} A(\x,t)$, which aligns with standard HPO evaluation protocols and ensures consistent, non-decreasing performance estimates across budgets.

\paragraph{Algorithm Configurations.}
To ensure a fair comparison, all algorithms were limited to a total budget of 20 complete evaluations at maximum fidelity (\(B = 20\,T\)). SMAC was run with 24 trials per task, Hyperband and BOHB with 6 iterations, \(\eta = 3\), \AdaCent{} with \(p = 25\),  and \eAdaCent{} with \(p = 25\) and an exploration factor \(\delta = 0.1\). We empirically tuned each method and report results with the best‑performing parameter settings, using YAHPO Gym’s official public implementation for reproducibility.

\begin{figure*}[t]
  \centering

  % --- Row 1 ---
  \begin{minipage}[b]{0.3\textwidth}
    \includegraphics[width=\textwidth]{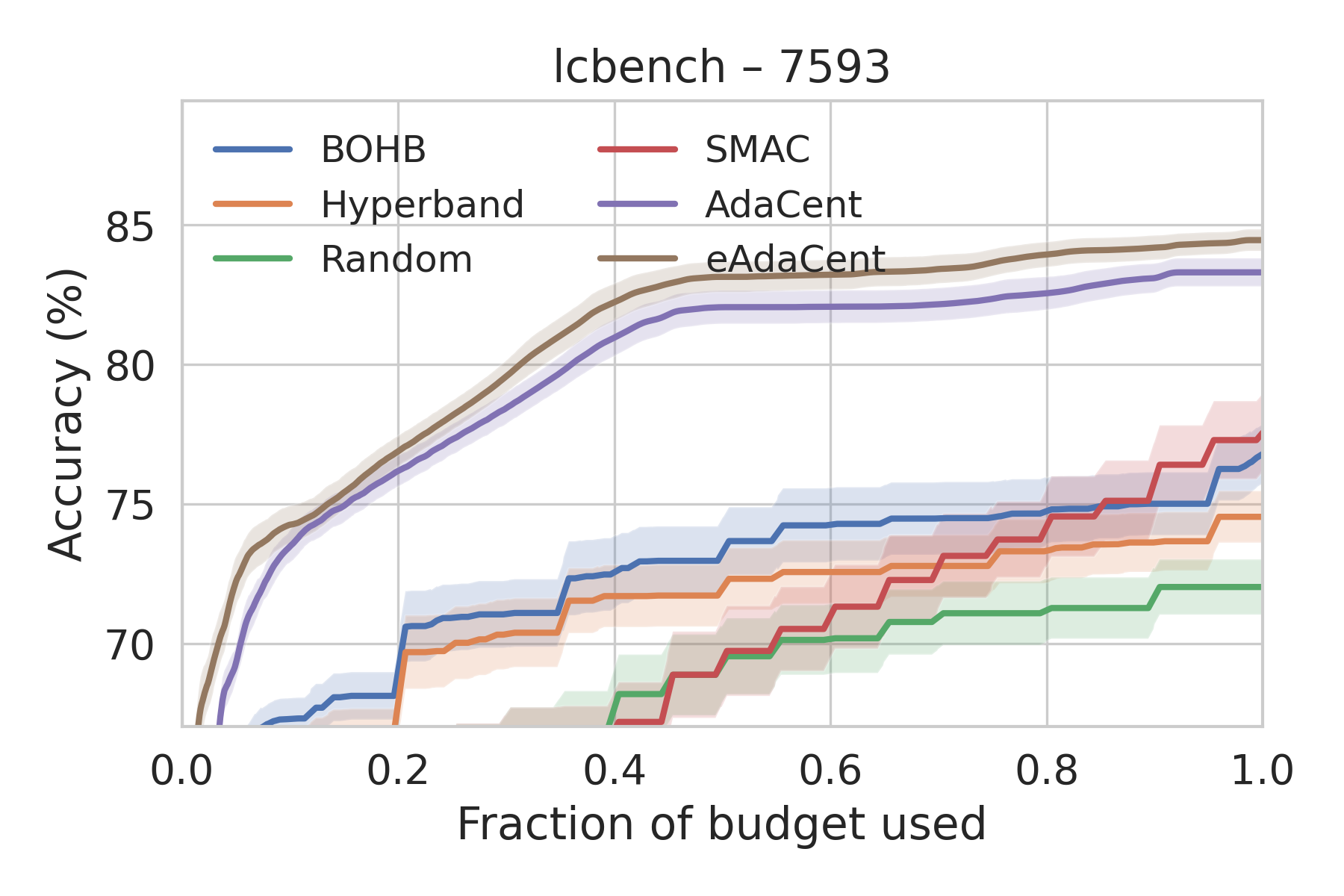}
  \end{minipage}\hfill
  \begin{minipage}[b]{0.3\textwidth}
    \includegraphics[width=\textwidth]{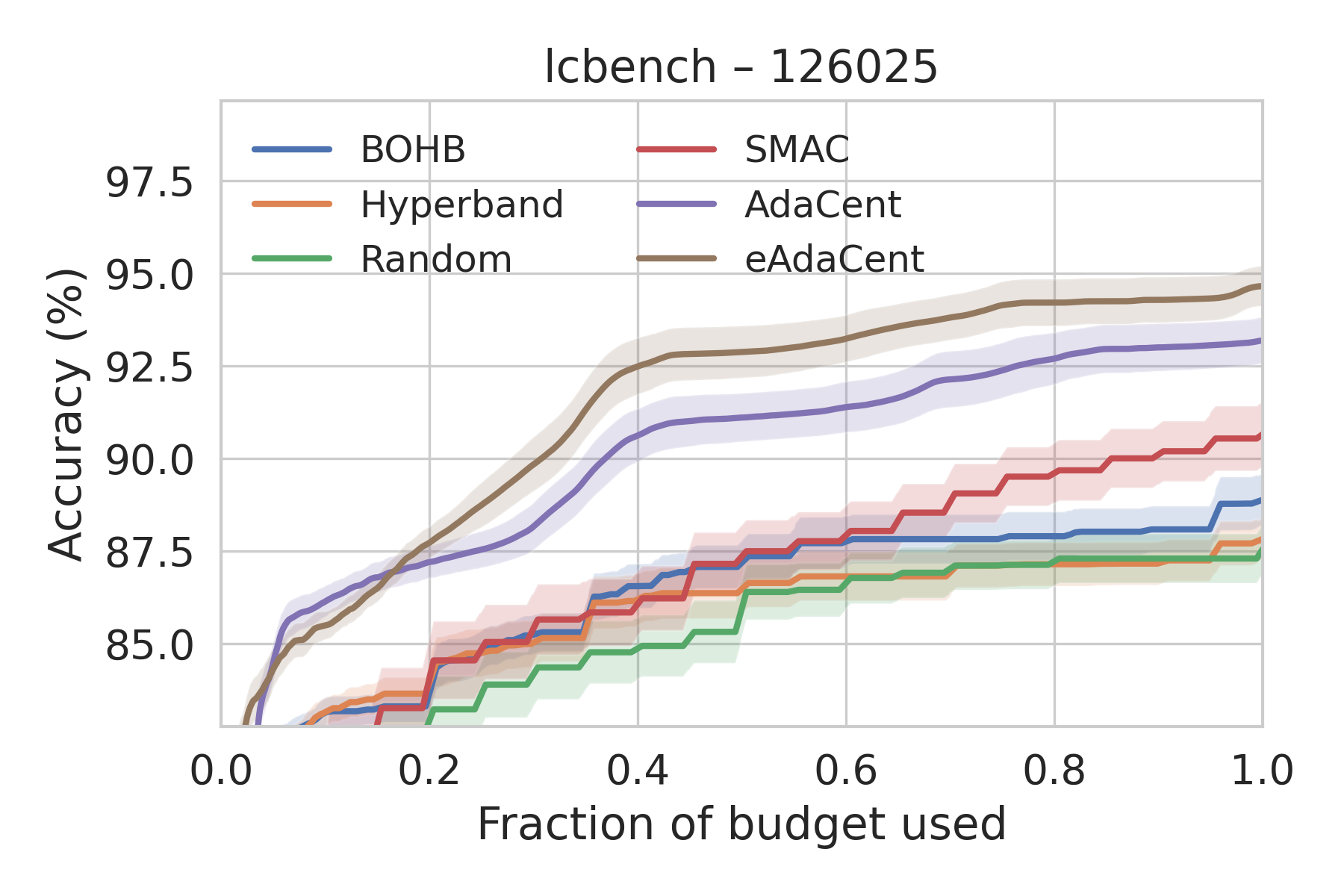}
  \end{minipage}\hfill
  \begin{minipage}[b]{0.3\textwidth}
    \includegraphics[width=\textwidth]{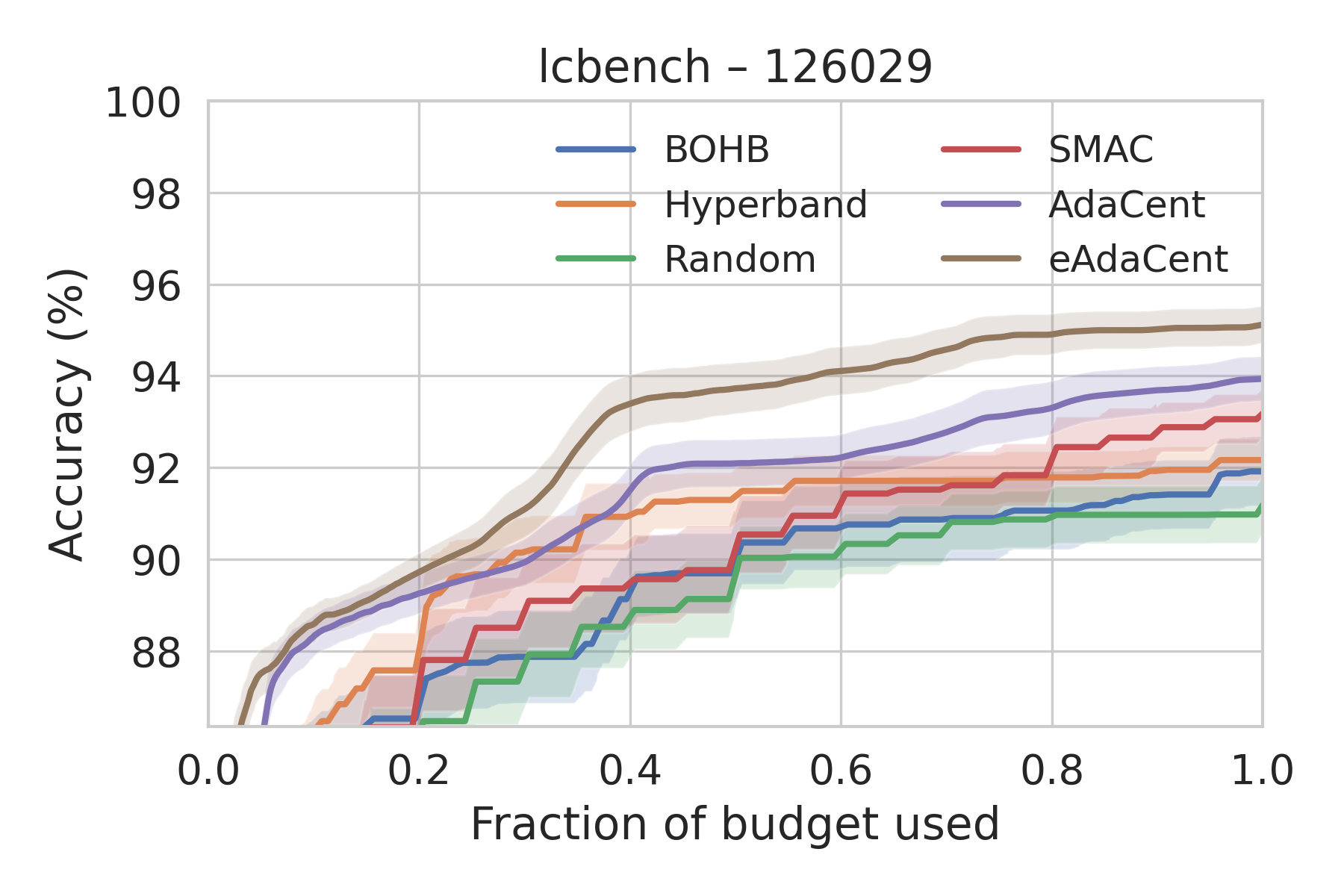}
  \end{minipage}

  % --- Row 2 ---
  \begin{minipage}[b]{0.3\textwidth}
    \includegraphics[width=\textwidth]{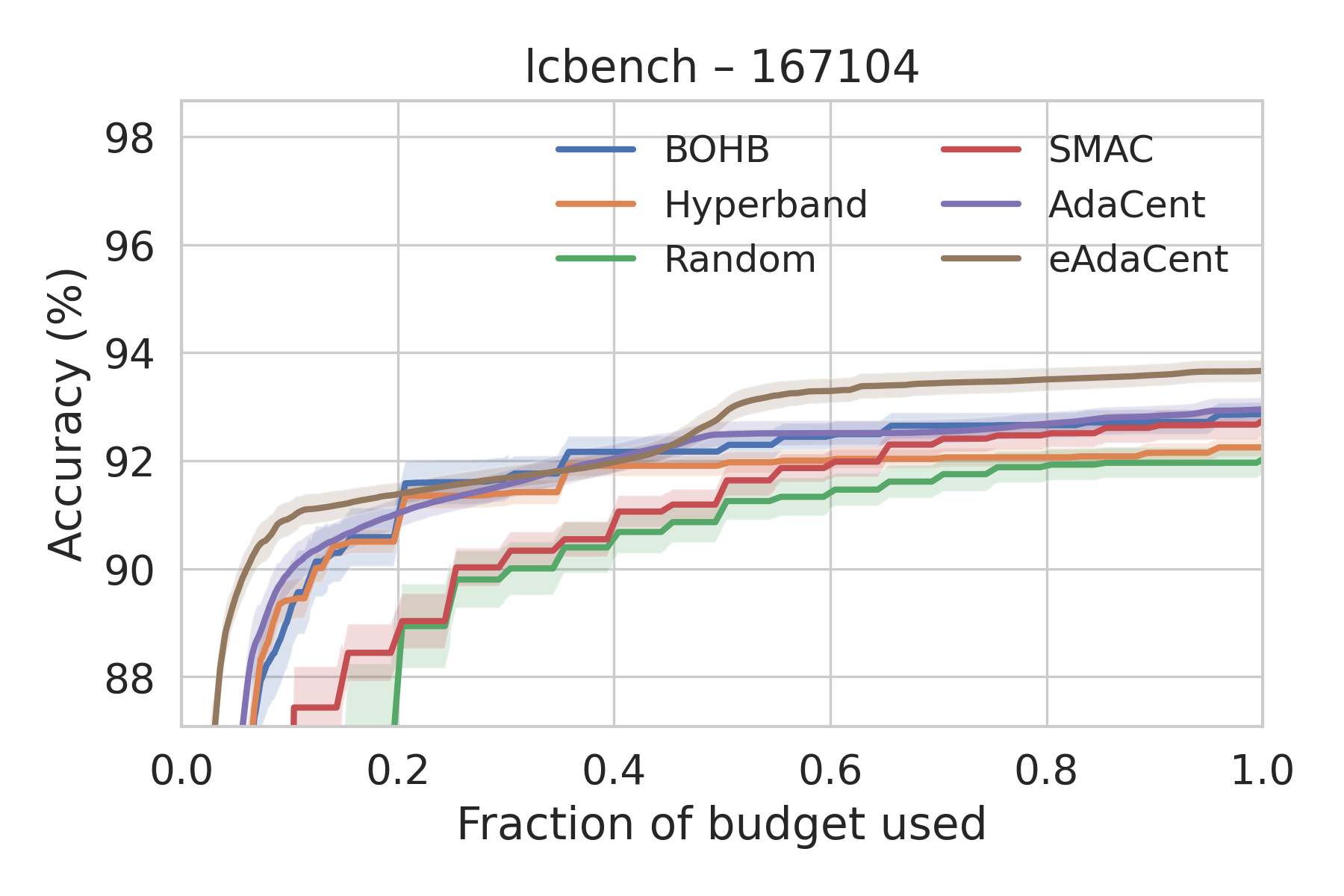}
  \end{minipage}\hfill
  \begin{minipage}[b]{0.3\textwidth}
    \includegraphics[width=\textwidth]{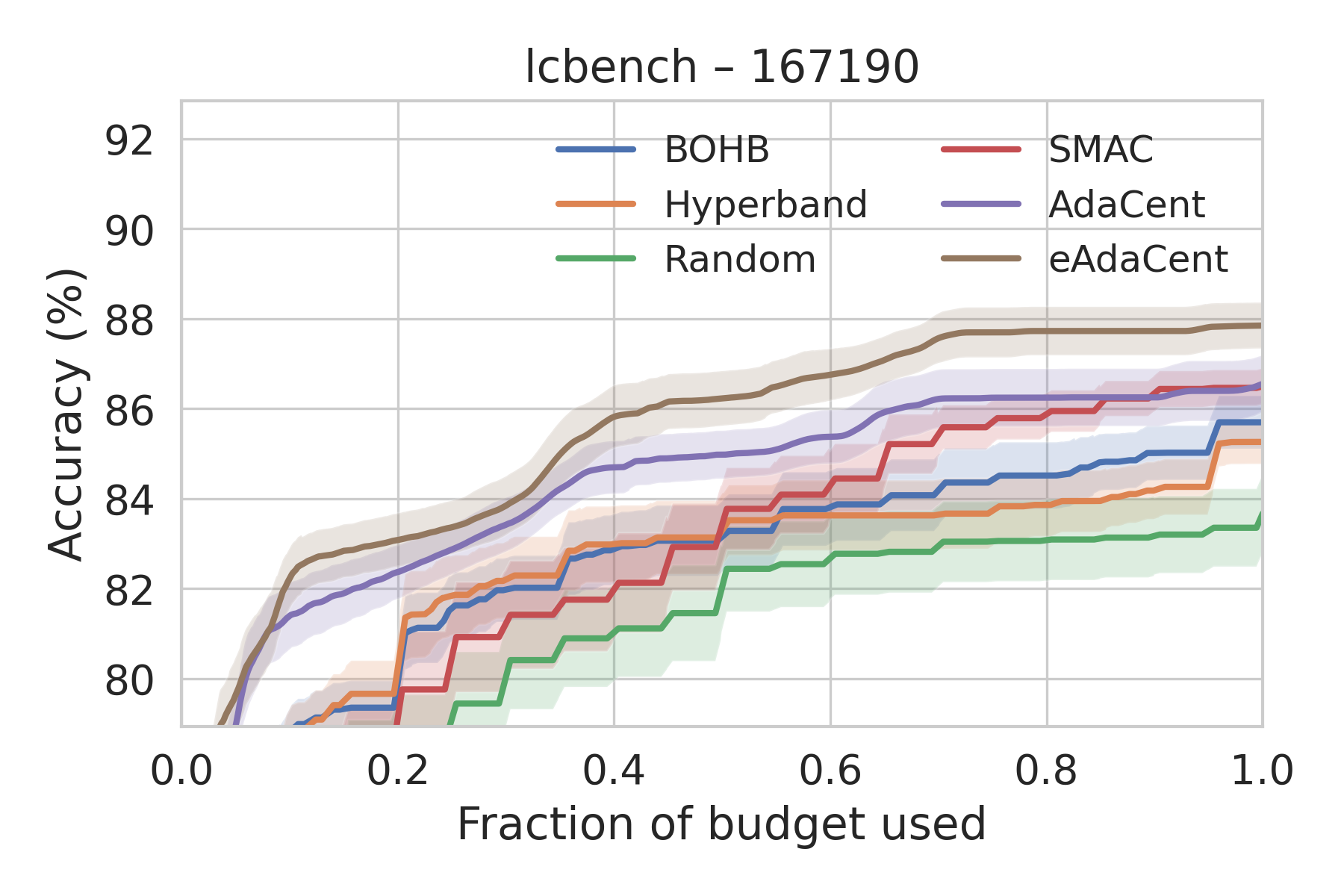}
  \end{minipage}\hfill
  \begin{minipage}[b]{0.3\textwidth}
    \includegraphics[width=\textwidth]{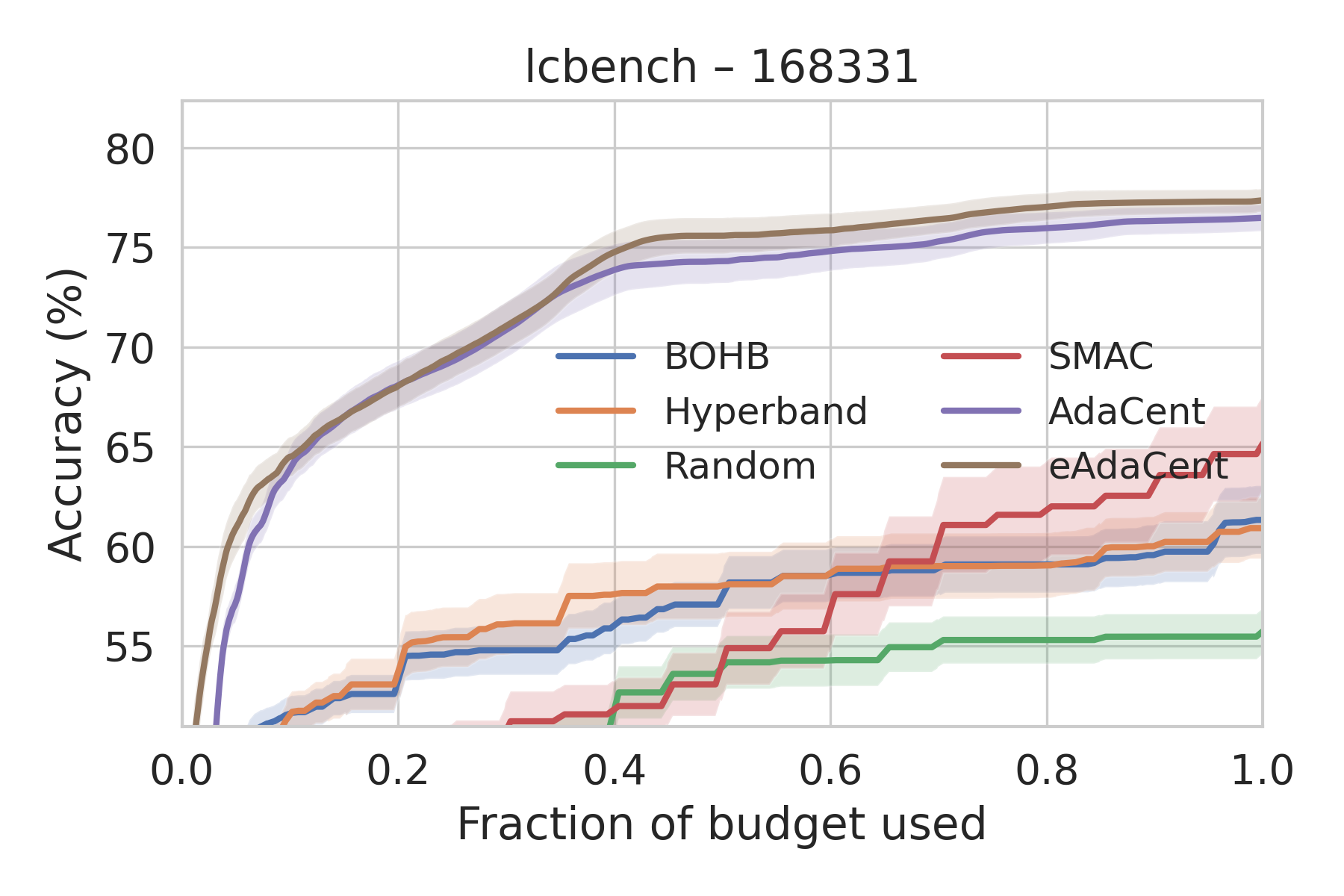}
  \end{minipage}

  % --- Row 3 ---
  \begin{minipage}[b]{0.3\textwidth}
    \includegraphics[width=\textwidth]{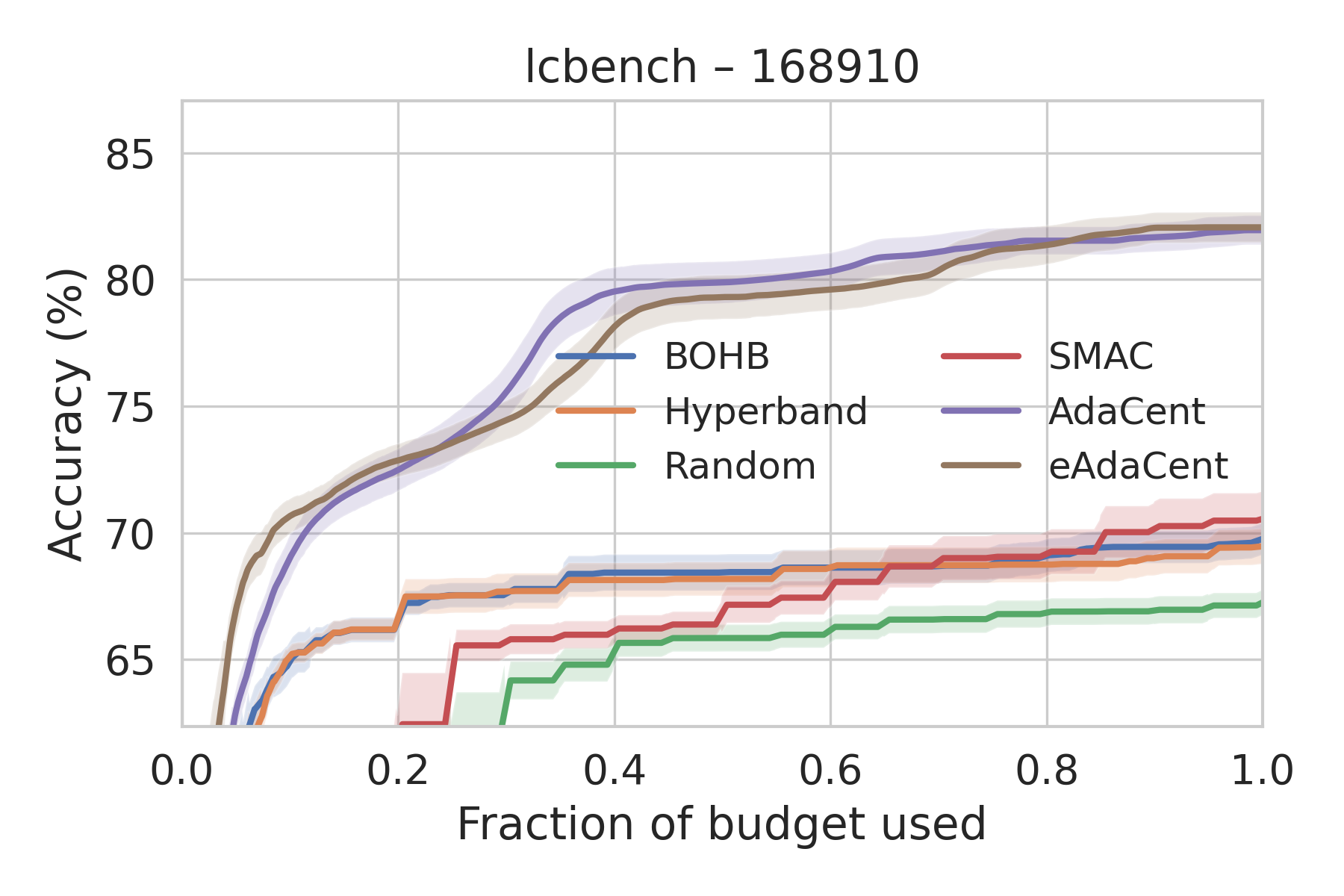}
  \end{minipage}\hfill
  \begin{minipage}[b]{0.3\textwidth}
    \includegraphics[width=\textwidth]{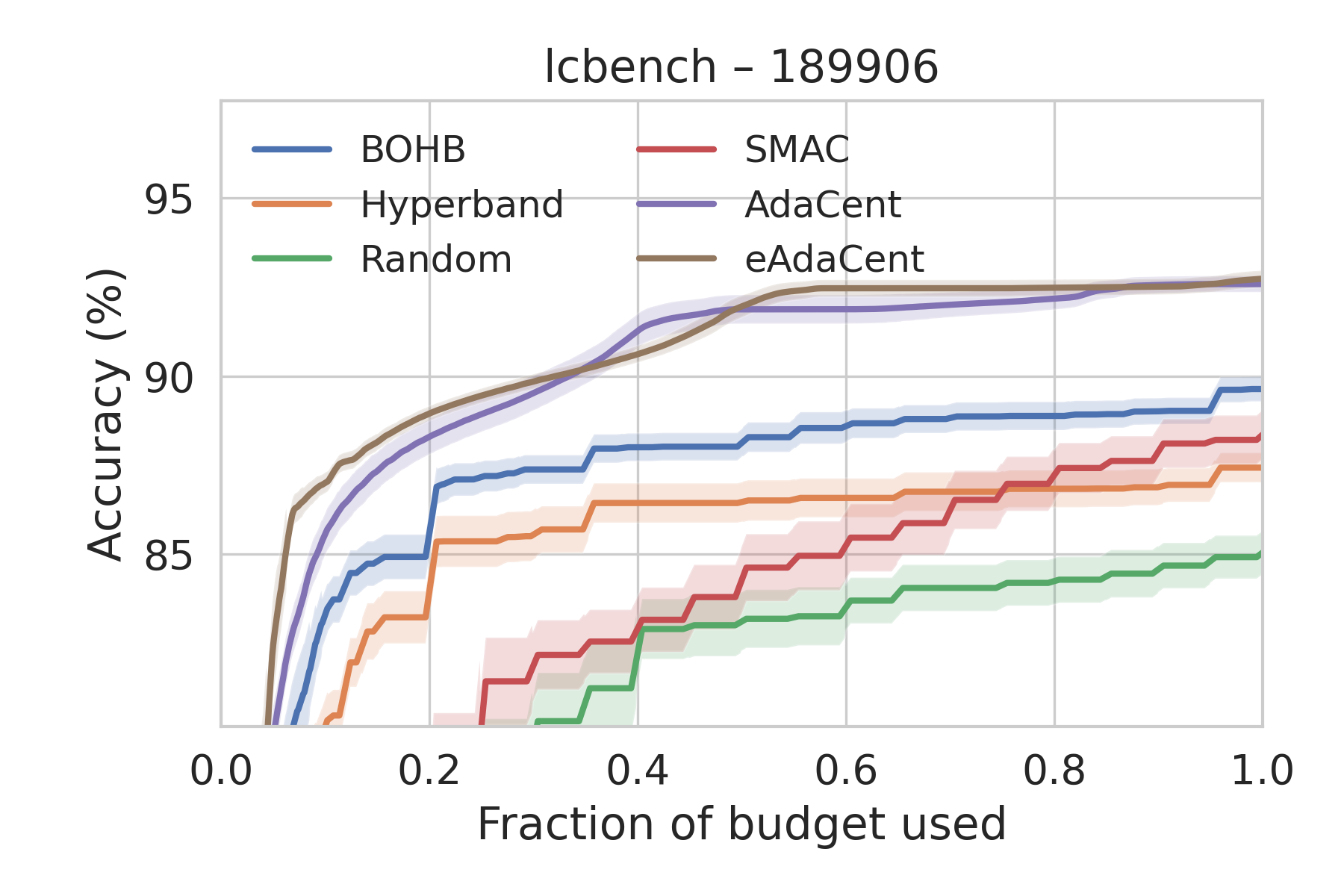}
  \end{minipage}\hfill
  \begin{minipage}[b]{0.3\textwidth}
    \includegraphics[width=\textwidth]{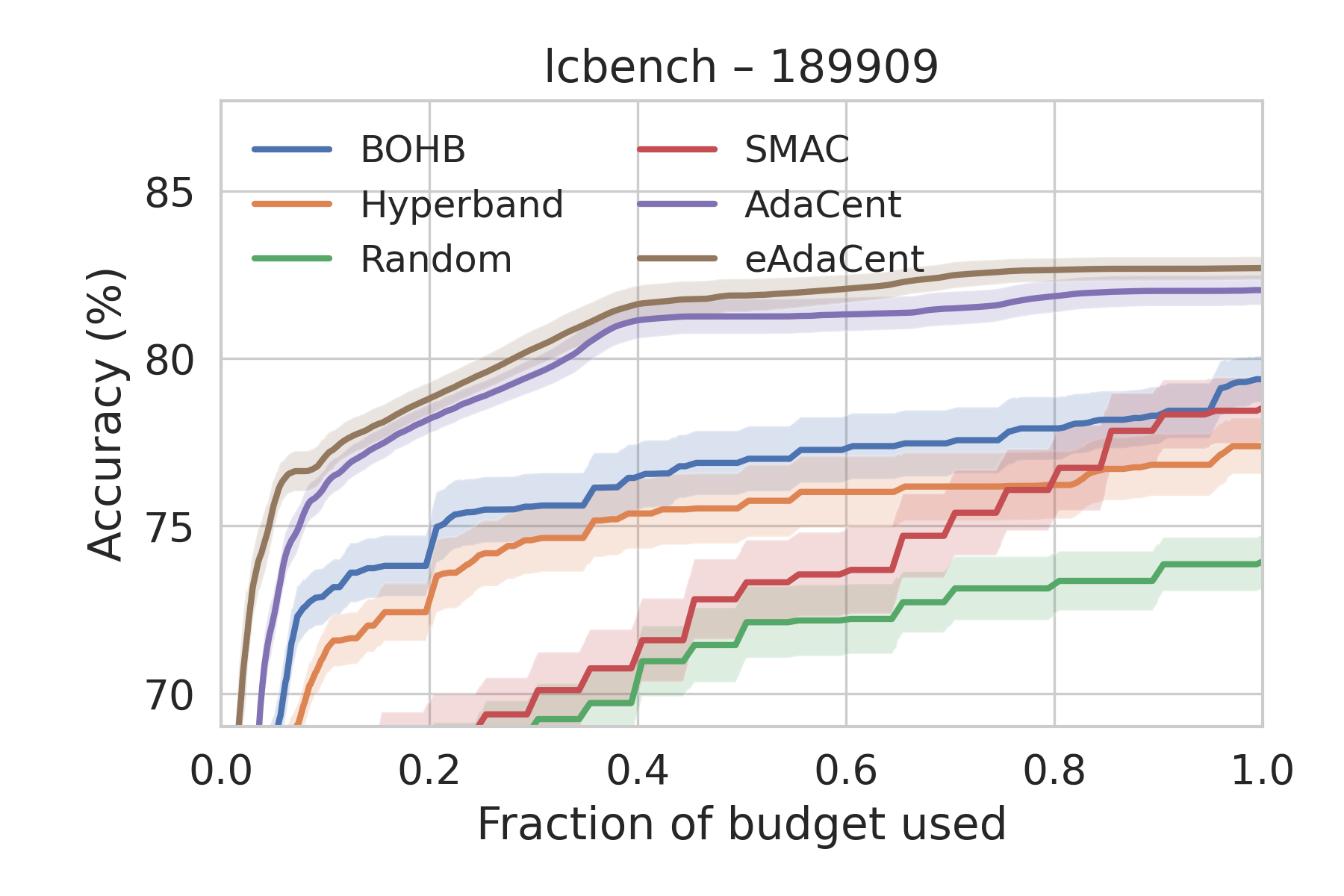}
  \end{minipage}

\caption{Validation accuracy curves on 9 datasets from \texttt{lcbench}. Both \AdaCent{} and \eAdaCent{} consistently converge faster and achieve higher accuracy. Similar trends are observed across 35 datasets, as shown in Appendix~\ref{sec:figs}.}
  \label{fig:acc:lcbench}
\end{figure*}

\begin{figure*}[t]
  \centering

  % --- rbv2_rpart (Row 1) ---
  \begin{minipage}[b]{0.3\textwidth}
    \includegraphics[width=\textwidth]{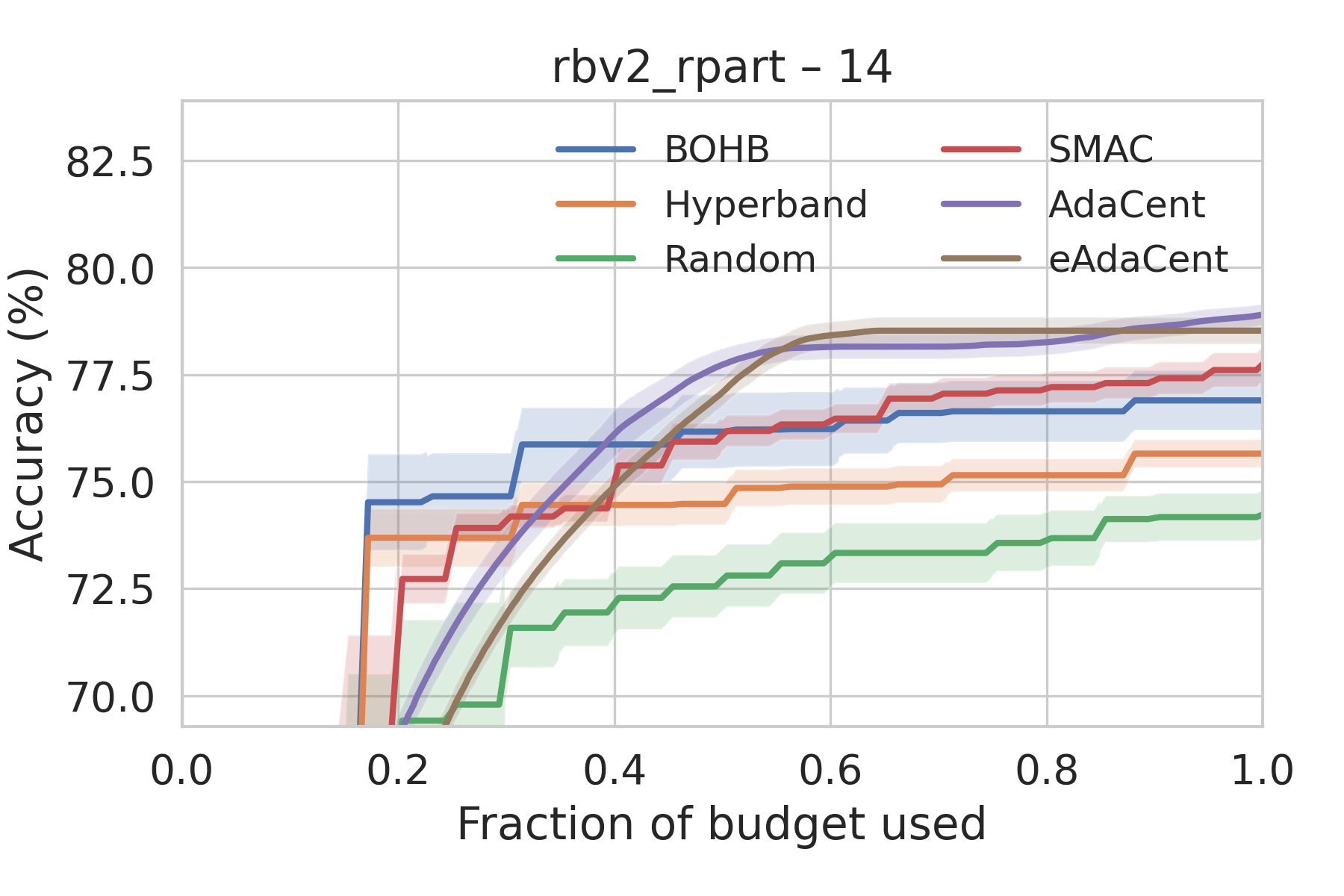}
  \end{minipage}\hfill
  \begin{minipage}[b]{0.3\textwidth}
    \includegraphics[width=\textwidth]{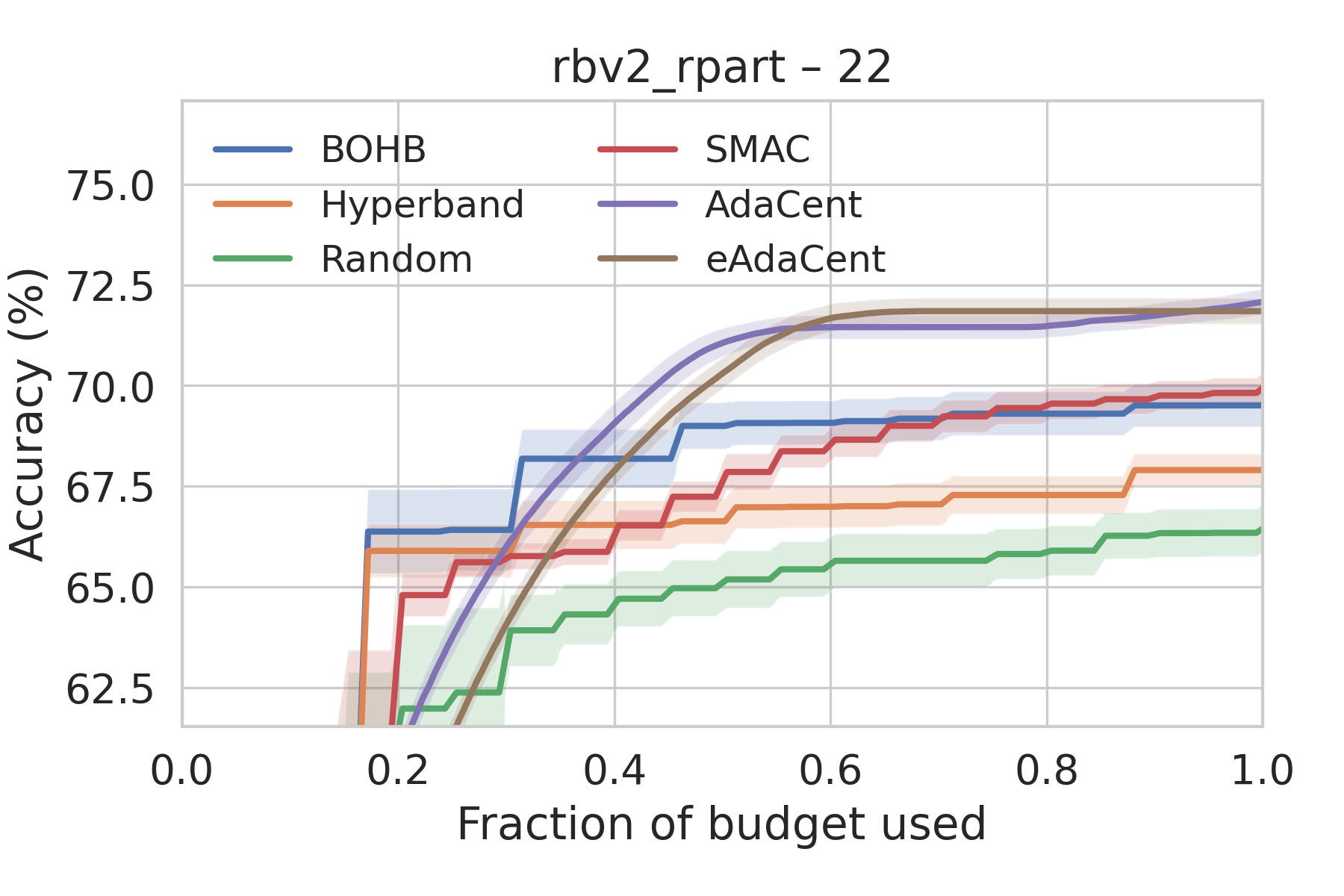}
  \end{minipage}\hfill
  \begin{minipage}[b]{0.3\textwidth}
    \includegraphics[width=\textwidth]{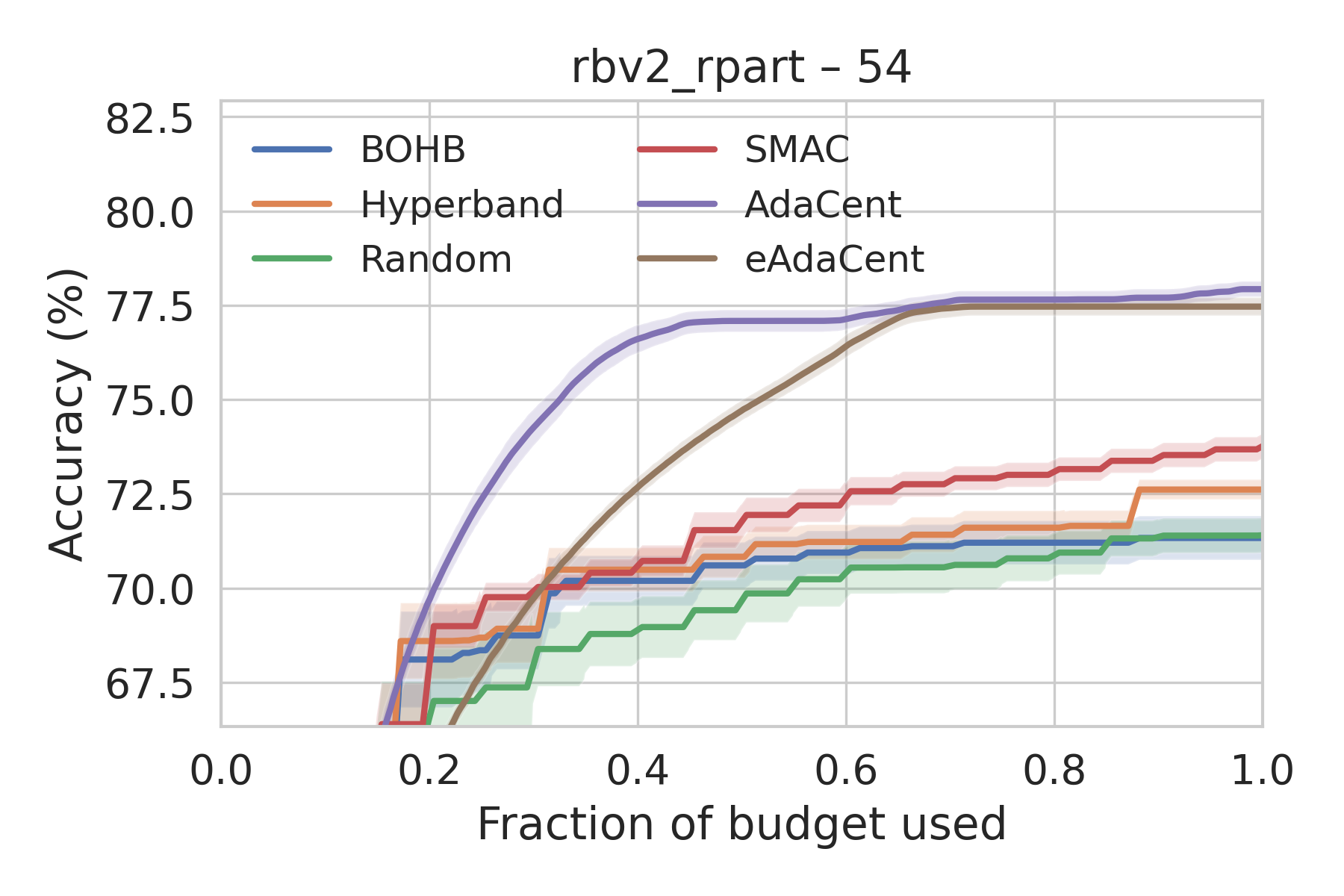}
  \end{minipage}

  % --- rbv2_aknn (Row 2) ---
  \begin{minipage}[b]{0.3\textwidth}
    \includegraphics[width=\textwidth]{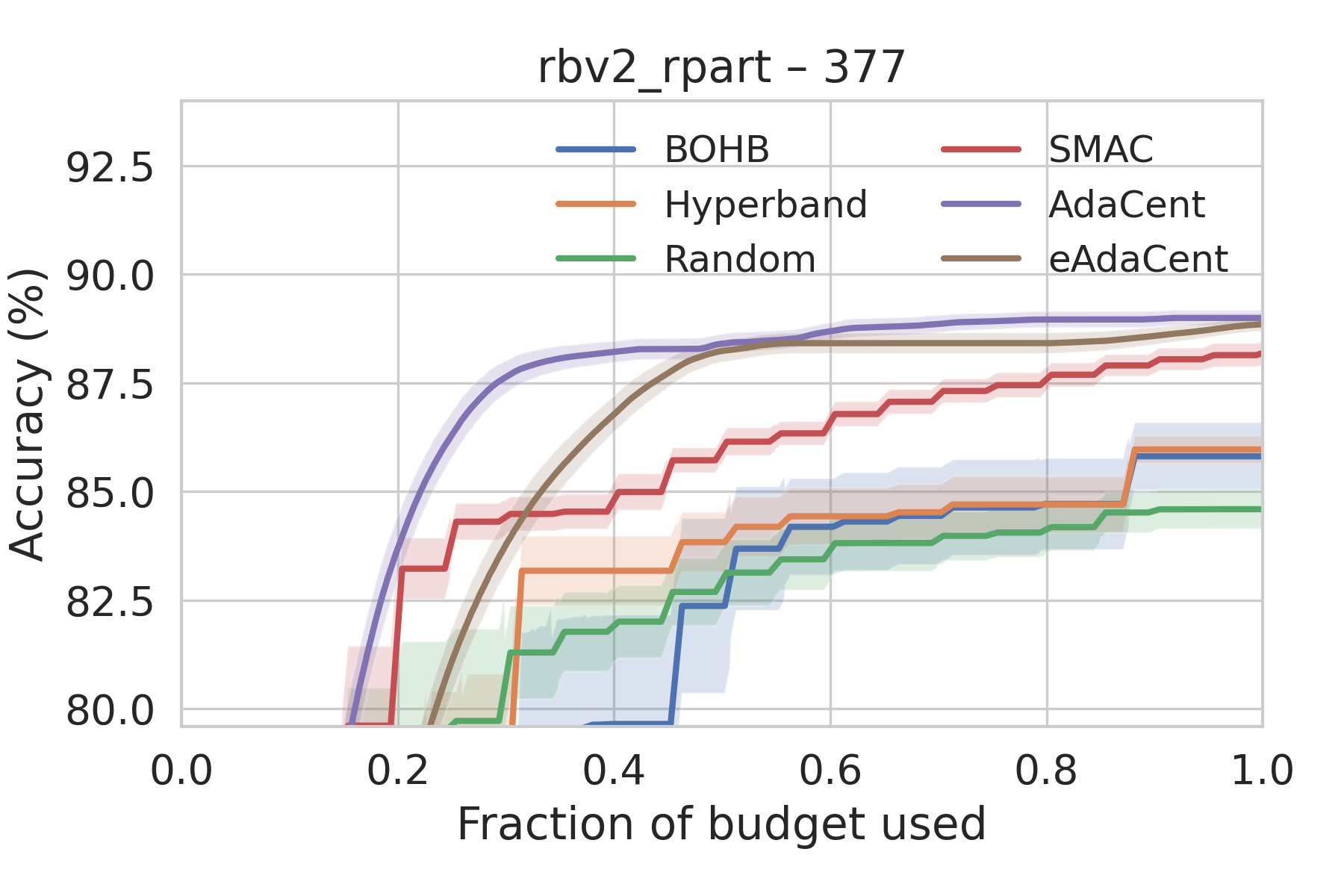}
  \end{minipage}\hfill
  \begin{minipage}[b]{0.3\textwidth}
    \includegraphics[width=\textwidth]{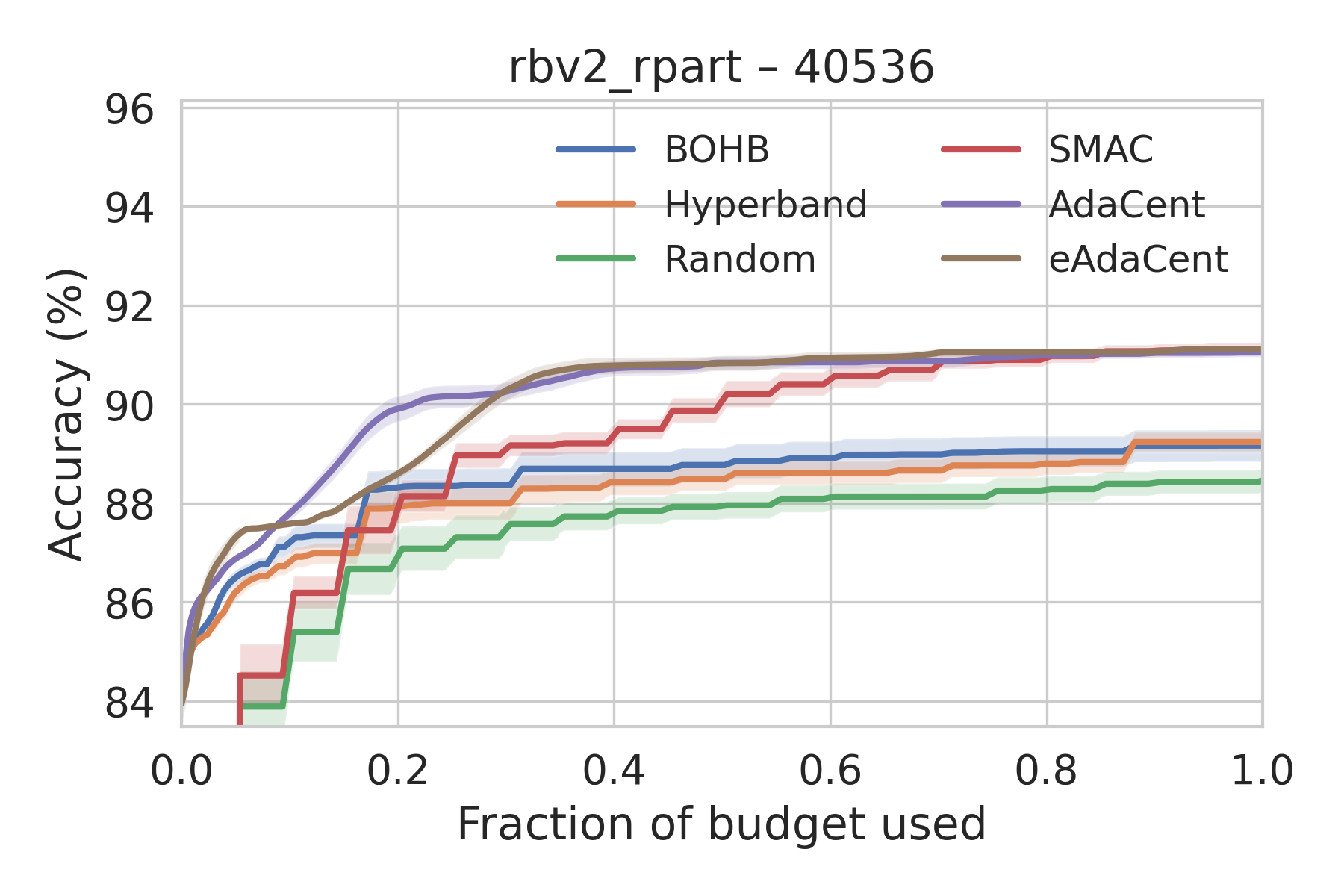}
  \end{minipage}\hfill
  \begin{minipage}[b]{0.3\textwidth}
    \includegraphics[width=\textwidth]{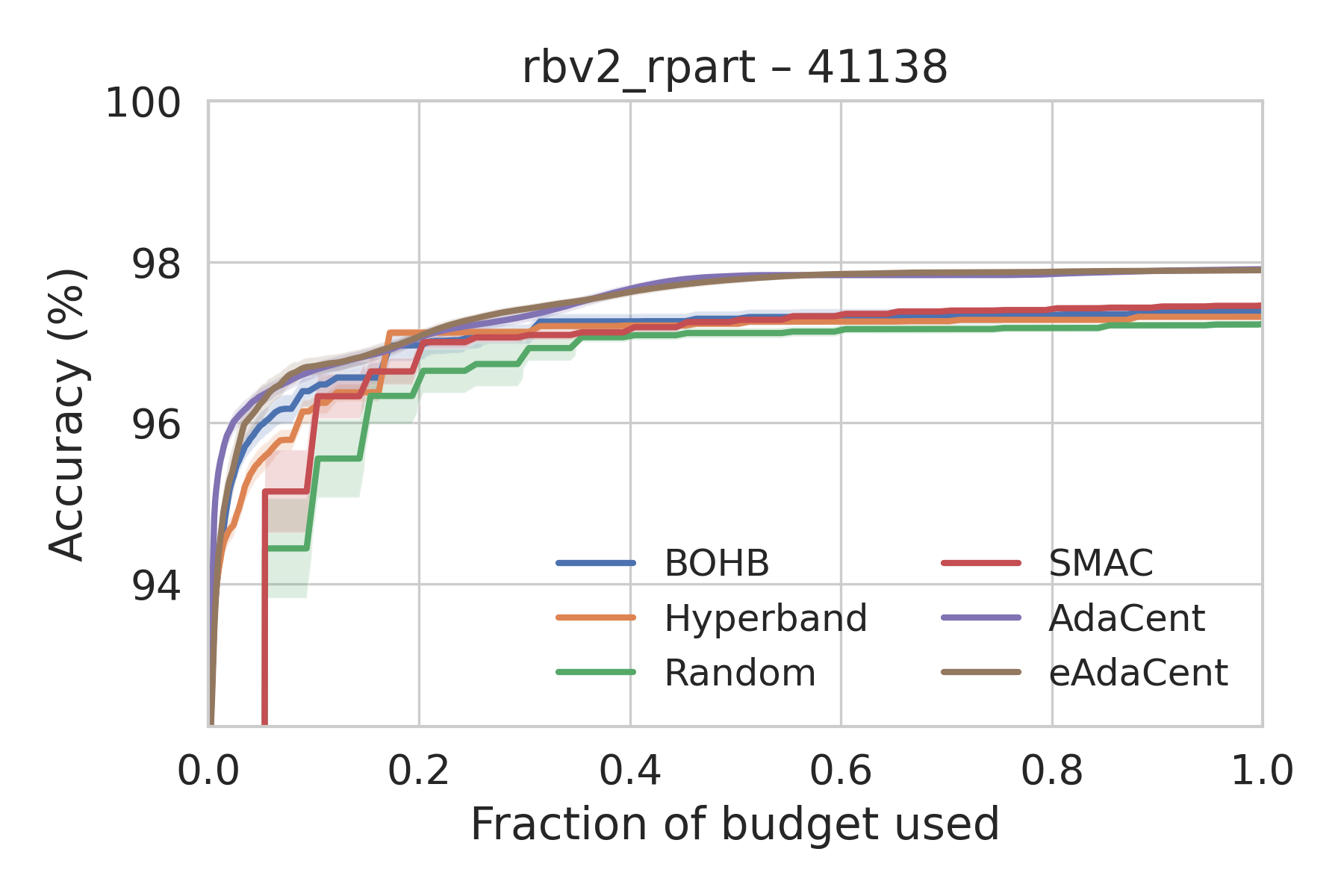}
  \end{minipage}

  \caption{Validation accuracy curves for 6 datasets from \texttt{rbv2\_rpart}. Both \AdaCent{} and \eAdaCent{} consistently outperform the baselines. Similar trends are observed across 107 datasets, as shown in Appendix~\ref{sec:figs}.} 
  \label{fig:acc:rbv2}
\end{figure*}

\begin{figure*}[t]
  \centering

  % --- rbv2_rpart (Row 1) ---
  \begin{minipage}[b]{0.3\textwidth}
    \includegraphics[width=\textwidth]{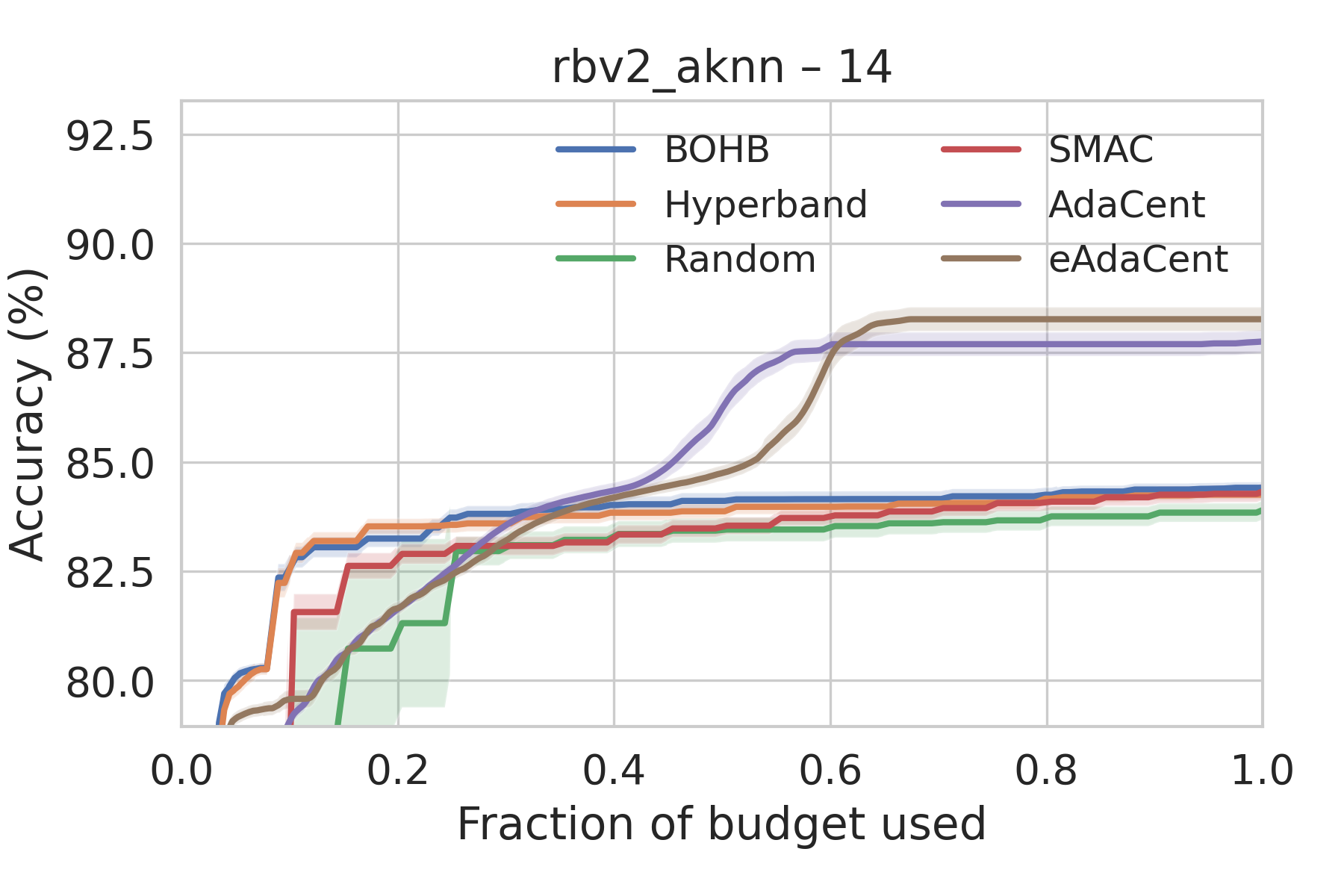}
  \end{minipage}\hfill
  \begin{minipage}[b]{0.3\textwidth}
    \includegraphics[width=\textwidth]{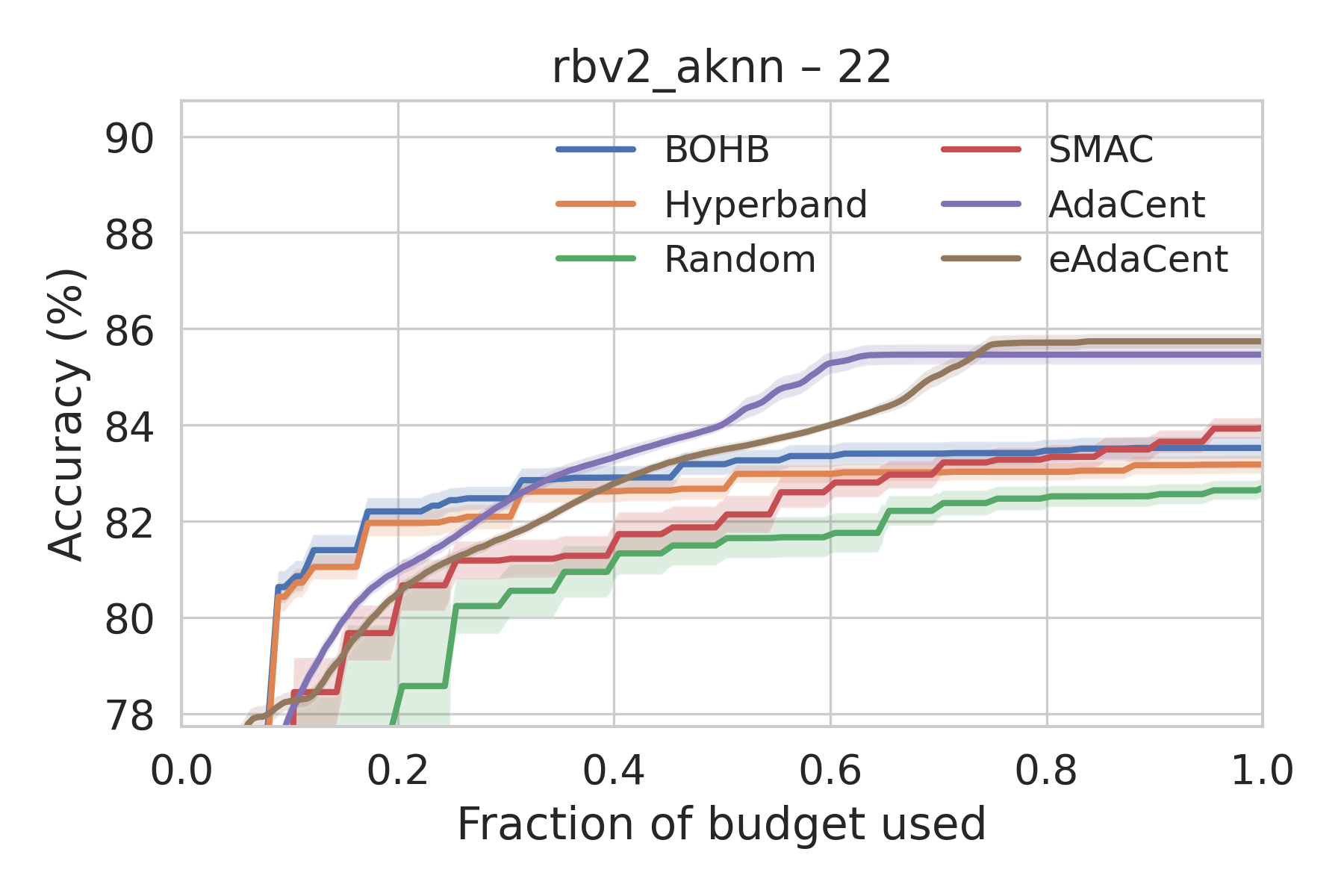}
  \end{minipage}\hfill
  \begin{minipage}[b]{0.3\textwidth}
    \includegraphics[width=\textwidth]{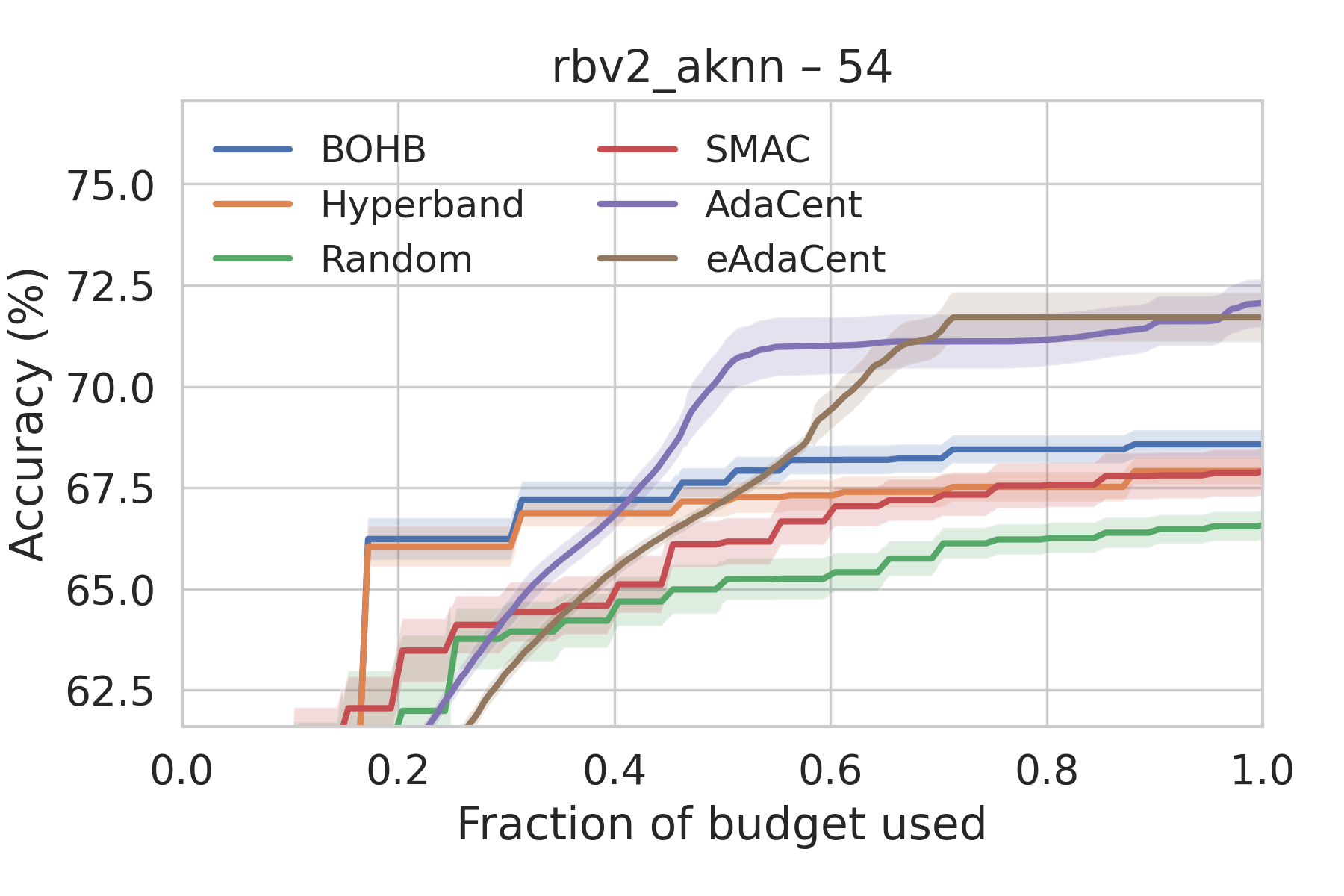}
  \end{minipage}

  % --- rbv2_aknn (Row 2) ---
  \begin{minipage}[b]{0.3\textwidth}
    \includegraphics[width=\textwidth]{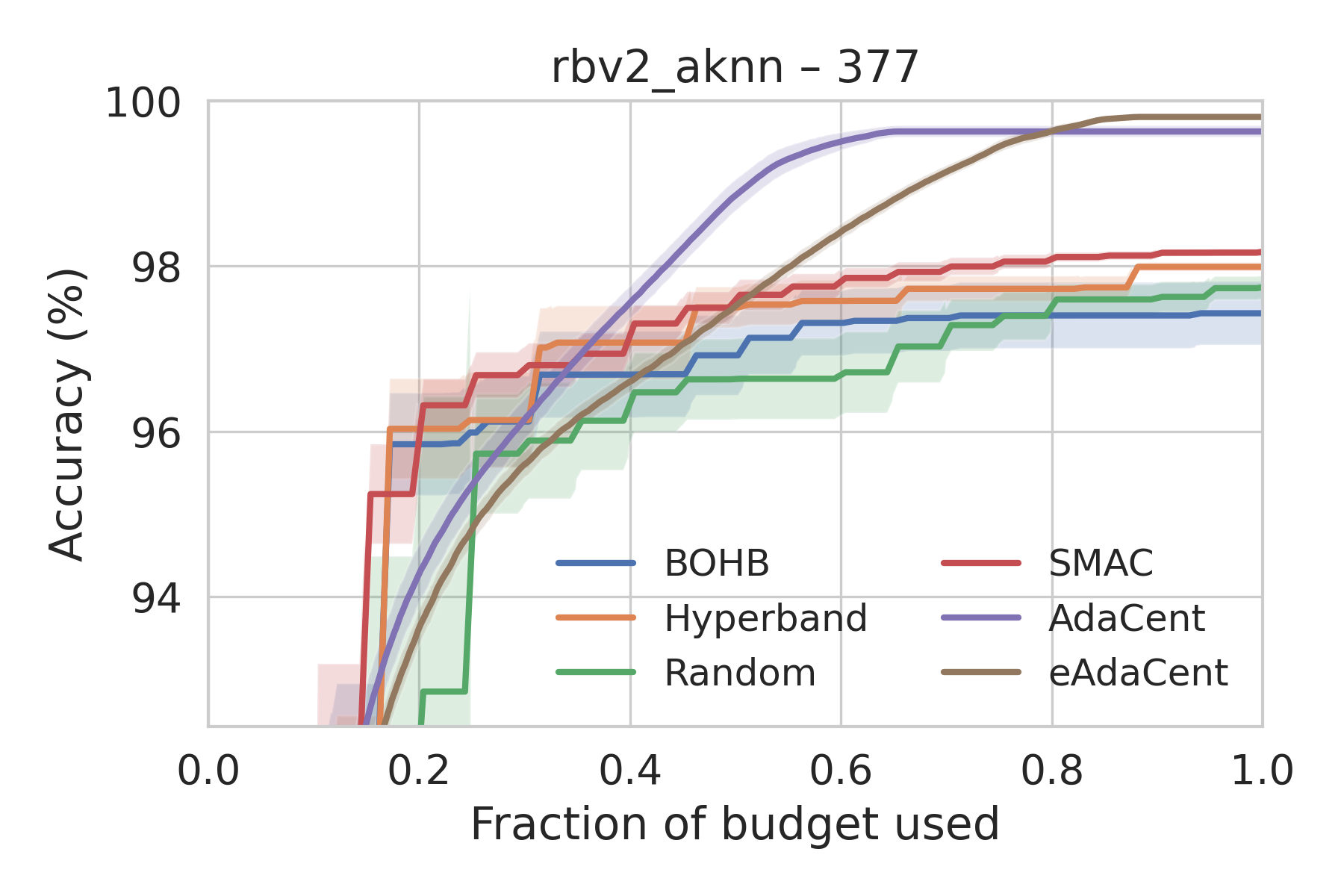}
  \end{minipage}\hfill
  \begin{minipage}[b]{0.3\textwidth}
    \includegraphics[width=\textwidth]{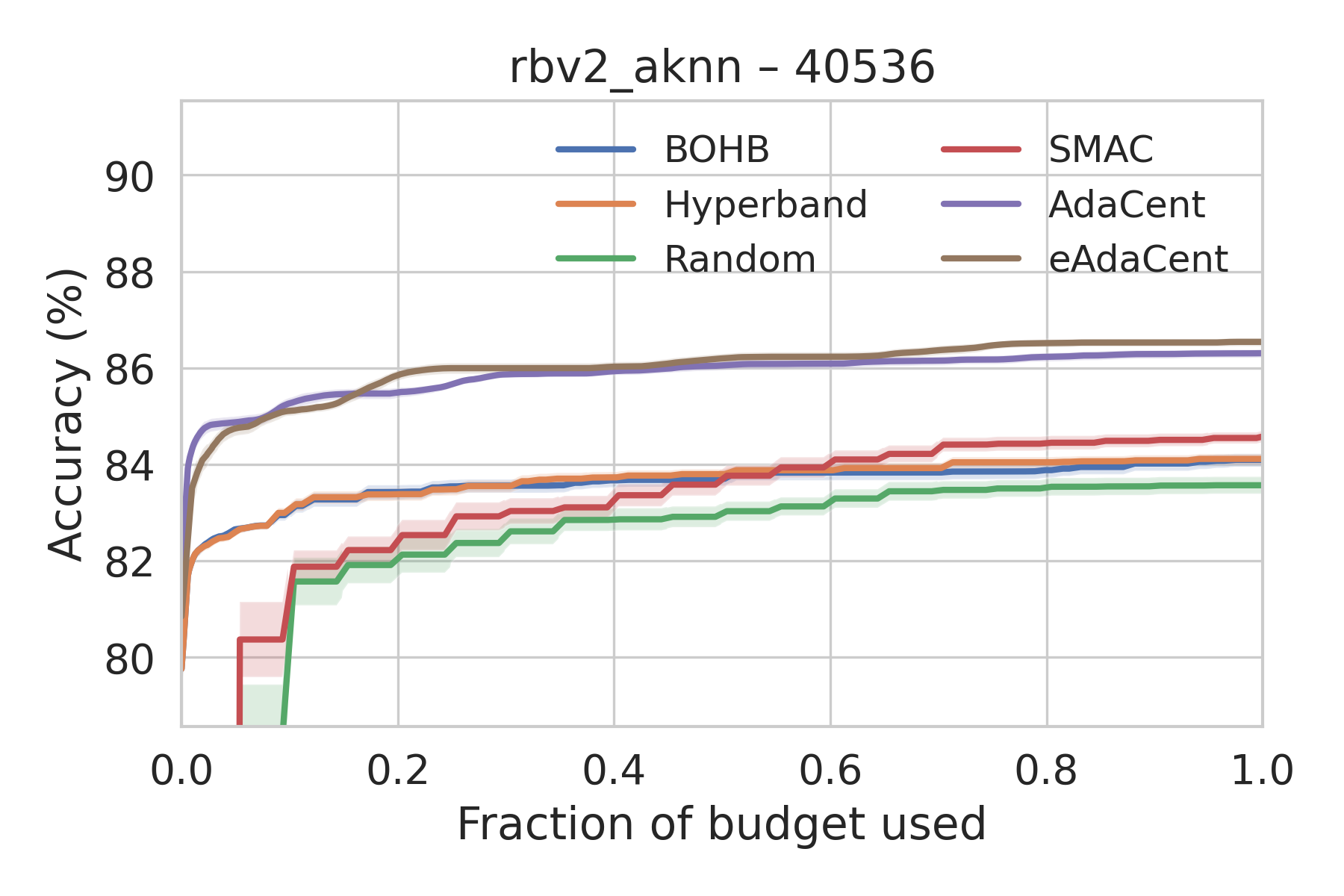}
  \end{minipage}\hfill
  \begin{minipage}[b]{0.3\textwidth}
    \includegraphics[width=\textwidth]{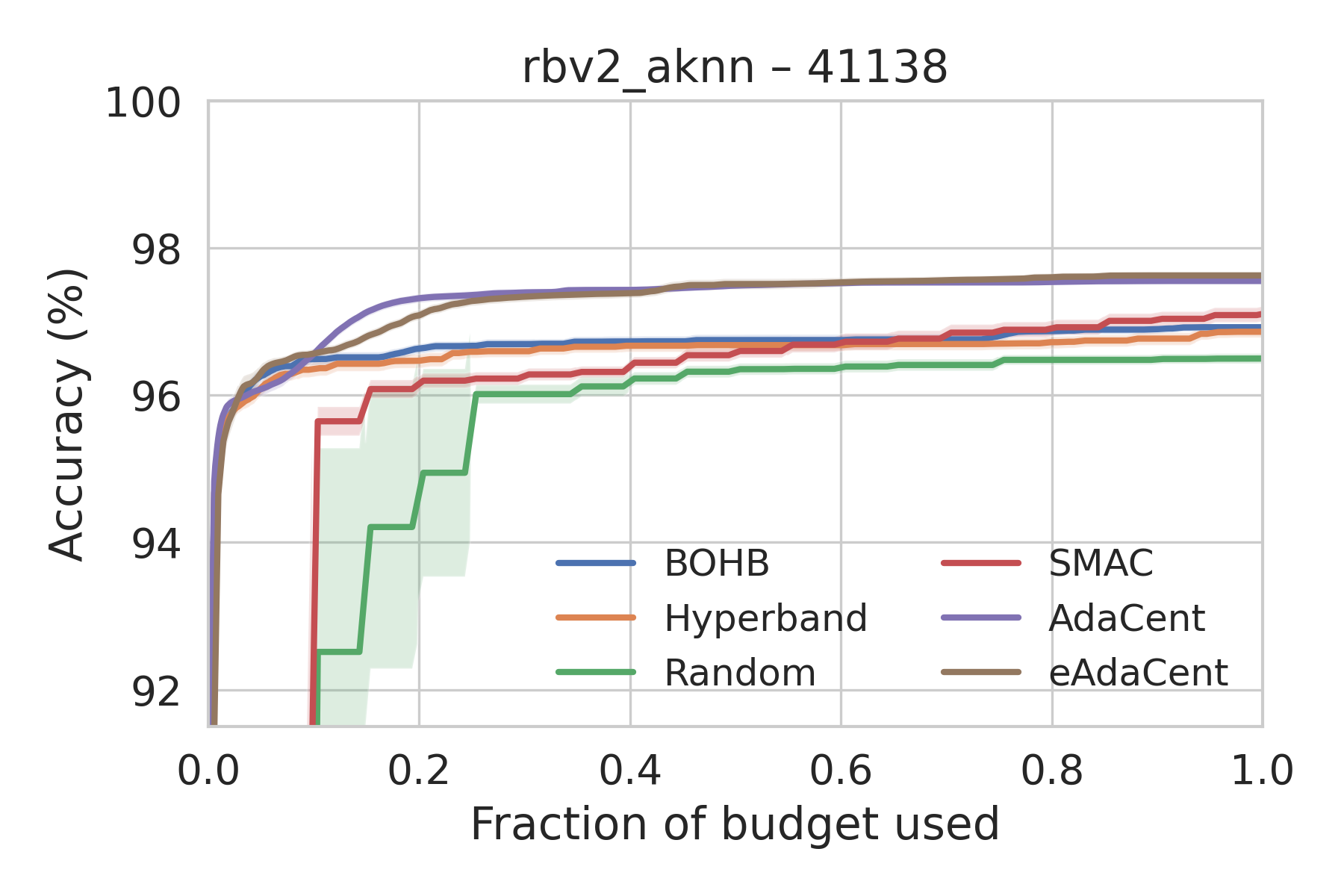}
  \end{minipage}

  \caption{Validation accuracy curves for 6 datasets from \texttt{rbv2\_aknn}. Both \AdaCent{} and \eAdaCent{} consistently outperform the baselines. Similar trends are observed across 118 datasets, as shown in Appendix~\ref{sec:figs}.}  
  \label{fig:acc:rbv2aknn}
\end{figure*}

\subsection{Validating Assumption~\ref{assump:assump2} and Estimating \(\epsilon\)}
\label{subsec:validation}

Assumption~\ref{assump:assump2} states that for any two hyperparameter configurations $\x_i,\x_j \in \X$, we have:
\[
\min_{b \in [T]} \frac{A(\x_i,b)}{A(\x_j,b)} \;\ge\; 1 - \epsilon \,\|\x_i - \x_j\|_2.
\]
We empirically evaluate this assumption using the original tabular data on which the YAHPO Gym surrogates were trained.  
For each configuration pair \((i, j)\), we define  
\[
\epsilon_{ij} = \max \left\{ 0,\, \frac{1 - \min_{b \in [T]} A(\x_i,b) / A(\x_j,b)}{\|\x_i - \x_j\|_2} \right\},
\]
as the smallest value that makes the inequality tight.  
Because the assumption depends on the distance between configurations, we summarize the results by reporting the \(\alpha\)-percentiles of \(\epsilon_{ij} \cdot r\), where \(r\) denotes the clustering radius determined by \KCenter{}, for \(\alpha \in \{90, 95, 98, 99\}\). Table~\ref{tab:summary-epsilon} presents results for a representative subset randomly selected from each scenario.

\begin{table}[h]
\centering
\small
\renewcommand{\arraystretch}{1.05}
\setlength{\tabcolsep}{10pt}
\begin{tabular}{|p{2cm}|p{2cm}|r|r|r|r|}
\hline
\textbf{Dataset} & \textbf{Task ID} & \textbf{\(\alpha = 90\)} & \textbf{\(\alpha = 95\)} & \textbf{\(\alpha = 98\)} & \textbf{\(\alpha = 99\)} \\
\hline
\multirow{10}{*}{\texttt{lcbench}}
& 3945   & 0.4051 & 0.5379 & 0.7023 & 0.8220 \\
& 7593   & 0.6003 & 0.7281 & 0.8853 & 1.0083 \\
 & 126025 & 0.2938 & 0.4527 & 0.5863 & 0.6845 \\
 & 167083 & 0.0374 & 0.0438 & 0.0522 & 0.0590 \\
 & 167149 & 0.3578 & 0.4164 & 0.4911 & 0.5467 \\
 & 167152 & 0.7890 & 0.9051 & 1.0615 & 1.1869 \\
 & 167190 & 0.4141 & 0.5318 & 0.6646 & 0.7656 \\
 & 168908 & 0.1974 & 0.2349 & 0.2781 & 0.3103 \\
 & 189354 & 0.2051 & 0.2419 & 0.2896 & 0.3289 \\
 & 189909 & 0.4045 & 0.5896 & 0.7651 & 0.8941 \\
\hline
\multirow{10}{*}{\texttt{rbv2\_rpart}} 
 & 11    & 0.0280 & 0.0402 & 0.0587 & 0.0801 \\
 & 14    & 0.0846 & 0.1043 & 0.1556 & 0.2162 \\
 & 60    & 0.0102 & 0.0148 & 0.0258 & 0.0336 \\
 & 377   & 0.1568 & 0.2124 & 0.3321 & 0.3641 \\
 & 1478 & 0.0090 & 0.0133 & 0.0202 & 0.0213 \\
 & 1487  & 0.0010 & 0.0011 & 0.0018 & 0.0020 \\
 & 4538  & 0.0222 & 0.0313 & 0.0472 & 0.0629 \\
 & 23381 & 0.0365 & 0.0514 & 0.0700 & 0.0829 \\
 & 40498 & 0.0247 & 0.0421 & 0.0507 & 0.0600 \\
 & 41278 & 0.0033 & 0.0053 & 0.0064 & 0.0075 \\
\hline
\multirow{10}{*}{\texttt{rbv2\_aknn}} 
 & 11    & 0.0849 & 0.1074 & 0.1316 & 0.1564 \\
 & 14    & 0.2579 & 0.2775 & 0.2947 & 0.3039  \\
 & 60    & 0.0379 & 0.0504 & 0.0603 & 0.0663 \\
 & 377   & 0.4216 & 0.4686 & 0.5103 & 0.5503 \\
 & 1487  & 0.0027 & 0.0038 & 0.0052 & 0.0053 \\
 & 1478 & 0.0233 & 0.0259 & 0.0269 & 0.0276 \\
 & 4538  & 0.0732 & 0.0807 & 0.0911 & 0.0944 \\
 & 23381 & 0.1058 & 0.1306 & 0.1620 & 0.1786 \\
 & 40498 & 0.0710 & 0.0958 & 0.1097 & 0.1124 \\
 & 41278 & 0.0132 & 0.0144 & 0.0160 & 0.0168 \\
\hline
\end{tabular}
\caption{\(\alpha\)-percentiles of $\epsilon \cdot r$ computed from \texttt{lcbench} and \texttt{rbv2} tabular data.}
\label{tab:summary-epsilon}
\end{table}

\subsection{Experiments on \texttt{lcbench}}
\label{subsec:experiments-lcbench}

We first evaluate \AdaCent{} and \eAdaCent{} on the \texttt{lcbench} scenario, which comprises 35 classification tasks from OpenML with a seven-dimensional hyperparameter space and a single fidelity parameter (\texttt{epoch}).  
All hyperparameters are normalized to $[0,1]^d$ using linear or logarithmic scaling.  
Evaluations are performed over discrete training budgets $b \in [T]$ using YAHPO Gym's standardized fidelity grid and noise settings, ensuring comparability with prior work.  
Figure~\ref{fig:acc:lcbench} summarizes performance across nine randomly selected tasks.

\subsection{Experiments on \texttt{rbv2}}
\label{subsec:experiments-rbv2}

Our second evaluation examines the two \texttt{rbv2} scenarios, which model classical machine learning algorithms with fidelity determined by the fraction of training data used (\texttt{trainsize}).

\paragraph{\texttt{rbv2\_rpart} Scenario. }
The \texttt{rbv2\_rpart} scenario simulates decision tree induction across 117 classification datasets using surrogate models. We evaluate \AdaCent{} and \eAdaCent{} by querying these surrogates at discrete training fractions $b \in [T]$, corresponding to the default \texttt{trainsize} grid. This setup enables multi-fidelity HPO without retraining models from scratch. Figure~\ref{fig:acc:rbv2} shows accuracy curves for six randomly selected datasets. In practice, we query only these predefined fidelity levels, using the surrogate responses to read off performance at each $b$ in a uniform manner. The evaluation remains fully surrogate-based throughout, so comparisons across budgets and datasets are made without any additional fitting.

\paragraph{\texttt{rbv2\_aknn} Scenario. }
The \texttt{rbv2\_aknn} scenario tunes approximate $k$-nearest neighbor classifiers on 118 classification datasets, again using \texttt{trainsize} as the fidelity parameter. Following the same protocol as \texttt{rbv2\_rpart}, we query the surrogate models at predefined data fractions to simulate realistic multi-fidelity evaluations under a fixed budget. Figure~\ref{fig:acc:rbv2aknn} presents accuracy curves for six randomly selected datasets. As with decision trees, all measurements are taken directly from the surrogates at the specified fractions $b \in [T]$, keeping the budget-to-performance mapping consistent.

\subsection{Results}
Across all three YAHPO scenarios, \eAdaCent{} consistently outperforms all baselines in both final and anytime performance. As shown in Figure~\ref{fig:meanrank:all}, its mean rank stabilizes around 1.4 on \texttt{lcbench} and remains below 2.2 on both \texttt{rbv2} tracks after 20--25\% of the budget. \AdaCent{} ranks second overall, while other baselines form a lower tier. Hyperband and BOHB benefit from early stopping, briefly leading but stalling later, consistent with prior findings~\citep{Li-2018,Falkner-2018}. Figures~\ref{fig:acc:lcbench},~\ref{fig:acc:rbv2}, and \ref{fig:acc:rbv2aknn} show that \eAdaCent{} converges quickly, often reaching peak performance just past the halfway budget, while \AdaCent{} converges later but still outperforms Hyperband and BOHB, highlighting the value of early pruning without neighborhood-aware exploration.

\section{Conclusion}

We introduced the \textit{Unknown Value Probing} (UVP) problem as a principled formulation of budget-constrained model selection under monotonicity and smoothness assumptions. 
We analyzed clustering-based algorithms, \FullCent{} and its feedback-aware variant \eFullCent{}, both offering near-optimal guarantees. 
Building on these, we proposed \AdaCent{}, which achieves the same theoretical guarantees, and \eAdaCent{}, that adaptively focuses the budget on promising regions via value-aware clustering.
Across diverse YAHPO Gym scenarios, \eAdaCent{} consistently outperforms strong baselines in both anytime and final performance. 
These findings demonstrate that exploiting structural properties of the search space enables principled and efficient model selection in hyperparameter optimization and related tasks.

\bibliographystyle{plain}
\bibliography{main}

\appendix

\section{Assumption~\ref{assump:assump2}}\label{sec:A1}

\begin{lemma}
\label{lem:budget-lip}
Let $A:[0,1]^d\times[T]\to[0,1]$ satisfy Assumption~\ref{assump:assump2}.
Then, for every pair of configurations $\x_i,\x_j\in\R^d$,
\begin{equation*}
\max_{b\in[T]}
\bigl|A(\x_i,b)-A(\x_j,b)\bigr|
\le
\epsilon\|\x_i-\x_j\|_2.
\end{equation*}
\end{lemma}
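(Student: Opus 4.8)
The plan is to reduce the bound over all budgets to a pointwise statement: fix an arbitrary budget $b\in[T]$ and prove $|A(\x_i,b)-A(\x_j,b)|\le\eps\|\x_i-\x_j\|_2$, after which taking the maximum over $b$ closes the argument. Since the target inequality is symmetric in $i$ and $j$, I would assume without loss of generality that $A(\x_i,b)\ge A(\x_j,b)$, so that $|A(\x_i,b)-A(\x_j,b)|=A(\x_i,b)-A(\x_j,b)$.

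Next I would invoke Assumption~\ref{assump:assump2} with the roles of $\x_i$ and $\x_j$ interchanged. Because the minimum there ranges over all budgets, it implies in particular that $A(\x_j,b)/A(\x_i,b)\ge 1-\eps\|\x_i-\x_j\|_2$ for our fixed $b$. When $A(\x_i,b)>0$, multiplying through by $A(\x_i,b)$ gives $A(\x_j,b)\ge(1-\eps\|\x_i-\x_j\|_2)\,A(\x_i,b)$, hence
\[
A(\x_i,b)-A(\x_j,b)\;\le\; \eps\|\x_i-\x_j\|_2\cdot A(\x_i,b)\;\le\;\eps\|\x_i-\x_j\|_2,
\]
where the final step uses $A(\x_i,b)\in[0,1]$.

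The only thing needing care is the degenerate cases handled by the ratio conventions. If $A(\x_i,b)=0$, then $0\le A(\x_j,b)\le A(\x_i,b)=0$ forces $A(\x_j,b)=0$ and the bound is trivial. If $A(\x_j,b)=0<A(\x_i,b)$, the convention sets $A(\x_j,b)/A(\x_i,b)=0$, so the assumption yields $0\ge 1-\eps\|\x_i-\x_j\|_2$, i.e.\ $\eps\|\x_i-\x_j\|_2\ge 1$, and then $A(\x_i,b)-A(\x_j,b)=A(\x_i,b)\le 1\le\eps\|\x_i-\x_j\|_2$. In every case the pointwise bound holds, and maximizing over $b\in[T]$ gives the lemma. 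I do not expect a genuine obstacle here: the argument is essentially a one-line rearrangement of Assumption~\ref{assump:assump2} once the $A$-values are known to lie in $[0,1]$, and the bookkeeping of the zero cases is the only place requiring slight attention.
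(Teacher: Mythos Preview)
Your argument is correct and follows essentially the same route as the paper: apply Assumption~\ref{assump:assump2} (with the roles swapped so the larger value sits in the denominator), rearrange to obtain $|A(\x_i,b)-A(\x_j,b)|\le\eps\|\x_i-\x_j\|_2\cdot\max\{A(\x_i,b),A(\x_j,b)\}$, and use $A\le 1$. Your explicit handling of the zero cases is a bit more careful than the paper's version, but the core idea is identical.
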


\begin{proof}
Assumption~\ref{assump:assump2} gives, for all $b\in[T]$,
\(
A(\x_i,b)\ge(1-\epsilon\|\x_i-\x_j\|_2)A(\x_j,b)
\)
and, after swapping $\x_i,\x_j$, 
\(
A(\x_j,b)\ge(1-\epsilon\|\x_i-\x_j\|_2)A(\x_i,b).
\)
Subtracting the smaller side from the larger in each inequality and
taking absolute values yields
\[
|A(\x_i,b)-A(\x_j,b)|
\le
\epsilon\|\x_i-\x_j\|_2
\max\!\bigl\{A(\x_i,b),A(\x_j,b)\bigr\}.
\]
Because $A(\cdot,\cdot)\subseteq[0,1]$, the $\max\{\cdot\}$ term is at
most 1, leading directly to the inequality in the lemma.
\end{proof}

\paragraph{Functional-distance interpretation. }
Define a metric on the value functions by
\[
d\!\left(A(\x_i,\cdot),A(\x_j,\cdot)\right)
:=1-\min_{b\in[T]}
\Bigl\{\tfrac{A(\x_i,b)}{A(\x_j,b)},
\tfrac{A(\x_j,b)}{A(\x_i,b)}\Bigr\}.
\]
Assumption~\ref{assump:assump2} is therefore equivalent to
\[
d\!\left(A(\x_i,\cdot),A(\x_j,\cdot)\right)\le
\epsilon\|\x_i-\x_j\|_2,
\]
i.e.\ the map $\x\mapsto A(\x,\cdot)$ is $\epsilon$-Lipschitz.

Observe that $d\in[0,1]$ and is \emph{scale-invariant}: multiplying both curves by any constant in $(0,1]$ leaves $d$ unchanged, so the distance captures purely \emph{relative} discrepancies. Moreover, for any third configuration $\x_k$ one has
\(
d\bigl(A(\x_i,\cdot),A(\x_k,\cdot)\bigr)
\le d\bigl(A(\x_i,\cdot),A(\x_j,\cdot)\bigr)
      +d\bigl(A(\x_j,\cdot),A(\x_k,\cdot)\bigr),
\)
giving the triangle inequality via the multiplicative chaining
$\min\{a/c,c/a\}\!\ge\!\min\{a/b,b/a\}\min\{b/c,c/b\}$ pointwise in $b$. Hence the metric defined previously is a bona-fide metric, well suited for analyzing the ratio-style stability posited in Assumption~\ref{assump:assump2}.

\section{Synthetic‑Landscape Comparison of \FullCent{} and \eFullCent{}}
\label{app:synthetic}

\begin{figure*}[t]
    \centering
    \begin{subfigure}[b]{0.32\textwidth}
        \centering
        \includegraphics[width=\linewidth]{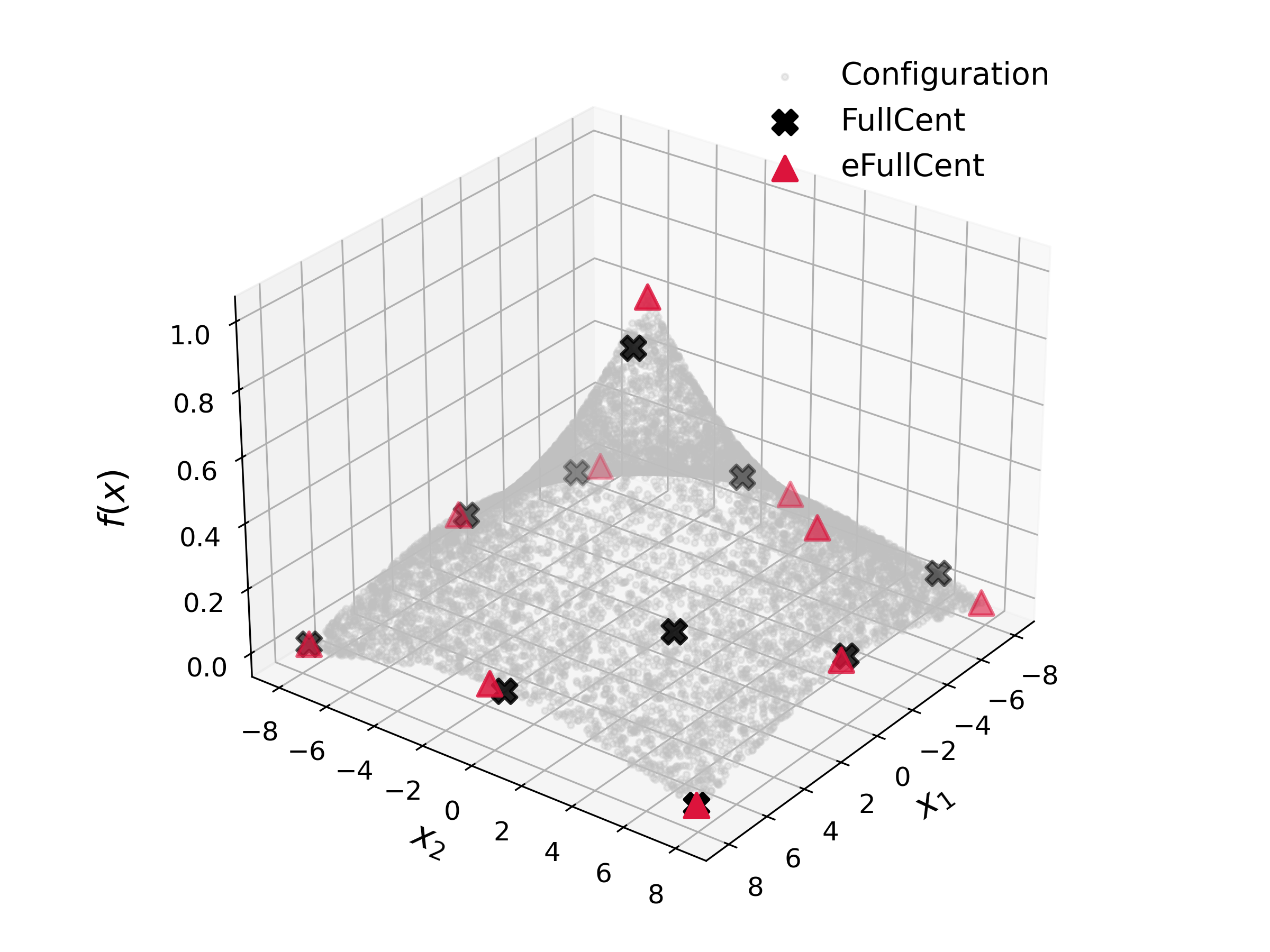}
        \caption{Radial decay}
        \label{fig:radial}
    \end{subfigure}\hfill
    \begin{subfigure}[b]{0.32\textwidth}
        \centering
        \includegraphics[width=\linewidth]{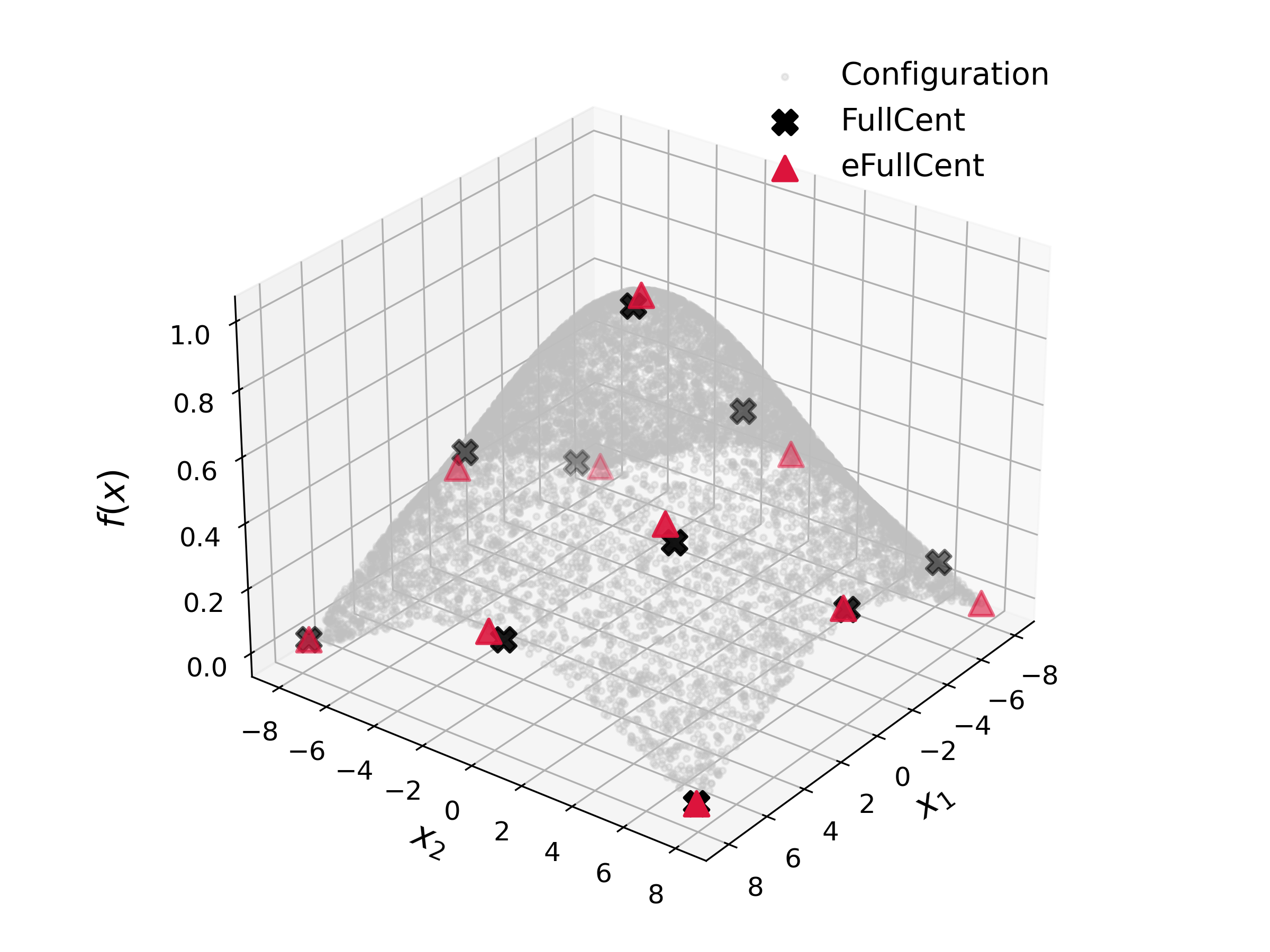}
        \caption{Off‑centre peak}
        \label{fig:offcentre}
    \end{subfigure}\hfill
    \begin{subfigure}[b]{0.32\textwidth}
        \centering
        \includegraphics[width=\linewidth]{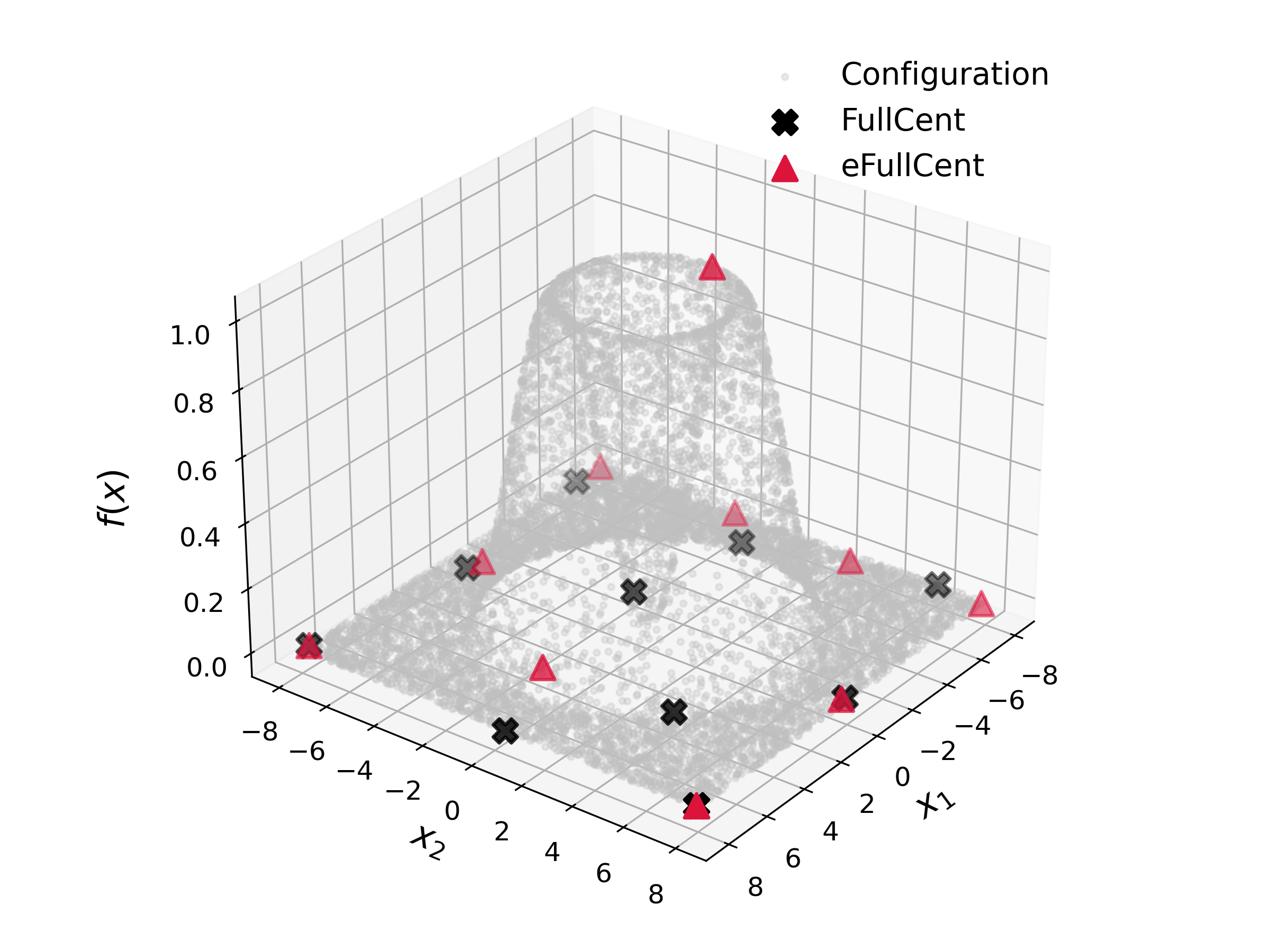}
        \caption{Ring cosine}
        \label{fig:ring}
    \end{subfigure}
    \caption{Qualitative comparison on three synthetic landscapes. Triangles (\eFullCent{}) and crosses (\FullCent{}) indicate the centers selected by the respective algorithms. The best center selected by \eFullCent{} consistently attains a higher value than the best center selected by \FullCent{}.}
    \label{fig:synthetic}
\end{figure*}

To visualise how \FullCent{} and 
\eFullCent{} behave compared to each other, we construct
three analytic landscapes that provably satisfy Assumption \ref{assump:assump2}.  All
functions are defined on the square configuration domain
\(\mathcal D=[-8,8]^2\subset\mathbb R^2\).
For every experiment we draw \(n=10\,000\) configurations
\(\x_i\sim\mathcal U(\mathcal D)\) and evaluate a single‑budget
performance metric \(f(\x) = A(\x, 1)\). Note that we evaluate each configuration at a single final budget (\(T = 1\)),
    hence Assumption \ref{assump:assump1} is trivially satisfied.
We set the smoothness parameter
\(\eps=0.2\) and the total budget to \(B = 10\).  Resulting plots are shown in Figure~\ref{fig:synthetic}.

\begin{itemize}
  \item[(a)] \textbf{Radial Decay}\\[2pt]
  The value function is an isotropic exponential
  \[
    f_{\text{rad}}(\x)=
    \exp\bigl(-\lambda\lVert\x\rVert_2\bigr),
    \qquad \lambda=0.18.
  \]

  The gradient magnitude is bounded by
  \(\lVert\nabla f_{\text{rad}}\rVert
  =\lambda f_{\text{rad}}\le\lambda\).  
  Setting \(\eps\ge\lambda=0.18\) gives the
  guarantee required by Assumption \ref{assump:assump2}.
  Using the mean value inequality, we have
  \[
    |f_{\text{rad}}(\x) - f_{\text{rad}}(\mathbf y)|
    \le \lambda \lVert\x - \mathbf y\rVert_2.
  \]
  Assuming \(f_{\text{rad}}(\x) \le f_{\text{rad}}(\mathbf y)\), we obtain
  \[
    \frac{f_{\text{rad}}(\x)}{f_{\text{rad}}(\mathbf y)}
    \ge 1 - \lambda \lVert\x - \mathbf y\rVert_2
    \ge  1 - \eps\lVert\x - \mathbf y\rVert_2.
  \]

  \item[(b)] \textbf{Off-Centre Peak}\\[2pt]
  We superimpose a broad base and a narrow displaced peak
  \[
    f_{\text{off}}(\x)
    =
    \underbrace{b
    \exp\bigl(-\lVert\x\rVert_2^2/(2\sigma_b^2)\bigr)}_{%
      \text{base, }\sigma_b=10}
    +
  \underbrace{\exp\!\bigl(-\lVert\x-\mathbf c\rVert_2^2/(2\sigma_p^2)\bigr)}_{%
      \text{peak, }\mathbf c=(0.2,-0.1), \ , \sigma_p=5}.
  \]
  We also set \(b = 0.6\) in our simulations.

  Both Gaussian components are infinitely differentiable; their sum is
  therefore smooth.  The gradient of each term is bounded by
  \(\lVert\nabla f_{\text{off}}\rVert_2\le \sigma_{\min}^{-1}f_{\text{off}}\), where
  \(\sigma_{\min}=\min\{\sigma_b,\sigma_p\}=5\).
  With a similar argument as before, choosing \(\eps=0.2\) again suffices for
  Assumption \ref{assump:assump2}.  Monotonicity in budget holds as before.

  \item[(c)] \textbf{Cosine Ring}\\[2pt]
  We define a smooth ring-shaped landscape by
  \[
  f_{\mathrm{ring}}(\x)
  =\begin{cases}
  b + \frac{h + h\cos\bigl(\tfrac{\pi}{w}(\|\x\|_2 - R)\bigr)}{2}
  & \bigl|\|\x\|_2 - R\bigr| \le w,\\
  b& \text{otherwise},
  \end{cases}
  \]
  with parameters
  \[
  R = 3, 
  \quad w = 3, 
  \quad h = 0.06,
  \quad b = 0.2.
  \]
  Outside the band $\bigl|\|\x\|_2 - R\bigr|\le w$, the function is constant
  $f_{\mathrm{ring}}=b$, and inside it has a single smooth cosine bump of
  height~$h$ above base~$b$.

  One checks
  \[
  \|\nabla f_{\mathrm{ring}}(\x)\|_2\le L=\frac{h\pi}{2w}.
  \]
  Since \(|f_{\mathrm{ring}}(\x)-f_{\mathrm{ring}}(\mathbf y)|
    \le L\|\x-\mathbf y\|_2\) (by the mean value theorem) and
  \(f_{\mathrm{ring}}\ge b\),  
  \[
  \frac{f_{\mathrm{ring}}(\x)}{f_{\mathrm{ring}}(\mathbf y)}
    \ge1-\frac{L}{b}\|\x-\mathbf y\|_2
    \ge 1 - \eps \|\x-\mathbf y\|_2.
  \]
\end{itemize}

\begin{figure*}[t]
  \centering
    \begin{subfigure}[b]{0.32\textwidth}
        \centering
        \includegraphics[width=\linewidth]{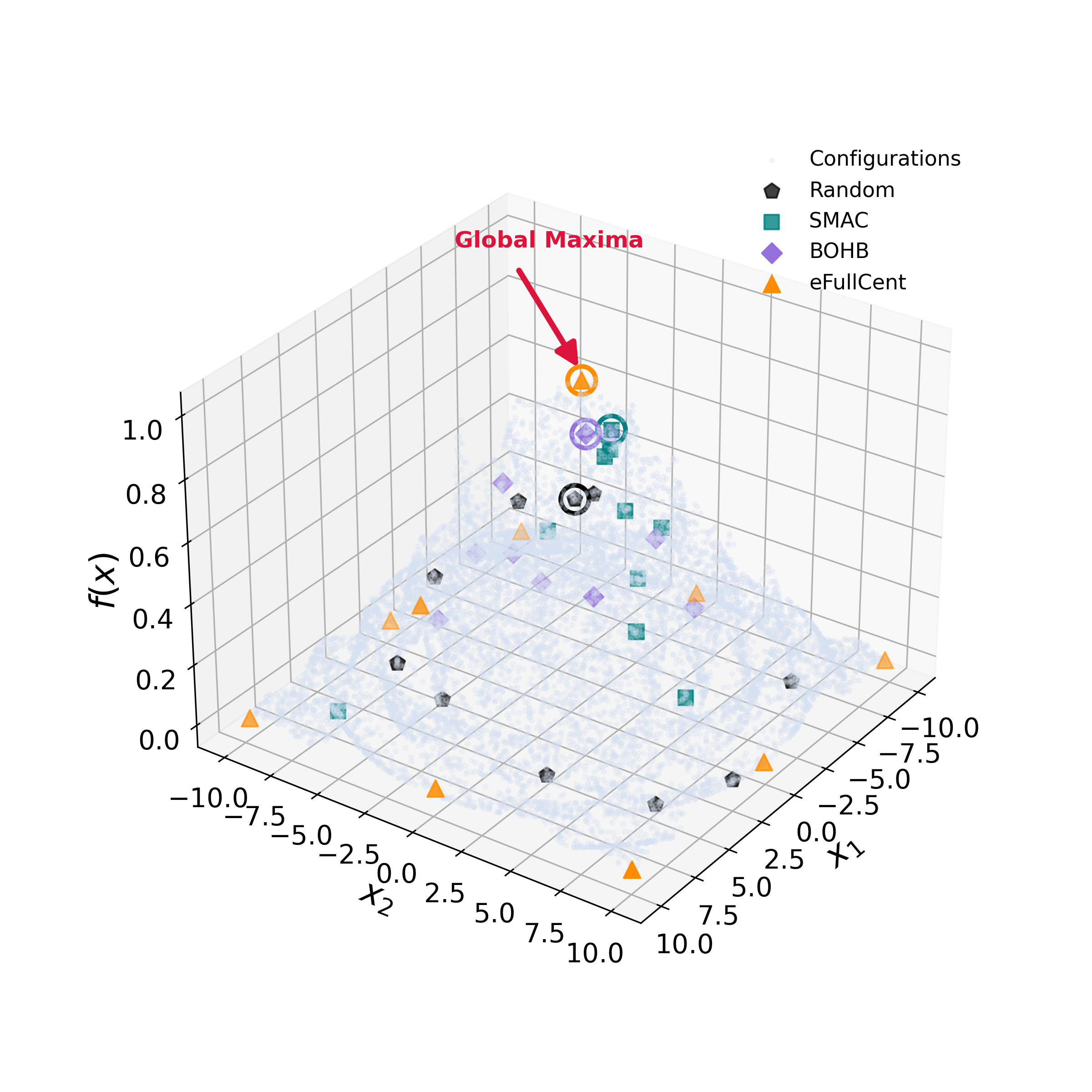}
        \caption{Radial ripples}
        \label{fig:radial}
    \end{subfigure}\hfill
    \begin{subfigure}[b]{0.32\textwidth}
        \centering
        \includegraphics[width=\linewidth]{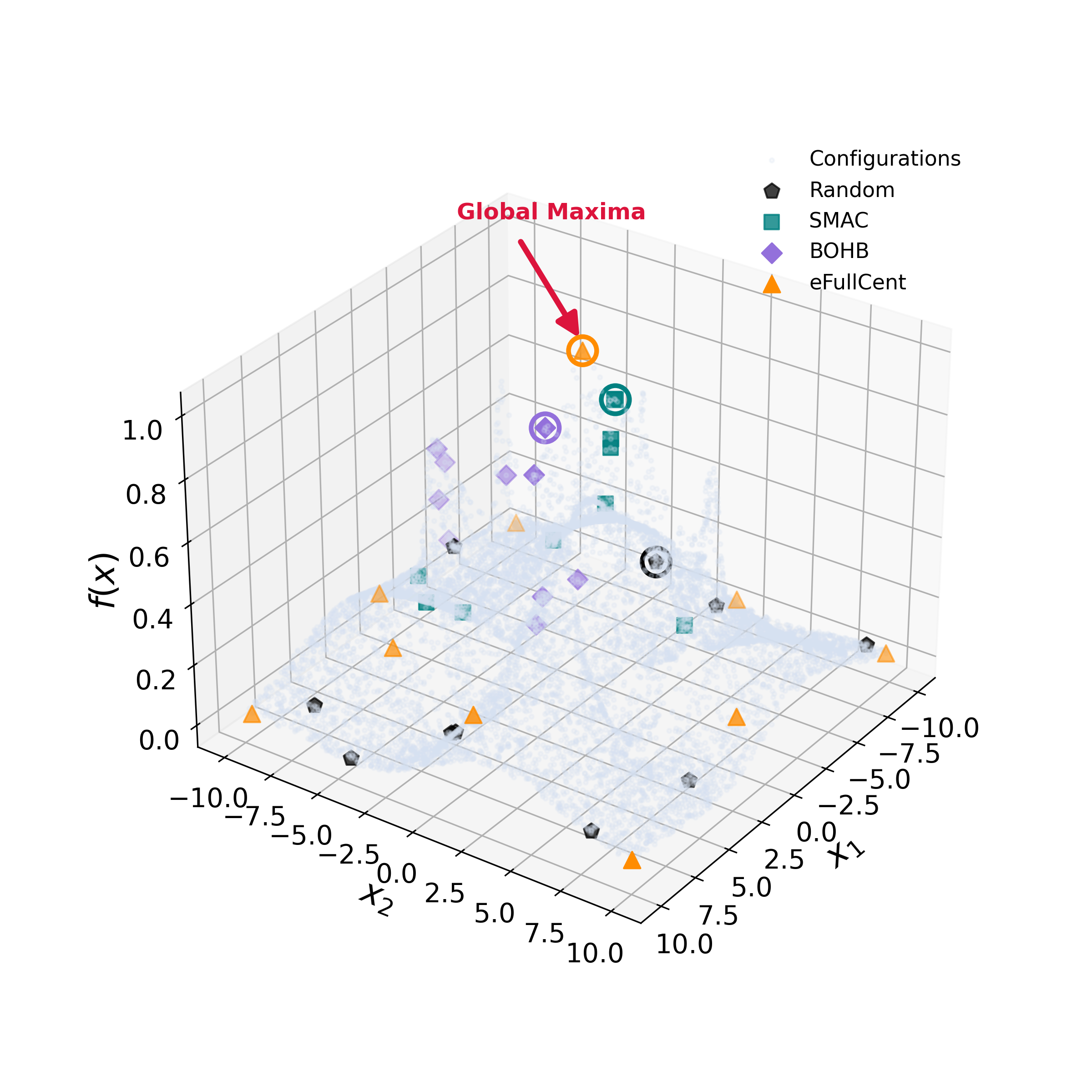}
        \caption{Double rings}
        \label{fig:offcentre}
    \end{subfigure}\hfill
    \begin{subfigure}[b]{0.32\textwidth}
        \centering
        \includegraphics[width=\linewidth]{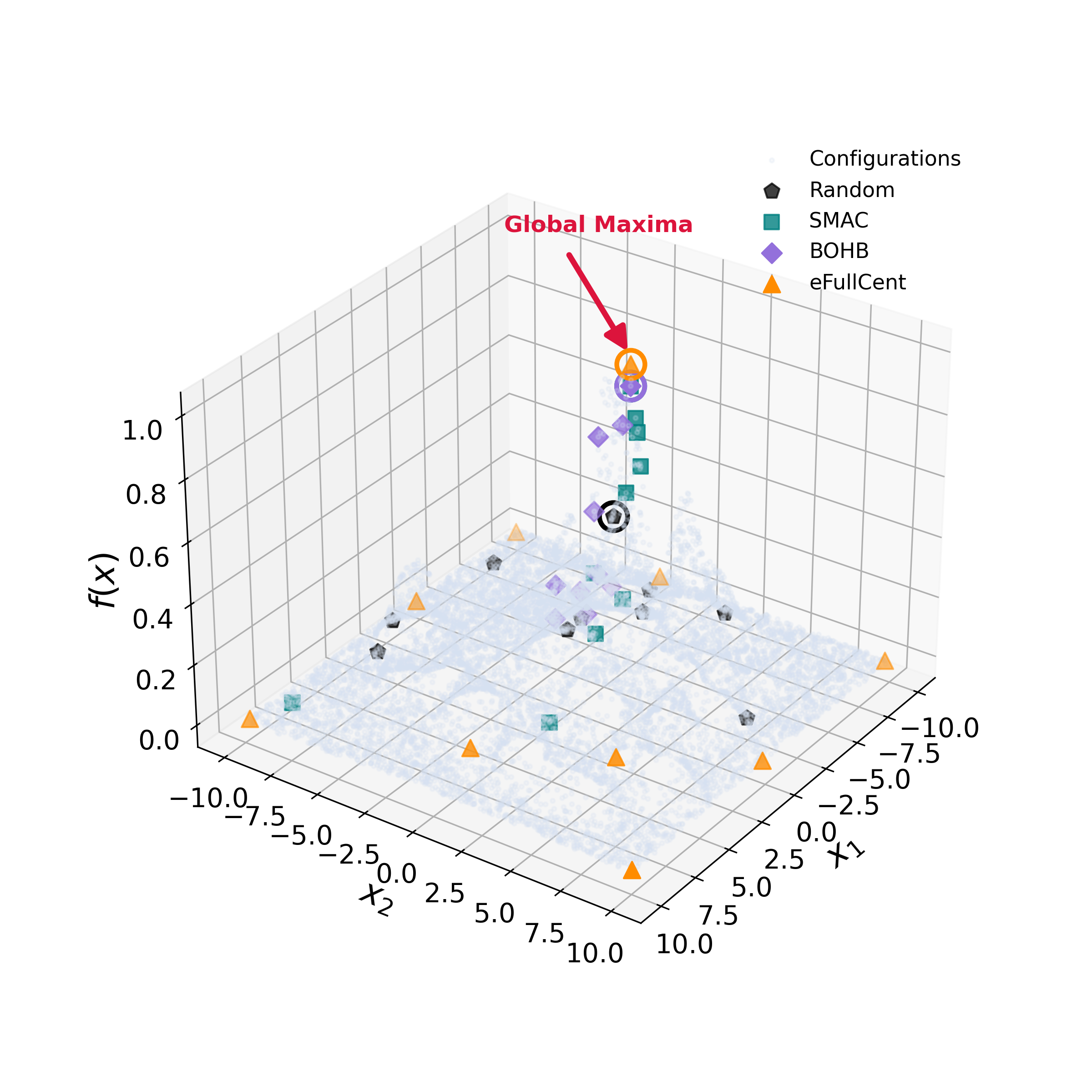}
        \caption{Multi-modal bumps}
        \label{fig:ring}
    \end{subfigure}
  \caption{Visualization of the centers selected by the four algorithms on the benchmark landscapes. Each marker shape represents one algorithm: pentagons (Random), squares (SMAC), diamonds (BOHB), and triangles (\eFullCent{}). Hollow markers indicate each algorithm’s best‑found solution, and the red arrow highlights the true global maximum. Notably, \eFullCent{} consistently identifies the global maximum, demonstrating superior performance over the baselines.}
  \label{fig:synth_landscape}
\end{figure*}

\section{Synthetic‑Landscape Comparison of \eFullCent{} and Baselines}
\label{sec:supp-landscape}

We benchmarked \eFullCent{}
against three established baselines: Random Search, SMAC, and BOHB, 
on four complex and difficult two‑dimensional test functions.
All functions are defined on the square configuration domain
\(\mathcal D=[-10,10]^2\subset\mathbb R^2\).
For every experiment we drew \(n=10000\) configurations
\(\mathbf x_i\sim\mathcal U(\mathcal D)\) once and reused them across
methods to ensure strict comparability.

\subsubsection{Algorithm Configurations}
In our simulations  
We used \(T=1\), under which \eAdaCent{} and \eFullCent{} algorithms coincide and are therefore identical, and \(\eps = 0.2\).
We also use a total budget of \(B=10\) for value probing
so that \(f(\mathbf x)=A(\mathbf x,1)\) is observed exactly ten times per
method and surface.
BOHB was run with
\(B_{\max}=27\) and  \(\eta=3\);
SMAC used its default intensification loop and a
100‑tree random‑forest surrogate.
Both BOHB and SMAC were executed five times with different random
seeds, and we report the best run per method.
Although the candidate points were randomly scattered and not arranged
in clusters, \eFullCent{} still achieved the highest maxima on every surface,
demonstrating robustness to unstructured search spaces.
The resulting plots are visualised in
Figure~\ref{fig:synth_landscape}.

\subsubsection{Analytical Surfaces}
We reproduce the closed‑form definitions for completeness.
Let \(r=\sqrt{x^{2}+y^{2}}\) and
\(\theta=\operatorname{\arctan}\left(\frac{y}{x}\right)\).

\begin{itemize}
    \item [(a)] \textbf{Radial ripples} \[
f_{a}(x,y) =
  \frac{1}{2}\bigl(\sin 3r + 1\bigr)
  e^{-\tfrac{r^{2}}{50}}
   + \sum_{i=1}^{150}
    h_{i}\exp\Bigl(
      -\tfrac{(x-c_{x,i})^{2} + (y-c_{y,i})^{2}}{2s_{i}^{2}}
    \Bigr).
\]
\item [(b)] \textbf{Double rings}\[
f_{b}(x,y) =
  0.5e^{-(r-3)^{2}/(2\cdot0.18^{2})}
  + 0.4e^{-(r-6)^{2}/(2\cdot0.25^{2})}
   + 0.3\bigl(\sin 4\theta + 1\bigr)e^{-r^{2}/90} \quad \text{with } r \in \{3, 6\}.\]
\item [(c)] \textbf{Multi‑modal bumps} \[
f_{c}(x,y) =
  0.4e^{-(x^{2}+y^{2})/(2\cdot4.5^{2})}
  + \sum_{j=1}^{30}
    h_{j}\exp\Bigl(
      -\tfrac{(x-c_{x,j})^{2} + (y-c_{y,j})^{2}}{2s_{j}^{2}}
    \Bigr).
\]
\end{itemize}
All heights \(h_{\cdot}\in[0.03,1.0]\), centres
\((c_{x},c_{y})\in[-8,8]^{2}\), and
scales \(s_{\cdot}\in[0.15,0.8]\) were sampled once and fixed
throughout the study.

\section{Omitted Figures and Tables}
\label{sec:figs}

This section provides all of the figures and tables which have been omitted due to space restrictions. Table~\ref{tab:scenario_detailed} summarizes the evaluation scenarios used for our algorithms \AdaCent{} and \eAdaCent{} and for Assumption~\ref{assump:assump2}. Table~\ref{tab:lcbench-epsilon} reports percentile values of the parameter $\epsilon$ from Assumption~\ref{assump:assump2} across all \texttt{lcbench} tasks, while Tables~\ref{tab:rpart-epsilon} and \ref{tab:aknn-epsilon} provide the corresponding $\epsilon$ percentiles for the \texttt{rbv2\_rpart} and \texttt{rbv2\_aknn} datasets, respectively. Figure~\ref{fig:grid:lcbench} presents validation accuracies for all \texttt{lcbench} tasks using YAHPO Gym surrogates and compares multiple algorithms; the same analysis is shown for \texttt{rbv2\_rpart} in Figures~\ref{fig:grid:rpart1}--\ref{fig:grid:rpart3} and for \texttt{rbv2\_aknn} in Figures~\ref{fig:grid:aknn1}--\ref{fig:grid:aknn4}.

\begin{table*}[t]
\centering
\renewcommand{\arraystretch}{1.2}
\setlength{\tabcolsep}{8pt} % Wider column spacing
\begin{tabular}{|p{2.2 cm}|p{3.2cm}|p{2.2cm}|p{2.7cm}|p{4cm}|}
\hline
\textbf{Scenario} & \textbf{Hyperparameter} & \textbf{Type} & \textbf{Range / Values} & \textbf{Notes} \\
\hline

\multirow{9}{*}{\texttt{lcbench}} 
& \texttt{batch\_size} & int (log) & $[16, 512]$ & Training batch size \\
& \texttt{learning\_rate} & float (log) & $[10^{-4}, 0.1]$ & Step size \\
& \texttt{momentum} & float & $[0.1, 0.9]$ & SGD momentum \\
& \texttt{weight\_decay} & float & $[10^{-5}, 0.1]$ & Regularization \\
& \texttt{num\_layers} & int & $[1, 5]$ & Depth of network \\
& \texttt{max\_units} & int (log) & $[64, 1024]$ & Hidden layer width \\
& \texttt{max\_dropout} & float & $[0, 1]$ & Dropout rate \\
\cline{2-5}
& \multicolumn{4}{|l|}{\textit{Fidelity Parameter:} Number of training epochs} \\
\cline{2-5}
& \multicolumn{4}{|l|}{\textit{Datasets:} 35 OpenML classification tasks} \\
\hline

\multirow{7}{*}{\texttt{rbv2\_rpart}} 
& \texttt{cp} & float (log) & $[0.001, 1]$ & Complexity parameter \\
& \texttt{maxdepth} & int & $[1, 30]$ & Max tree depth \\
& \texttt{minbucket} & int & $[1, 100]$ & Min terminal samples \\
& \texttt{minsplit} & int & $[1, 100]$ & Min split samples \\
\cline{2-5}
& \multicolumn{4}{|l|}{\textit{Fidelity Parameter:} Fraction of training dataset} \\
\cline{2-5}
& \multicolumn{4}{|l|}{\textit{Datasets:} 117 classification tasks} \\
\hline

\multirow{8}{*}{\texttt{rbv2\_aknn}} 
& \texttt{k} & int & $[1, 50]$ & Number of neighbors \\
& \texttt{M} & int & $[18, 50]$ & Candidate neighbors \\
& \texttt{ef} & int (log) & $[7, 403]$ & Search parameter \\
& \texttt{ef\_construction} & int (log) & $[7, 403]$ & Index building param \\
\cline{2-5}
& \multicolumn{4}{|l|}{\textit{Fidelity Parameter:} Fraction of training dataset} \\
\cline{2-5}
& \multicolumn{4}{|l|}{\textit{Datasets:} 118 classification tasks} \\
\hline
\end{tabular}
\caption{Comparison of scenarios, including hyperparameter search spaces, dataset coverage, and fidelity parameters.}
\label{tab:scenario_detailed}
\end{table*}

\begin{table*}[!hbtp]
\centering
\begin{tabular}{|l|r|r|r|r|}
\hline
\textbf{OpenML Task ID} & \textbf{\(\alpha = 90\)} & \textbf{\(\alpha = 95\)} & \textbf{\(\alpha = 98\)} & \textbf{\(\alpha = 99\)} \\
\hline
3945   & 0.4051 & 0.5379 & 0.7023 & 0.8220 \\
7593   & 0.6003 & 0.7281 & 0.8853 & 1.0083 \\
34539  & 0.5739 & 0.7018 & 0.8716 & 1.0060 \\
126025 & 0.2938 & 0.4527 & 0.5863 & 0.6845 \\
126026 & 0.3273 & 0.4741 & 0.5987 & 0.6912 \\
126029 & 0.4211 & 0.5878 & 0.7457 & 0.8655 \\
146212 & 0.7347 & 0.8707 & 1.0503 & 1.1970 \\
167083 & 0.0374 & 0.0438 & 0.0522 & 0.0590 \\
167104 & 0.3948 & 0.4629 & 0.5512 & 0.6199 \\
167149 & 0.3578 & 0.4164 & 0.4911 & 0.5467 \\
167152 & 0.7890 & 0.9051 & 1.0615 & 1.1869 \\
167161 & 0.3689 & 0.4447 & 0.5422 & 0.6195 \\
167168 & 0.4730 & 0.5536 & 0.6552 & 0.7337 \\
167181 & 0.6253 & 0.7555 & 0.9239 & 1.0617 \\
167184 & 0.4524 & 0.5617 & 0.6972 & 0.8058 \\
167185 & 0.7263 & 0.8283 & 0.9576 & 1.0582 \\
167190 & 0.3707 & 0.4760 & 0.5949 & 0.6853 \\
167200 & 0.1961 & 0.2300 & 0.2736 & 0.3069 \\
167201 & 0.5263 & 0.6494 & 0.8106 & 0.9397 \\
168329 & 0.8584 & 0.9864 & 1.1633 & 1.3061 \\
168330 & 0.4074 & 0.5253 & 0.6794 & 0.7971 \\
168331 & 0.5892 & 0.6956 & 0.8342 & 0.9453 \\
168335 & 0.3712 & 0.5056 & 0.6365 & 0.7348 \\
168868 & 0.2265 & 0.5595 & 0.8054 & 0.9452 \\
168908 & 0.1974 & 0.2349 & 0.2781 & 0.3103 \\
168910 & 0.6162 & 0.7068 & 0.8216 & 0.9105 \\
189354 & 0.2051 & 0.2419 & 0.2896 & 0.3289 \\
189862 & 0.3253 & 0.3818 & 0.4521 & 0.5051 \\
189865 & 0.3317 & 0.3858 & 0.4529 & 0.5042 \\
189866 & 0.1681 & 0.1958 & 0.2310 & 0.2585 \\
189873 & 0.9417 & 1.0798 & 1.2738 & 1.4320 \\
189905 & 0.7343 & 0.8618 & 1.0343 & 1.1728 \\
189906 & 0.6518 & 0.7592 & 0.8997 & 1.0136 \\
189908 & 0.4438 & 0.5694 & 0.6904 & 0.7772 \\
189909 & 0.3621 & 0.5278 & 0.6848 & 0.8003 \\
\hline
\end{tabular}
\caption{\(\alpha\)-percentiles of \(\epsilon \cdot r\) from \texttt{lcbench} tabular data.}
\label{tab:lcbench-epsilon}
\end{table*}

\begin{table*}[!hbtp]
\centering
\setlength{\tabcolsep}{6pt}
\begin{tabular}{|l|r|r|r|r||@{\hspace{10pt}}l|r|r|r|r|}
\hline
\textbf{Task ID} & \textbf{$\alpha = 90$} & \textbf{$\alpha = 95$} & \textbf{$\alpha = 98$} & \textbf{$\alpha = 99$} &
\textbf{Task ID} & \textbf{$\alpha = 90$} & \textbf{$\alpha = 95$} & \textbf{$\alpha = 98$} & \textbf{$\alpha = 99$} \\
\hline
3 & 0.0182 & 0.0307 & 0.0468 & 0.0508 & 11 & 0.0280 & 0.0402 & 0.0587 & 0.0801 \\
12 & 0.0713 & 0.1028 & 0.1853 & 0.2241 & 14 & 0.0846 & 0.1043 & 0.1556 & 0.2162 \\
15 & 0.0578 & 0.0901 & 0.1450 & 0.1748 & 16 & 0.0879 & 0.1110 & 0.1581 & 0.2064 \\
18 & 0.1088 & 0.1478 & 0.2079 & 0.2549 & 22 & 0.0914 & 0.1270 & 0.1654 & 0.2299 \\
23 & 0.0451 & 0.0605 & 0.0824 & 0.1017 & 24 & 0.0128 & 0.0148 & 0.0233 & 0.0292 \\
28 & 0.0139 & 0.0217 & 0.0357 & 0.0449 & 29 & 0.0302 & 0.0407 & 0.0622 & 0.0835 \\
31 & 0.0273 & 0.0444 & 0.0612 & 0.0747 & 32 & 0.0312 & 0.0441 & 0.0768 & 0.0889 \\
37 & 0.0292 & 0.0407 & 0.0551 & 0.0674 & 38 & 0.0005 & 0.0010 & 0.0014 & 0.0017 \\
42 & 0.1573 & 0.2180 & 0.3129 & 0.3641 & 44 & 0.0146 & 0.0204 & 0.0311 & 0.0386 \\
46 & 0.0261 & 0.0363 & 0.0631 & 0.0697 & 50 & 0.1065 & 0.1416 & 0.1984 & 0.2652 \\
54 & 0.0831 & 0.1152 & 0.1637 & 0.1975 & 60 & 0.0102 & 0.0148 & 0.0258 & 0.0336 \\
181 & 0.0685 & 0.0932 & 0.1331 & 0.1665 & 182 & 0.0120 & 0.0187 & 0.0320 & 0.0384 \\
188 & 0.0576 & 0.0787 & 0.1073 & 0.1341 & 300 & 0.0311 & 0.0478 & 0.0795 & 0.1006 \\
307 & 0.1993 & 0.2527 & 0.3847 & 0.5045 & 312 & 0.0190 & 0.0290 & 0.0362 & 0.0441 \\
334 & 0.1361 & 0.1933 & 0.2562 & 0.3195 & 375 & 0.1074 & 0.1622 & 0.2023 & 0.2268 \\
377 & 0.1568 & 0.2124 & 0.3321 & 0.3641 & 458 & 0.0577 & 0.1222 & 0.1449 & 0.1569 \\
469 & 0.0521 & 0.0733 & 0.0967 & 0.1173 & 470 & 0.0273 & 0.0400 & 0.0551 & 0.0723 \\
1040 & 0.0067 & 0.0091 & 0.0155 & 0.0180 & 1049 & 0.0098 & 0.0150 & 0.0176 & 0.0195 \\
1050 & 0.0079 & 0.0117 & 0.0143 & 0.0158 & 1053 & 0.0075 & 0.0095 & 0.0156 & 0.0169 \\
1056 & 0.0005 & 0.0007 & 0.0012 & 0.0013 & 1063 & 0.0193 & 0.0265 & 0.0395 & 0.0513 \\
1067 & 0.0082 & 0.0115 & 0.0131 & 0.0154 & 1068 & 0.0064 & 0.0147 & 0.0299 & 0.0370 \\
1111 & 0.0000 & 0.0000 & 0.0000 & 0.0000 & 1220 & 0.0011 & 0.0014 & 0.0017 & 0.0021 \\
1457 & 0.1699 & 0.3120 & 0.4592 & 0.4650 & 1462 & 0.0505 & 0.0732 & 0.1096 & 0.1269 \\
1464 & 0.0233 & 0.0354 & 0.0491 & 0.0590 & 1468 & 0.1872 & 0.3751 & 0.4631 & 0.4796 \\
1475 & 0.0191 & 0.0245 & 0.0416 & 0.0433 & 1476 & 0.0491 & 0.0844 & 0.1131 & 0.1300 \\
1478 & 0.0090 & 0.0133 & 0.0202 & 0.0213 & 1479 & 0.0184 & 0.0251 & 0.0403 & 0.0487 \\
1480 & 0.0196 & 0.0311 & 0.0446 & 0.0535 & 1485 & 0.0216 & 0.0251 & 0.0351 & 0.0387 \\
1486 & 0.0028 & 0.0045 & 0.0067 & 0.0080 & 1487 & 0.0010 & 0.0011 & 0.0018 & 0.0020 \\
1489 & 0.0101 & 0.0146 & 0.0213 & 0.0235 & 1494 & 0.0239 & 0.0391 & 0.0505 & 0.0556 \\
1497 & 0.0256 & 0.0408 & 0.0559 & 0.0694 & 1501 & 0.0819 & 0.1185 & 0.2173 & 0.2600 \\
1510 & 0.0668 & 0.0951 & 0.1370 & 0.1729 & 1515 & 0.1507 & 0.2229 & 0.3526 & 0.3765 \\
4134 & 0.0253 & 0.0338 & 0.0476 & 0.0593 & 4154 & 0.0000 & 0.0000 & 0.0000 & 0.0000 \\
4534 & 0.0045 & 0.0062 & 0.0098 & 0.0106 & 4538 & 0.0222 & 0.0313 & 0.0472 & 0.0629 \\
4541 & 0.0097 & 0.0110 & 0.0117 & 0.0120 & 6332 & 0.0385 & 0.0529 & 0.0728 & 0.0841 \\
23381 & 0.0365 & 0.0514 & 0.0700 & 0.0829 & 40496 & 0.2194 & 0.2962 & 0.4198 & 0.5075 \\
40498 & 0.0247 & 0.0421 & 0.0507 & 0.0600 & 40499 & 0.0261 & 0.0406 & 0.0657 & 0.0842 \\
40536 & 0.0013 & 0.0022 & 0.0037 & 0.0045 & 40670 & 0.0141 & 0.0218 & 0.0369 & 0.0435 \\
40701 & 0.0025 & 0.0031 & 0.0050 & 0.0066 & 40900 & 0.0024 & 0.0037 & 0.0051 & 0.0055 \\
40966 & 0.1255 & 0.2467 & 0.3100 & 0.3314 & 40975 & 0.0183 & 0.0267 & 0.0382 & 0.0503 \\
40978 & 0.0133 & 0.0210 & 0.0359 & 0.0425 & 40979 & 0.0696 & 0.1073 & 0.2020 & 0.2176 \\
40981 & 0.0233 & 0.0306 & 0.0437 & 0.0563 & 40982 & 0.0370 & 0.0496 & 0.0651 & 0.0748 \\
40983 & 0.0076 & 0.0112 & 0.0158 & 0.0181 & 40984 & 0.0722 & 0.1257 & 0.1776 & 0.1779 \\
40994 & 0.0041 & 0.0085 & 0.0139 & 0.0176 & 41138 & 0.0005 & 0.0007 & 0.0009 & 0.0009 \\
41142 & 0.0067 & 0.0092 & 0.0105 & 0.0115 & 41143 & 0.0081 & 0.0134 & 0.0175 & 0.0193 \\
41146 & 0.0122 & 0.0189 & 0.0273 & 0.0286 & 41156 & 0.0119 & 0.0162 & 0.0202 & 0.0225 \\
41157 & 0.0354 & 0.0571 & 0.1054 & 0.1325 & 41159 & 0.0155 & 0.0204 & 0.0237 & 0.0248 \\
41161 & 0.0313 & 0.0332 & 0.0339 & 0.0342 & 41162 & 0.0002 & 0.0002 & 0.0002 & 0.0002 \\
41163 & 0.0264 & 0.0313 & 0.0455 & 0.0543 & 41164 & 0.0561 & 0.0678 & 0.0973 & 0.1097 \\
41165 & 0.0056 & 0.0060 & 0.0063 & 0.0064 & 41212 & 0.0211 & 0.0334 & 0.0417 & 0.0429 \\
41278 & 0.0033 & 0.0053 & 0.0064 & 0.0075 &  &  &  &  &  \\
\hline
\end{tabular}
\caption{$\alpha$-percentiles of $\epsilon \cdot r$ computed from \texttt{rbv2\_rpart} tabular data.}
\label{tab:rpart-epsilon}
\end{table*}

\begin{table*}[!hbtp]
\centering
\setlength{\tabcolsep}{6pt}
\begin{tabular}{|l|r|r|r|r||@{\hspace{10pt}}l|r|r|r|r|}
\hline
\textbf{Task ID} & \textbf{$\alpha = 90$} & \textbf{$\alpha = 95$} & \textbf{$\alpha = 98$} & \textbf{$\alpha = 99$} &
\textbf{Task ID} & \textbf{$\alpha = 90$} & \textbf{$\alpha = 95$} & \textbf{$\alpha = 98$} & \textbf{$\alpha = 99$} \\
\hline
 3 & 0.0552 & 0.0622 & 0.0649 & 0.0671 & 11 & 0.0849 & 0.1074 & 0.1316 & 0.1564  \\
 12 & 0.2432 & 0.2581 & 0.2750 & 0.2851 & 14 & 0.2579 & 0.2775 & 0.2947 & 0.3039  \\
 15 & 0.1813 & 0.2529 & 0.3697 & 0.4199 & 16 & 0.2460 & 0.2617 & 0.3015 & 0.3143  \\
 18 & 0.2999 & 0.3413 & 0.3772 & 0.3992 & 22 & 0.2621 & 0.2852 & 0.3149 & 0.3356  \\
 23 & 0.1215 & 0.1467 & 0.1793 & 0.2096 & 24 & 0.0442 & 0.0466 & 0.0487 & 0.0495  \\
 28 & 0.0464 & 0.0526 & 0.0554 & 0.0557 & 29 & 0.0975 & 0.1188 & 0.1383 & 0.1530  \\
 31 & 0.0822 & 0.1120 & 0.1394 & 0.1646 & 32 & 0.0947 & 0.1013 & 0.1100 & 0.1106  \\
 37 & 0.0771 & 0.0908 & 0.1102 & 0.1235 & 38 & 0.0019 & 0.0024 & 0.0042 & 0.0049  \\
 42 & 0.4187 & 0.4641 & 0.5148 & 0.5547 & 44 & 0.0440 & 0.0485 & 0.0515 & 0.0517  \\
 46 & 0.0831 & 0.0902 & 0.0987 & 0.1014 & 50 & 0.3057 & 0.3514 & 0.4178 & 0.4671  \\
 54 & 0.2204 & 0.2507 & 0.2844 & 0.2995 & 60 & 0.0379 & 0.0504 & 0.0603 & 0.0663  \\
 181 & 0.1968 & 0.2147 & 0.2362 & 0.2571 & 182 & 0.0413 & 0.0456 & 0.0518 & 0.0526  \\
 188 & 0.1610 & 0.1874 & 0.2181 & 0.2380 & 300 & 0.1023 & 0.1173 & 0.1225 & 0.1289  \\
 307 & 0.5408 & 0.6249 & 0.7372 & 0.8437 & 312 & 0.0500 & 0.0572 & 0.0607 & 0.0642  \\
 334 & 0.4183 & 0.5105 & 0.6080 & 0.6721 & 375 & 0.3231 & 0.3638 & 0.4006 & 0.5224  \\
 377 & 0.4216 & 0.4686 & 0.5103 & 0.5503 & 458 & 0.1694 & 0.1856 & 0.2071 & 0.2308  \\
 469 & 0.1475 & 0.1766 & 0.2224 & 0.2556 & 470 & 0.0817 & 0.1054 & 0.1352 & 0.1535  \\
 1040 & 0.0204 & 0.0221 & 0.0235 & 0.0246 & 1049 & 0.0266 & 0.0385 & 0.0568 & 0.0751  \\
 1050 & 0.0259 & 0.0303 & 0.0408 & 0.0453 & 1053 & 0.0226 & 0.0313 & 0.0370 & 0.0444  \\
 1056 & 0.0017 & 0.0018 & 0.0020 & 0.0026 & 1063 & 0.0533 & 0.0688 & 0.0910 & 0.1165  \\
 1067 & 0.0220 & 0.0275 & 0.0320 & 0.0373 & 1068 & 0.0192 & 0.0480 & 0.0741 & 0.0948  \\
 1111 & 0.0000 & 0.0000 & 0.0000 & 0.0000 & 1220 & 0.0039 & 0.0043 & 0.0046 & 0.0048  \\
 1457 & 0.5027 & 0.5535 & 0.5887 & 0.6737 & 1462 & 0.1445 & 0.1926 & 0.2504 & 0.2783  \\
 1464 & 0.0679 & 0.0870 & 0.1079 & 0.1207 & 1468 & 0.5530 & 0.5690 & 0.6902 & 0.7864  \\
 1475 & 0.0589 & 0.0647 & 0.0696 & 0.0708 & 1476 & 0.1409 & 0.1531 & 0.1690 & 0.1740  \\
 1478 & 0.0233 & 0.0259 & 0.0269 & 0.0276 & 1479 & 0.0579 & 0.0742 & 0.0903 & 0.1012  \\
 1480 & 0.0586 & 0.0801 & 0.1026 & 0.1152 & 1485 & 0.0475 & 0.0570 & 0.0609 & 0.0679  \\
 1486 & 0.0094 & 0.0106 & 0.0112 & 0.0114 & 1487 & 0.0027 & 0.0038 & 0.0052 & 0.0053  \\
 1489 & 0.0259 & 0.0272 & 0.0289 & 0.0294 & 1494 & 0.0638 & 0.0668 & 0.0729 & 0.0770  \\
 1497 & 0.0879 & 0.0999 & 0.1088 & 0.1164 & 1501 & 0.2686 & 0.3027 & 0.3180 & 0.3431  \\
 1510 & 0.1973 & 0.2473 & 0.2974 & 0.3263 & 1515 & 0.5264 & 0.6307 & 0.7049 & 0.7462  \\
 4134 & 0.0672 & 0.0713 & 0.0761 & 0.0831 & 4154 & 0.0000 & 0.0000 & 0.0000 & 0.0000  \\
 4534 & 0.0120 & 0.0129 & 0.0133 & 0.0140 & 4538 & 0.0732 & 0.0807 & 0.0911 & 0.0944  \\
 4541 & 0.0224 & 0.0233 & 0.0237 & 0.0239 & 6332 & 0.1026 & 0.1156 & 0.1340 & 0.1550  \\
 23381 & 0.1058 & 0.1306 & 0.1620 & 0.1786 & 40496 & 0.5907 & 0.6729 & 0.7729 & 0.8465  \\
 40498 & 0.0710 & 0.0958 & 0.1097 & 0.1124 & 40499 & 0.0870 & 0.0955 & 0.1068 & 0.1084  \\
 40536 & 0.0052 & 0.0067 & 0.0105 & 0.0110 & 40670 & 0.0487 & 0.0555 & 0.0590 & 0.0604  \\
 40701 & 0.0084 & 0.0094 & 0.0101 & 0.0106 & 40900 & 0.0065 & 0.0071 & 0.0085 & 0.0095  \\
 40966 & 0.3686 & 0.3947 & 0.4251 & 0.4785 & 40975 & 0.0571 & 0.0683 & 0.0811 & 0.1058  \\
 40978 & 0.0508 & 0.0624 & 0.0958 & 0.1184 & 40979 & 0.2295 & 0.2492 & 0.2728 & 0.2920  \\
 40981 & 0.0676 & 0.0784 & 0.0885 & 0.1001 & 40982 & 0.0913 & 0.1081 & 0.1694 & 0.1911  \\
 40983 & 0.0201 & 0.0217 & 0.0233 & 0.0240 & 40984 & 0.2058 & 0.2160 & 0.2219 & 0.2334  \\
 40994 & 0.0123 & 0.0227 & 0.0327 & 0.0424 & 41138 & 0.0019 & 0.0020 & 0.0020 & 0.0020  \\
 41142 & 0.0212 & 0.0222 & 0.0299 & 0.0312 & 41143 & 0.0243 & 0.0259 & 0.0290 & 0.0333  \\
 41146 & 0.0352 & 0.0394 & 0.0416 & 0.0416 & 41156 & 0.0343 & 0.0439 & 0.0532 & 0.0602  \\
 41157 & 0.1196 & 0.1491 & 0.2029 & 0.2122 & 41159 & 0.0813 & 0.0891 & 0.0943 & 0.0959  \\
 41161 & 0.1058 & 0.1130 & 0.1182 & 0.1199 & 41162 & 0.0008 & 0.0009 & 0.0009 & 0.0009  \\
 41163 & 0.0982 & 0.1054 & 0.1114 & 0.1118 & 41164 & 0.1956 & 0.2208 & 0.2250 & 0.2295  \\
 41165 & 0.0297 & 0.0317 & 0.0329 & 0.0333 & 41212 & 0.0553 & 0.0587 & 0.0630 & 0.0674  \\
 41278 & 0.0132 & 0.0144 & 0.0160 & 0.0168 & & & & &  \\
\hline
\end{tabular}
\caption{$\alpha$-percentiles of $\epsilon \cdot r$ computed from \texttt{rbv2\_aknn} tabular data.}
\label{tab:aknn-epsilon}
\end{table*}

\begin{figure*}[b]
\centering
\includegraphics[width=0.9\textwidth]{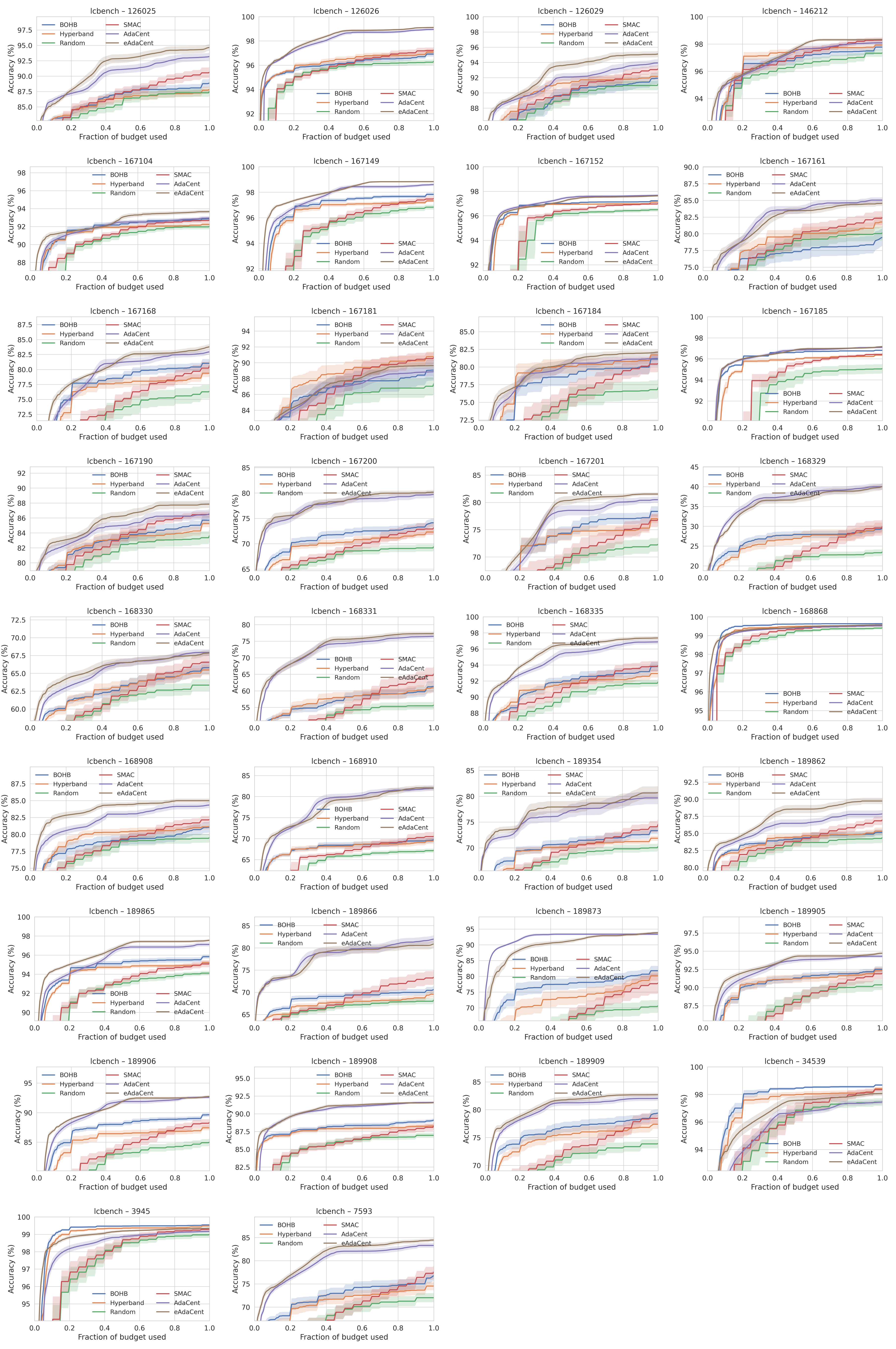}
\caption{\texttt{lcbench} accuracy curves on all instances}
\label{fig:grid:lcbench}
\end{figure*}

\begin{figure*}[b]
\centering
\includegraphics[width=0.9\textwidth]{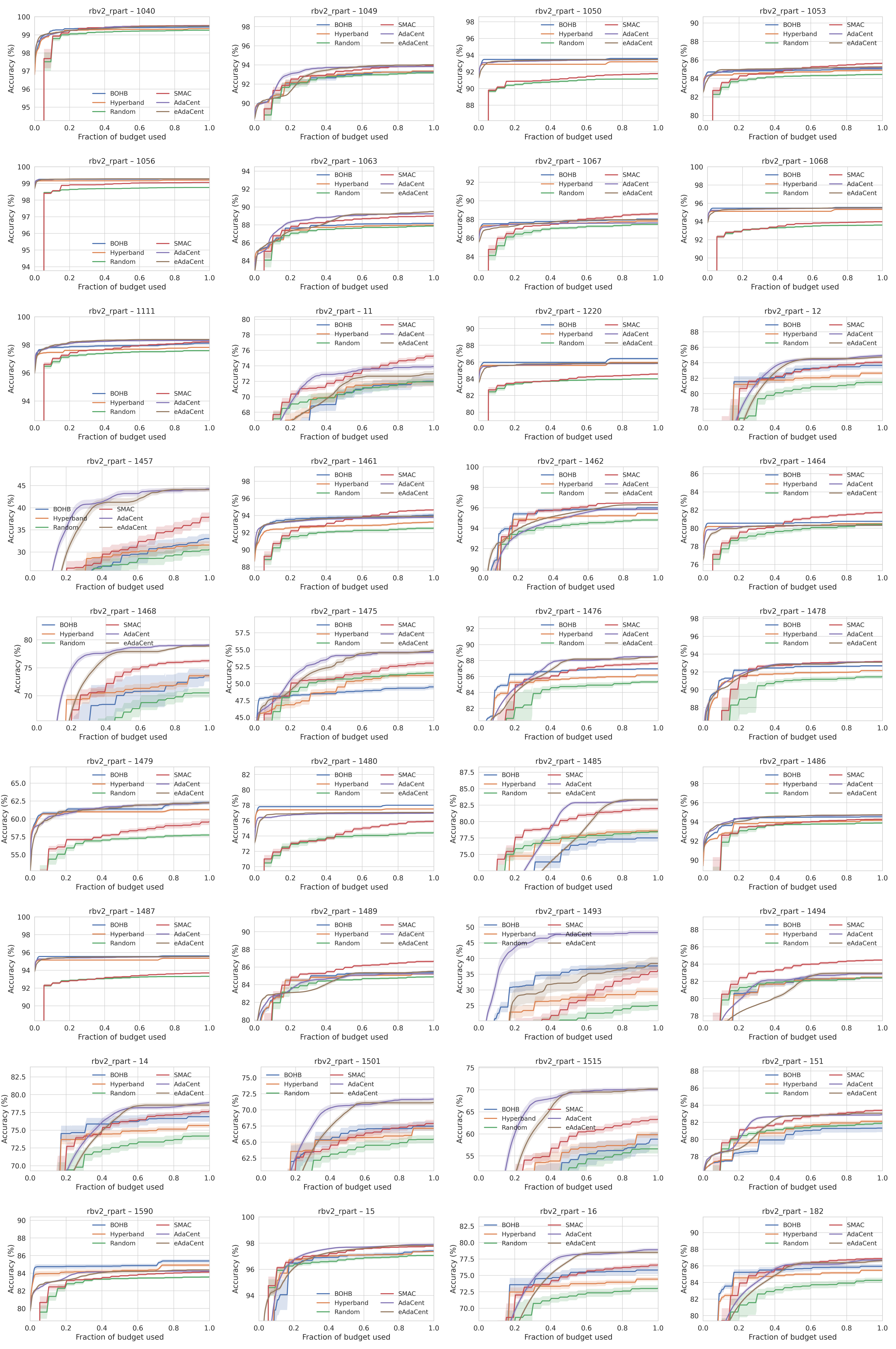}
\caption{\texttt{rbv2\_rpart} accuracy curves on all instances, part 1}
\label{fig:grid:rpart1}
\end{figure*}

\begin{figure*}[b]
\centering
\includegraphics[width=0.9\textwidth]{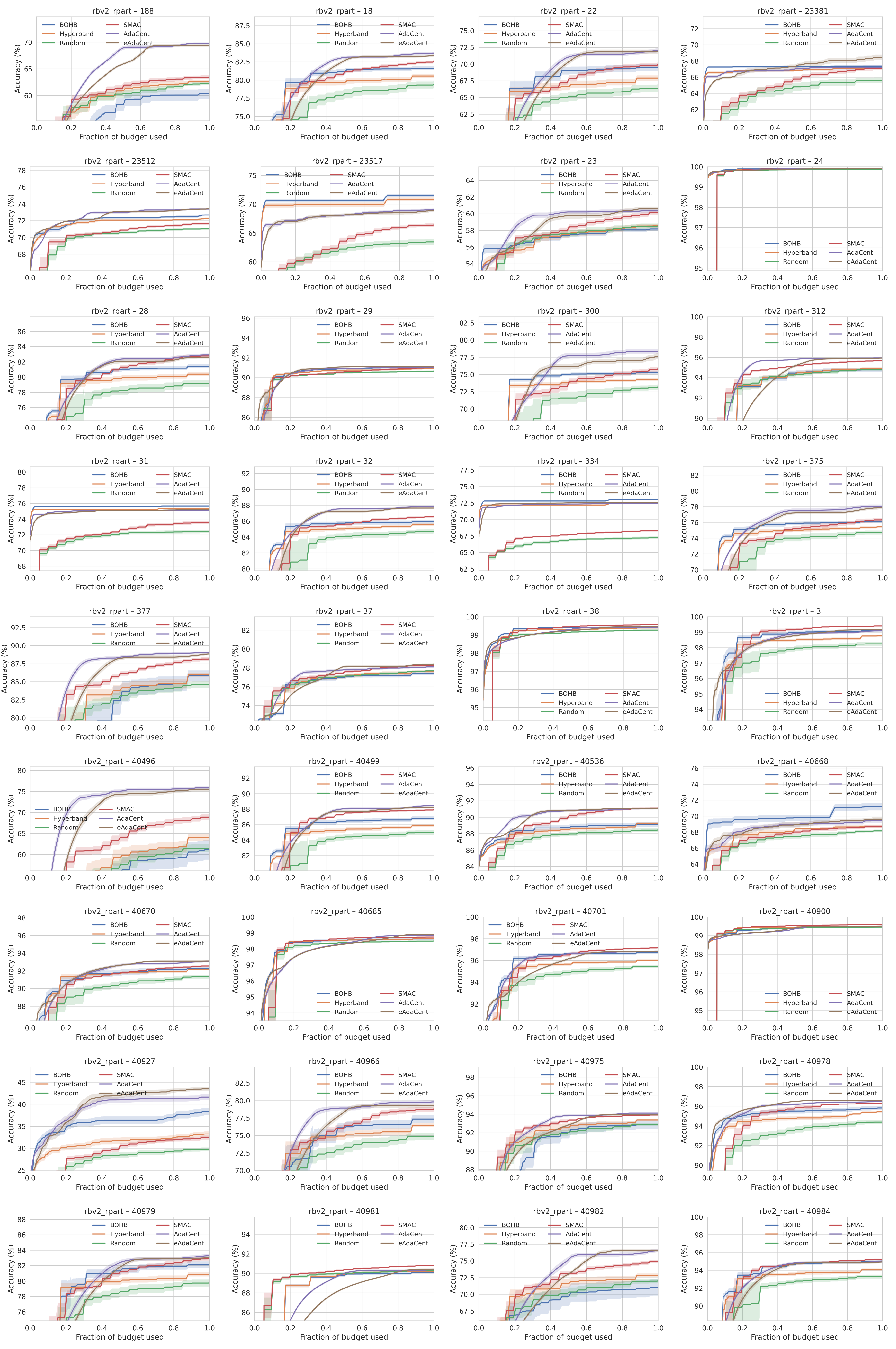}
\caption{\texttt{rbv2\_rpart} accuracy curves on all instances, part 2}
\end{figure*}

\begin{figure*}[b]
\centering
\includegraphics[width=0.9\textwidth]{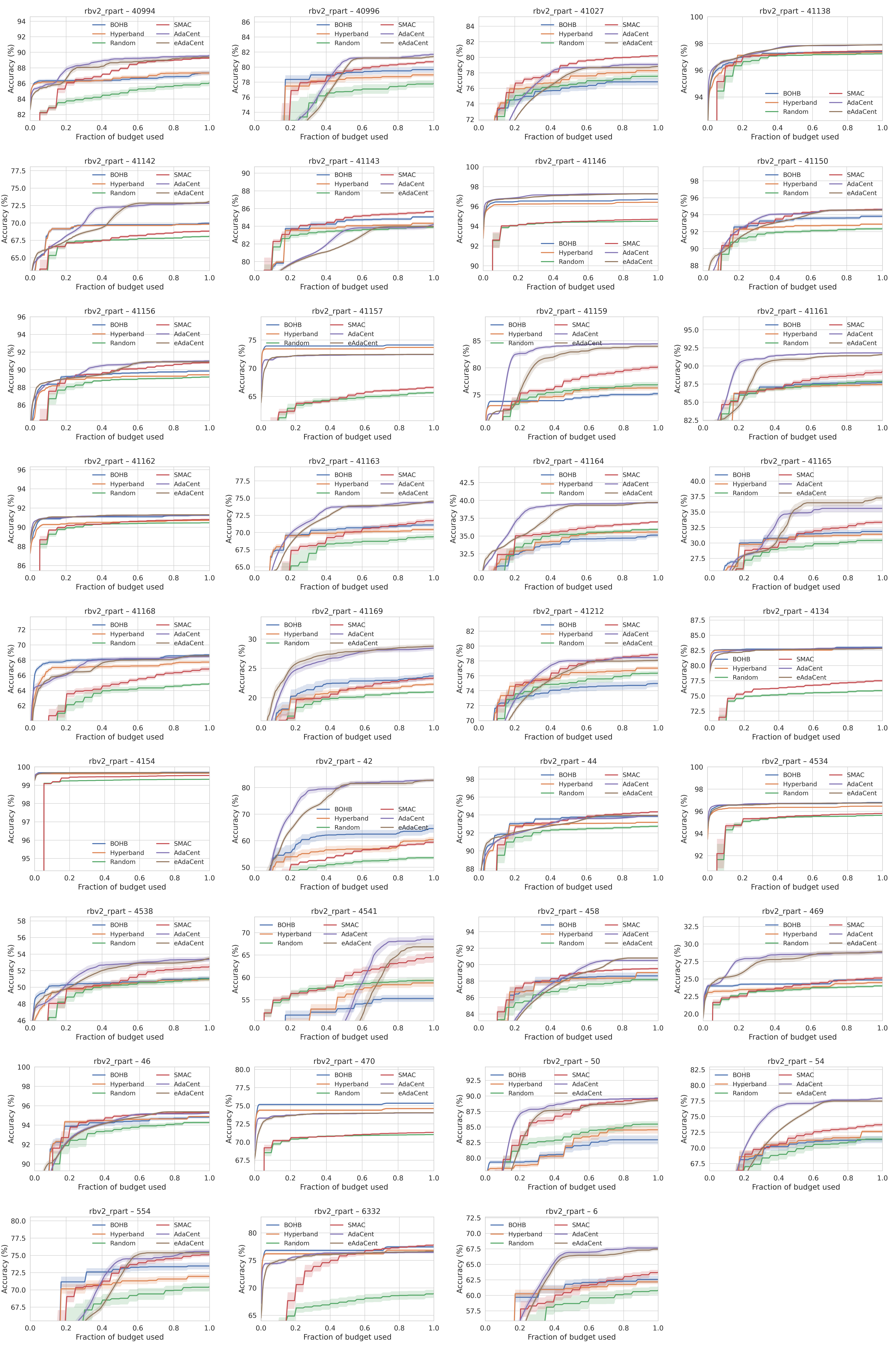}
\caption{\texttt{rbv2\_rpart} accuracy curves on all instances, part 3}
\label{fig:grid:rpart3}
\end{figure*}

\begin{figure*}[b]
\centering
\includegraphics[width=0.9\textwidth]{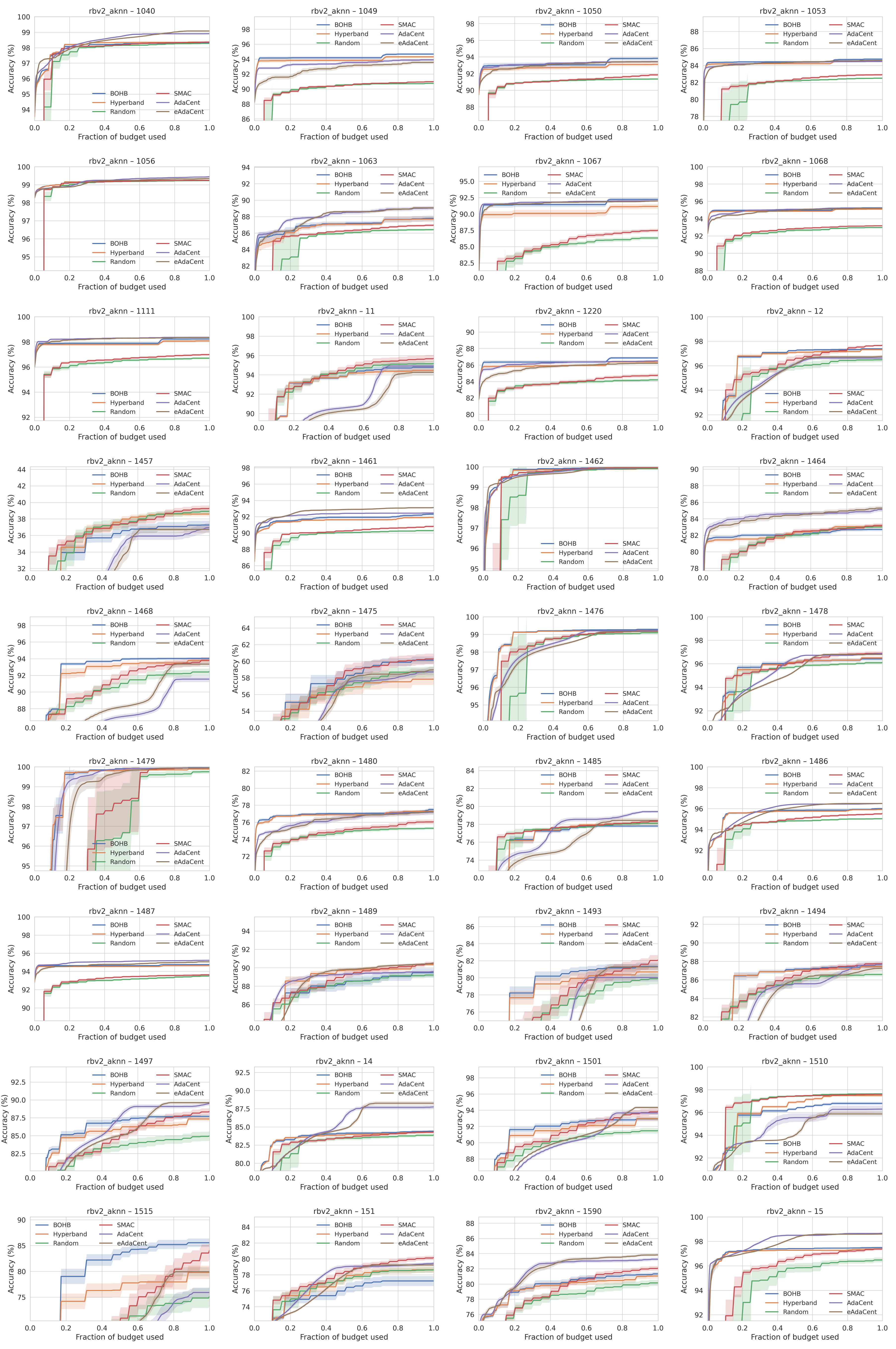}
\caption{\texttt{rbv2\_aknn} accuracy curves on all instances, part 1}
\label{fig:grid:aknn1}
\end{figure*}

\begin{figure*}[b]
\centering
\includegraphics[width=0.9\textwidth]{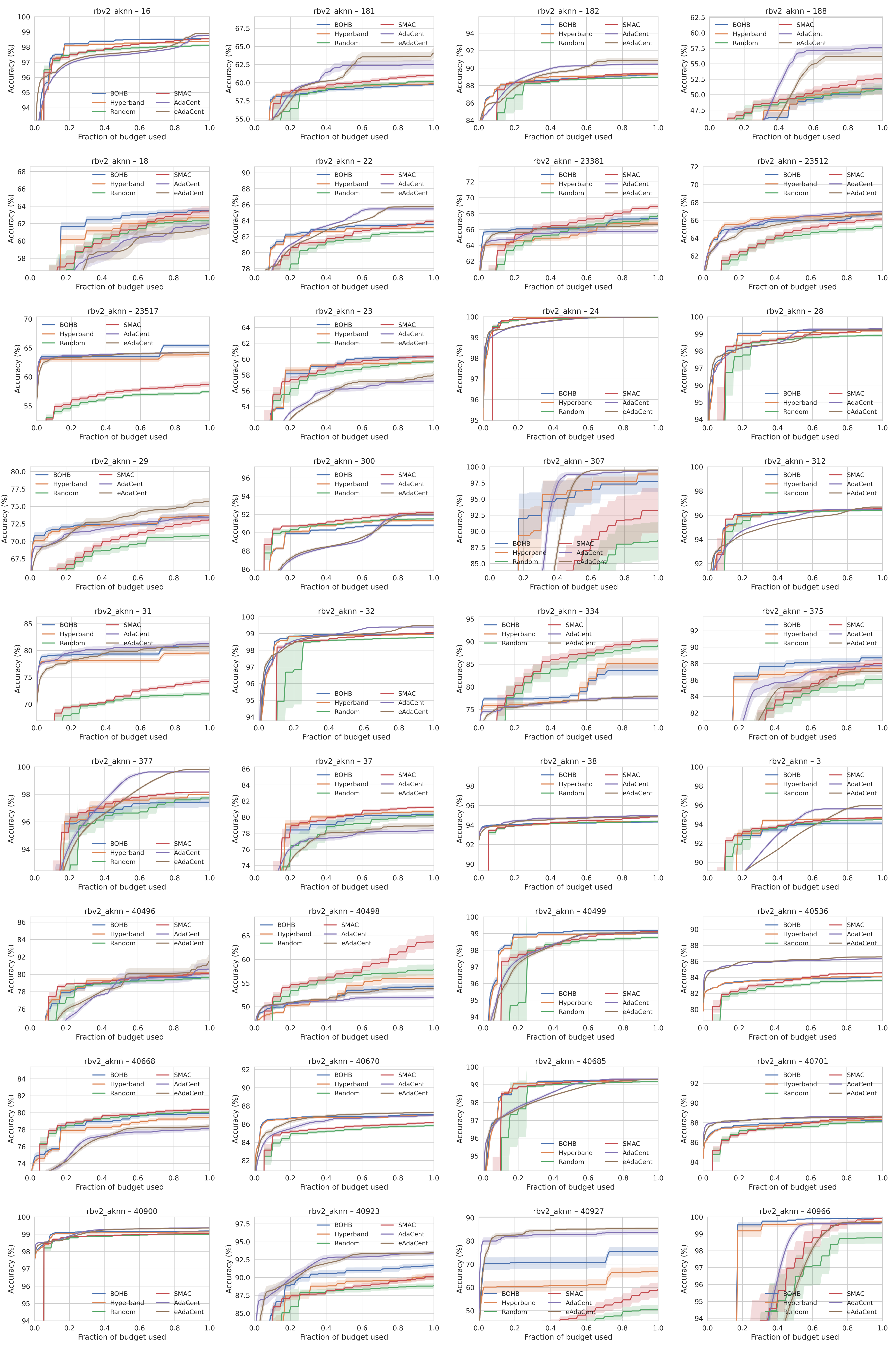}
\caption{\texttt{rbv2\_aknn} accuracy curves on all instances, part 2}
\end{figure*}

\begin{figure*}[b]
\centering
\includegraphics[width=0.9\textwidth]{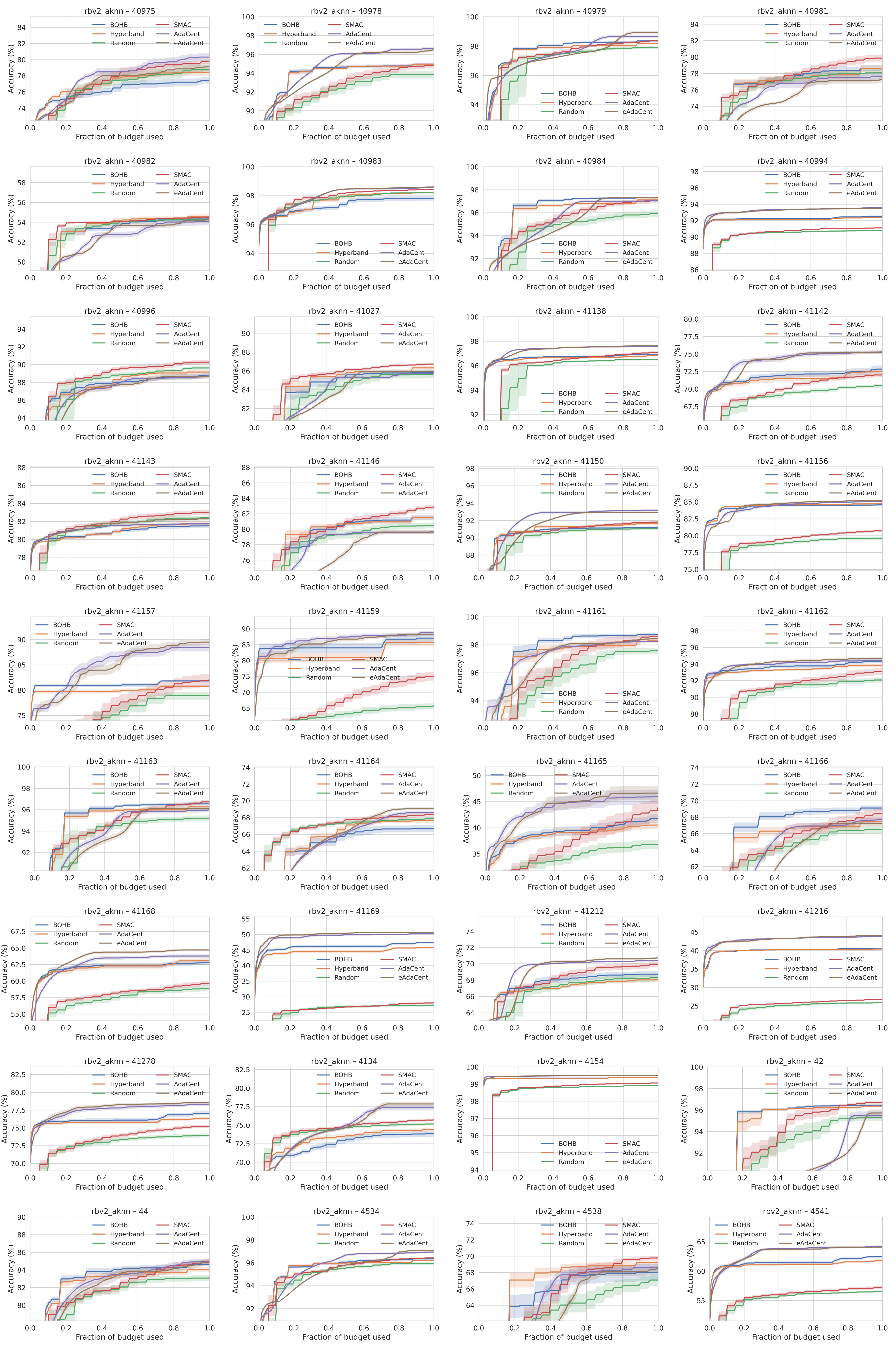}
\caption{\texttt{rbv2\_aknn} accuracy curves on all instances, part 3}
\end{figure*}

\begin{figure*}[b]
\centering
\includegraphics[width=0.9\textwidth]{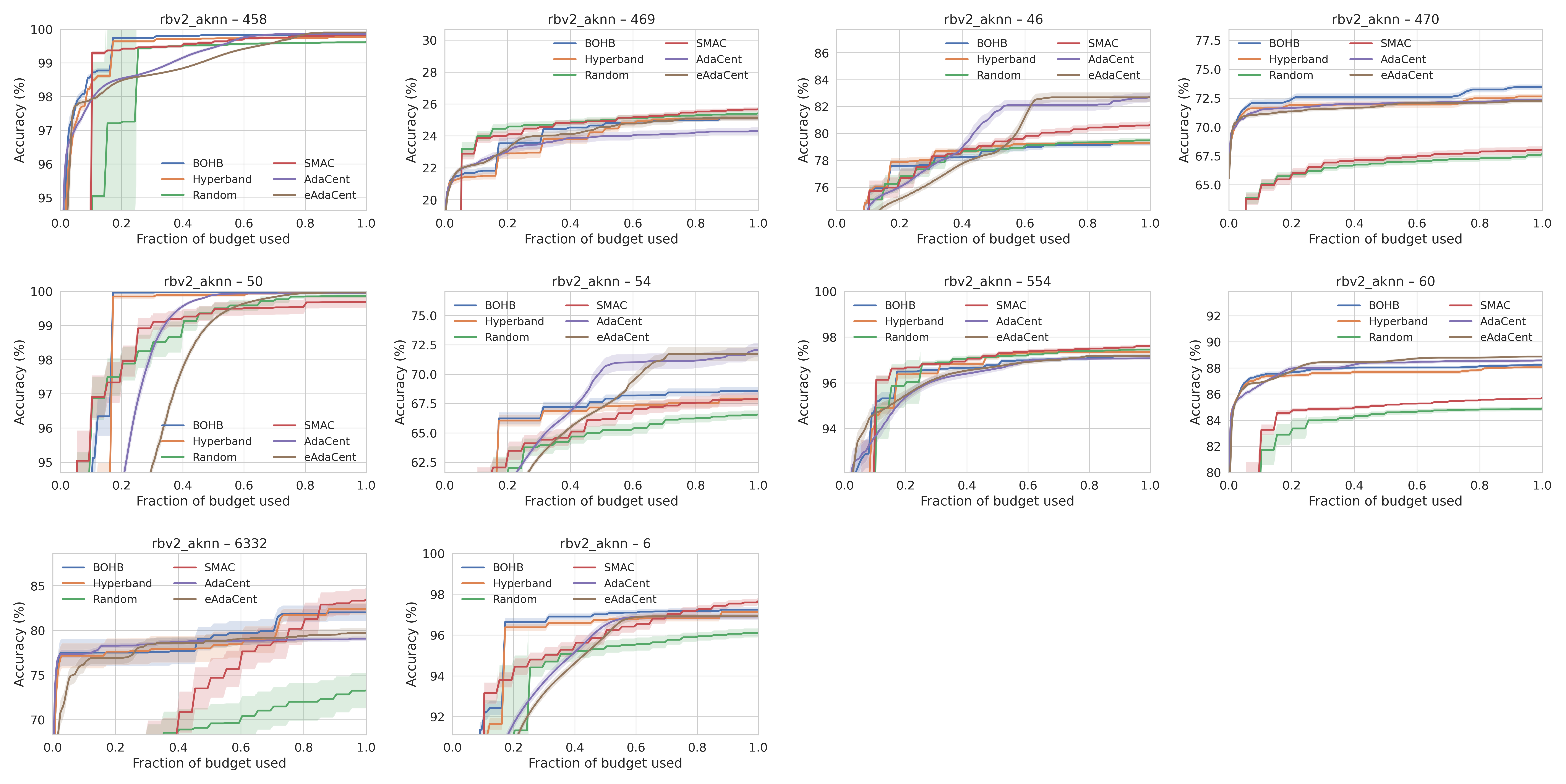}
\caption{\texttt{rbv2\_aknn} accuracy curves on all instances, part 4}
\label{fig:grid:aknn4}
\end{figure*}

\end{document}